\theoremstyle{plain}
\newtheorem{assumption}{Assumption}
\newcommand{\A}{\boldsymbol{A}} 
\newcommand{\bc}{\boldsymbol{c}} 
\newcommand{\bh}{\boldsymbol{h}}
\newcommand{\bx}{\boldsymbol{x}}
\newcommand{\X}{\boldsymbol{X}}
\newcommand{\bomega}{\boldsymbol{\omega}}
\newcommand{\by}{\boldsymbol{y}}
\newcommand{\R}{\mathbb{R}}
\newcommand{\argmin}[1]{\underset{#1}{\mathrm{argmin}}}
\newcommand{\card}{{\rm card}}
\newcommand{\supp}{{\rm supp}}
\newcommand{\vertiii}[1]{{\left\vert\kern-0.25ex\left\vert\kern-0.25ex\left\vert #1 
    \right\vert\kern-0.25ex\right\vert\kern-0.25ex\right\vert}}
\theoremstyle{plain}
\newtheorem{theorem}{Theorem}[section]
\newtheorem*{theorem*}{Theorem}
\newtheorem{lemma}[theorem]{Lemma}
\theoremstyle{definition}
\theoremstyle{remark}
\newtheorem{remark}[theorem]{Remark}
\numberwithin{equation}{section}
\numberwithin{algorithm}{section}
\numberwithin{figure}{section}
\numberwithin{table}{section}
\title{Conditioning of Random Feature Matrices: \\Double Descent and Generalization Error}
\author{Zhijun Chen}
\author{Hayden Schaeffer}
\affil{Department of Mathematical Sciences, Carnegie Mellon University, Pittsburgh, PA 15213. (\text{zhijunc@andrew.cmu.edu}, { }\text{schaeffer@cmu.edu})}
\date{2021}
\begin{document}
\maketitle

\begin{abstract}
We provide (high probability) bounds on the condition number of random feature matrices. 
In particular, we show that if the complexity ratio $\frac{N}{m}$ where $N$ is the number of neurons and $m$ is the number of data samples scales like $\log^{-1}(N)$ or $\log(m)$, then the random feature matrix is well-conditioned. This result holds without the need of regularization and relies on establishing various concentration bounds between dependent components of the random feature matrix. Additionally, we derive bounds on the restricted isometry constant of the random feature matrix.  We prove that the risk associated with regression problems using a random feature matrix exhibits the double descent phenomenon and that this is an effect of the double descent behavior of the condition number. The risk bounds include the underparameterized setting using the least squares problem and the overparameterized setting where using either the minimum norm interpolation problem or a sparse regression problem. For the least squares or sparse regression cases, we show that the risk decreases as $m$ and $N$ increase, even in the presence of bounded or random noise. The risk bound matches the optimal scaling in the literature and the constants in our results are explicit and independent of the dimension of the data.

 \end{abstract}

\section{Introduction}
\label{sec: intro}

Random feature methods are essentially two-layer shallow networks whose single hidden layer is randomized and not trained \cite{rahimi2007random, rahimi2008uniform}.
They can be used for various learning tasks, including regression, interpolation, kernel-based classification, etc. Studying their behavior not only results in a deeper understanding of kernel machines but could also yield insights into more complex models, such as very wide or deep neural networks. 

 Consider $\bx\in \R^{d}$ to be the input vector and $\boldsymbol{W}=[\boldsymbol{\omega}_{i,j}]\in \R^{d \times N}$ to be a random weight matrix, where each column of the matrix $\boldsymbol{W}$ is sampled from some prescribed probability density $\rho(\boldsymbol{\omega})$. Then the random feature map is defined by $\phi(\boldsymbol{W}^T\X) \in \mathbb{C}^N$ where $\phi:\R\rightarrow \mathbb{C}$ is some Lipschitz activation function applied element-wise to the input. The resulting two-layer network with $N$ neurons is defined by $\by=\boldsymbol{C}^T\phi(\boldsymbol{W}^T\X) \in \mathbb{C}^ {d_{out}}$ where the final layer $\boldsymbol{C}\in\mathbb{C}^{N \times d_{out}}$ needs to be trained for a given problem. The analysis of these models with respect to the dimensionality, stability, testing error, etc. often depends on the extreme singular values (or the condition number) of the matrix $\A:=\phi(\X^T \boldsymbol{W}) \in \mathbb{C}^{m\times N}$, where $\X=[\bx_1, \ldots \bx_m]\in\mathbb{R}^{d\times m}$ is a collection of $m$ random samples of the input variable. In particular, for regression or interpolation, knowledge of the condition number of $\A$ can yield control over the error as well as reveal the landscape of the testing error as a function of the model complexity ratio $\frac{N}{m}$.

The testing error (or risk) of random feature models have been of recent interest \cite{rahimi2008weighted, rudi2017generalization,li2019towards,weinan2020towards, hashemi2021generalization,mei2021generalization}; in particular, with a goal of quantifying the number of features needed to obtain a given learning rate. In \cite{rahimi2008weighted}, it was shown that the random feature model yields a test error of $\mathcal{O}(N^{-\frac{1}{2}}+m^{-\frac{1}{2}})$ when trained on Lipschitz loss functions. Thus if $m \asymp N$ then the generalization error is $\mathcal{O}(N^{-\frac{1}{2}})$ for large $N$.  In \cite{rudi2015less},  a comparison between the test error using kernel ridge regression with both the full kernel matrix and a column subsampled kernel matrix  (related to Nystr\"{o}m's method) suggests that the number of columns (and rows) should be roughly the square root of the number of original samples in order to obtain their learning rate. The squared loss case was consider in \cite{rudi2017generalization,li2019towards}, providing risk bounds when the random feature model is trained via ridge regression. In \cite{rudi2017generalization}, it was shown that for $f$ in an RKHS, using $N=\mathcal{O}(\sqrt{m} \, \log{m})$ is sufficient to achieve a test error of $\mathcal{O}(m^{-\frac{1}{2}})$ with a random feature method.  The results in \cite{rudi2015less,rudi2017generalization}  use similar technical assumptions on the kernel and the second moment operator, e.g. a certain spectral decay rate on second moment operator, which may be difficult to verify. Note that for the ridge regression results, the penalty parameter $\lambda=\lambda(m)$ often must remain positive and thus such results do not include interpolation. In \cite{weinan2020towards}, the risk was analyzed for a regularized model with the target function in an RKHS, noting that to achieve $N^{-1} + m^{-\frac{1}{2}}$ one must place explicit assumptions on the decay rate of the spectrum of the kernel operator, see also \cite{bach2017equivalence}. In \cite{mei2021generalization}, an investigation of the kernel ridge regression and the random feature ridge regression problems in terms of the $N$, $m$, and $d$ was given. The main result suggests that an `optimal' choice for the complexity ratio $\frac{N}{m}$ depends on an algebraic scaling in $N$, $m$, and $d$.

In the overparameterized setting, minimum norm regression problems are often used, either by minimizing the $\ell^2$ or $\ell^1$ norm with a data fitting constraint. The $\ell^1$ problem can be used to obtain a low complexity random feature model in the highly overparameterized setting and was proposed and studied in \cite{hashemi2021generalization} (see also \cite{yen2014sparse} for a related algorithmic approach). By solving the $\ell^1$ basis pursuit denoising problem with a random feature matrix,  test error bounds where obtain as a function of the sparsity, $N$, $m$, and $d$. It is worth noting that in the dense case, i.e. when the sparsity equals $N$ and $m\asymp N^2$ up to log terms, the test error scales like $N^{-1} + m^{-\frac{1}{2}}$ up to log terms, thus the method achieves the upper bounds provided by other methods but utilizes a different computational approach.

The min-norm interpolation problem (or the ridgeless limit as $\lambda\rightarrow 0^+$)  has received recent attention for random feature methods \cite{belkin2019reconciling, mei2019generalization, belkin2019does, bartlett2020benign, hastie2019surprises, tsigler2020benign, liang2020just, mei2021generalization}. In \cite{bartlett2020benign,tsigler2020benign}, the test error for general ridge and ridgeless regression problems is shown to be controlled by the condition number, or the related notion of effective rank, of the random design matrix. It was shown in \cite{mei2021generalization} that the min-norm interpolators are optimal among all kernel methods in the overparameterized regime. 

In some settings, overparameterized random feature models can produce lower test error even though they have large complexity. However, choosing $N$ and $m$ only using the risk bounds could lead to an insufficient picture of the parameter landscape for random feature models or neural network. In modern learning, the graph of the test error as a function of the model's complexity (i.e. relative number of neurons) has two valleys. The first is when the number of parameters are relatively small, this is the underparameterized regime where the model has small complexity and small risk. As the complexity reaches the interpolation threshold ($N=m$), the risk tends to peak and then begins to decay again as the number of parameters continues to increase well into the overparameterized regime. This is referred to as the double descent phenomenon \cite{belkin2018understand, belkin2019reconciling, belkin2020two,advani2020high} and has had several recent results, specifically, in the characterization of the test error as a function of the complexity ratio \cite{kan2020avoiding,belkin2018understand, belkin2019reconciling, belkin2020two,advani2020high,mei2019generalization,mei2021generalization, ba2019generalization}. 
A direct analysis of the double descent curves for random feature regression \cite{mei2019generalization,mei2021generalization} showed that in the overparameterized regime, taking $N/m=\mathcal{O}(1)$ can lead to suboptimal results which can be corrected if $N/m\rightarrow \infty$. To achieve near optimal test errors, they suggest setting $N\asymp m^{1+\delta}$ for small $\delta>0$.

\subsection{Some Related Work}
Some of the earlier results in the literature on random matrix theory for some related problems focused on random kernel matrices in the form $K_{i,j} = \phi(\bx_i^T\bx_j)$, specifically,  characterizing the spectrum of square matrices depending on one random variable $\bx$ \cite{el2010spectrum, cheng2013spectrum, fan2019spectral}. In \cite{pennington2019nonlinear, pennington2017resurrecting, hastie2019surprises,benigni2019eigenvalue, pastur2020random}, the asymptotic behavior of asymmetric rectangular random matrices $\A=\phi(\X^T\boldsymbol{W})$ were studied, in particular, to quantify the limiting distribution on the spectrum of the Gram matrix $\A\A^*$. In \cite{louart2018random} the asymptotic behavior of the resolvent operator of the Gram matrix was studied, where $\boldsymbol{W}$ is random but $\X$ is deterministic. Using concentration arguments, \cite{liao2018spectrum,louart2018random} analyzed the spectrum of the Gram matrix of random feature maps in the high-dimensional setting where $N$ grows like $m$. Precise asymptotics for the random feature ridge regression model in the large $m$, $N$, and $d$ limit and a characterize of the test error as function of the dimensional parameters were given in \cite{liao2020random}. They also showed the existence of a phase transition with respect to $N/m$  at the interpolation threshold.  In \cite{liang2020multiple}, the authors studied the multiple-descent curve for the scaling $d=m^{\alpha}$ where $\alpha \in (0,1)$. They observed non-monotonic behavior in the test error as a function of the sample size $m$. The results in \cite{liang2020multiple} rely on the restricted lower isometry of the kernels, showing that (with high probability) the empirical kernel matrix has a certain number of nonzero eigenvalues with a lower bound.

\subsection{Our Contributions}
In this work, we provide (high probability) bounds on the conditioning of the random feature matrix $\A=\phi(\X^T\boldsymbol{W})$ for $N$ neurons, $m$ samples, and with dimension $d$. In particular, when the  $N/m$ scales like $\log(m)$ (overparametrized) or $\log^{-1}(N)$ (underparameterized), the singular values of $\A$ concentrate around 1 and thus with high probability the condition number of the random feature matrix is small. In addition, the guarantees break down in the interpolation region where we prove that the system is ill-conditioned. When $N=m$, as the number of features (or equivalently the number of samples) increases the conditioning at the interpolation regime worsens. This transition between the conditioning states coincides with the phase transition in the double descent curves (the risk) for random feature regression. As an additional application of our results, we  show that the condition number can be used to obtain state-of-the-art test error bounds for a particular class of target functions. We highlight some comparisons with other works below.

\begin{itemize}
\item Our theoretical results are related to other random matrix results, such as \cite{pennington2019nonlinear, pennington2017resurrecting, hastie2019surprises,benigni2019eigenvalue, pastur2020random, louart2018random}; however, we quantify the behavior of the random feature matrix for both finite and large $N$ and $m$ and provide both sample and feature complexity bounds. Our results give a simple scaling between $N$ and $m$ that yields random feature matrices which are nearly isometries. 
\item We connect the conditioning of the random matrix to the double descent curves and provide another characterization of the underparameterized and overparameterized regions, compared to \cite{mei2019generalization,mei2021generalization}. Specifically, our results show that the complexity ratio $N/m$ need only be logarithmically far from the interpolation threshold to have small risk, rather than algebraically far. Our results support similar conclusions to \cite{mei2019generalization,mei2021generalization}.
\item The proofs give explicit values for the constants used in our bounds and thus our results hold up to additive terms, for example, compared to the results of \cite{rudi2017generalization} which are based on overall rates.  Our bounds improve for high-dimensional data, since the constants are independent of the dimension $d$.
\item The analysis in \cite{hashemi2021generalization} established the structure of the random feature matrix using a coherence estimate (for sparse regression in the overparameterize regime). A coherence bound leads to a non-optimal quadratic scaling (up to log terms)  between $m$ and $N$ (or the sparsity $s$). Our results directly estimate the restricted isometry constant thus yielding a linear scaling (up to log terms) between $m$ and $N$ (or $s$). This difference allows for a more precise characterization of the underparameterized and overparameterized regimes, i.e. it shows that the complexity ratio only needs to scale by log factors rather than polynomials. Our results also match the transition point $N=m$ seen in \cite{mei2019generalization,mei2021generalization, liao2020random} and the linear scaling considered in \cite{pennington2019nonlinear}. 
\item We establish risk bounds for random feature regression without the need for assumptions on the spectrum of the second moment operator or kernel \cite{rudi2015less,rudi2017generalization}. We also improve the bounds in the sparse setting \cite{hashemi2021generalization}. Similar results hold for other conditioning based methods, such as greedy solvers. Our risk bounds include the effects of noise on the measurements, with the assumption that the noise is bounded (or bounded with high probability, e.g. normally distributed). 
\end{itemize}

\subsection{Notation}
Let $\mathbb{R}$ be the set of all real number and $\mathbb{C}$ be the set of all complex number where $i=\sqrt{-1}$ denotes the imaginary unit. Define the set $[N]$ to be all natural numbers smaller than $N$, i.e. $\{1,2,\dots, N\}$. Throughout the paper, we denote vectors or matrices with bold letters, and denote the  identity matrix of size $n\times n$ by $\boldsymbol{I}_n$. For two vectors $\X, \boldsymbol{y} \in \mathbb{C}^d$, the inner product is denoted by $\langle \X, \boldsymbol{y} \rangle = \sum_{j=1}^{d} x_j\overline{y}_j$  where $\X = [x_1,\dots,x_d]^T$ and $\boldsymbol{y}=[y_1,\dots,y_d]^T$. For a vector $\X\in \mathbb{C}^d$, we denote by $\|\X\|_p$ the $\ell^p$-norm of $\X$ and for a matrix $\A\in\mathbb{C}^{m\times N}$ the (induced) $p$-norm is written as $\|\A\|_p$. For a matrix $\A\in\mathbb{C}^{m\times N}$, its transpose is denoted as $\A^T$ and its conjugate transpose is denoted as $\A^*$. Lastly, let $\mathcal{N}(\boldsymbol{\mu},\boldsymbol{\Sigma})$ be the normal distribution with mean vector $\boldsymbol{\mu}\in \mathbb{R}^d$ and covariance matrix $\boldsymbol{\Sigma} \in \mathbb{R}^{d\times d}$. If $\A\in\mathbb{C}^{m\times N}$ is full rank, then the pseudo-inverse is  $\A^\dagger= \A^*(\A\A^*)^{-1}$ if $m\leq N$ or $\A^\dagger= (\A^*\A)^{-1}\A^*$ if $m\geq N$. The condition number with respect to the $2$-norm of a matrix $\A$ is defined by $\kappa(\A):=\|\A\|_2 \|\A^\dagger\|_2$. Denote the $k$-th eigenvalue of the matrix $\boldsymbol{M}$ by $\lambda_k(\boldsymbol{M})$.

\section{Empirical Behavior}
\label{sec: tests}

The double descent phenomena appears in both the condition number for random matrices and the risk associated with the corresponding regression problem. In Figure \ref{figure: risk}, we examine the behavior of the risk and the condition number of the Gram matrix as a function of the complexity $\frac{N}{m}$, where $N$ is the column size, i.e. the number of features, and $m$ is the row size, i.e. the number of samples. Let $d$ be the dimension of each sample. We set $d=3$ and $m=100$, and vary the number of features $N\in[500]$. The weights of the random feature matrix are sampled from $\mathcal{N}(0,0.1)$ and the 100 training points are sampled from $\mathcal{N}(0,1)$.  The target function for the $\ell^2$ regression problem is linear $f(\bx) = \boldsymbol{b}^T\bx$ where the components of $\boldsymbol{b}\in \mathbb{R}^d$ are randomly sampled from the uniform distribution $U[0,1]^d$, which differs from the randomness used in the weights. The outputs on the training samples have either zero noise or additive gaussian noise with $10\%$ SNR (as indicated on the graphs). The training is done using the psuedo-inverse, i.e. the least squares problem or the min-norm interpolator depending on the complexity ratio. The empirical risk is measured on 1000 testing samples from $\mathcal{N}(0,1)$. For each $N$, the risk and condition numbers are averaged over 10 random trials. Each curve in Figure \ref{figure: risk} is rescaled so that the values are within the same range, noting that the minimum value of the condition number is 1 before rescaling. In each case, the double descent of the condition number and the empirical risk coincides with the same complexity regimes. Specifically, both peak at the interpolation threshold $\frac{N}{m}=1$ with large values obtained within a certain width of this threshold, which was also observed for this problem in \cite{ma2020slow}. It is worth noting that in these three experiments, using Fourier features \cite{rahimi2007random, rahimi2008weighted, rahimi2008uniform} with and without noise or using ReLU features, the global minima of the risk is obtain in the overparameterized regime, i.e. when $\frac{N}{m}>1$.

\begin{figure}[t]
\begin{center}
\includegraphics[trim={0.1cm 0.1cm 0.1cm 0.1cm}, width=5.5in, clip]{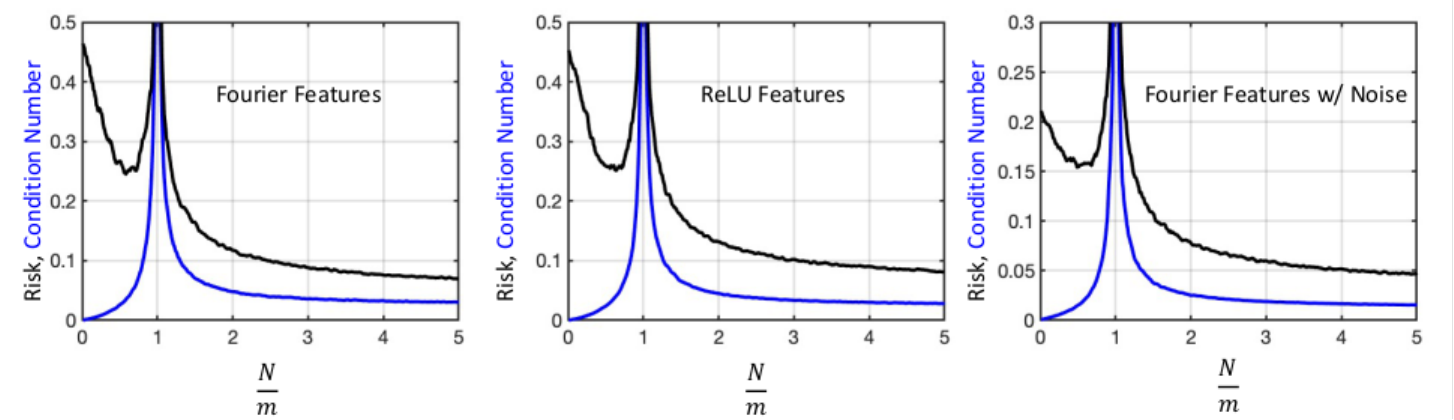}
\caption{\textbf{Double Descent of the Condition Number and Risk}. The condition number and risk curves are plotted as a function of the complexity ratio $\frac{N}{m}$. The curves are rescaled for visualization purposes. Each of the graphs display the double descent phenomena corresponding to the same complexity regions. In each plot, the global minima of the risk is obtain in the overparameterized regime  $\frac{N}{m}>1$ where the linear system remains well-conditioned as $N$ grows. } \label{figure: risk}
\end{center}
\end{figure}

In Figure \ref{figure: singular}, the probability density function associated with the singular values of the normalized random feature matrix in the underparameterized and overparameterized regimes are plotted using different complexity scales $\frac{N}{m}$. The curves in Figure \ref{figure: singular} are normalized to have their maxima equal to 1 and are estimated using a smooth density approximation over 10 trials. In both plots, the red curves indicate the interpolation threshold $\frac{N}{m}=1$ and are the transition boundaries between the two regimes (i.e. between the two plots). The weights and data are sampled from $\mathcal{N}(0,1)$. In this experiment we set $d=50$ and $m=3d=150$, and vary the number of features $N$ based on the logarithmic scalings indicated on the graphs.  
In Figure \ref{figure: singular}, the corresponding linear system is well-conditioned when the support of the probability density is bounded away from the endpoints of the interval. Empirically, this is the case (with high probability) for the log and log-cubed scaling. 

\begin{figure}[t]
\begin{center}
\includegraphics[trim={0.1cm 0.1cm 0.1cm 0.1cm}, width=5.5in, clip]{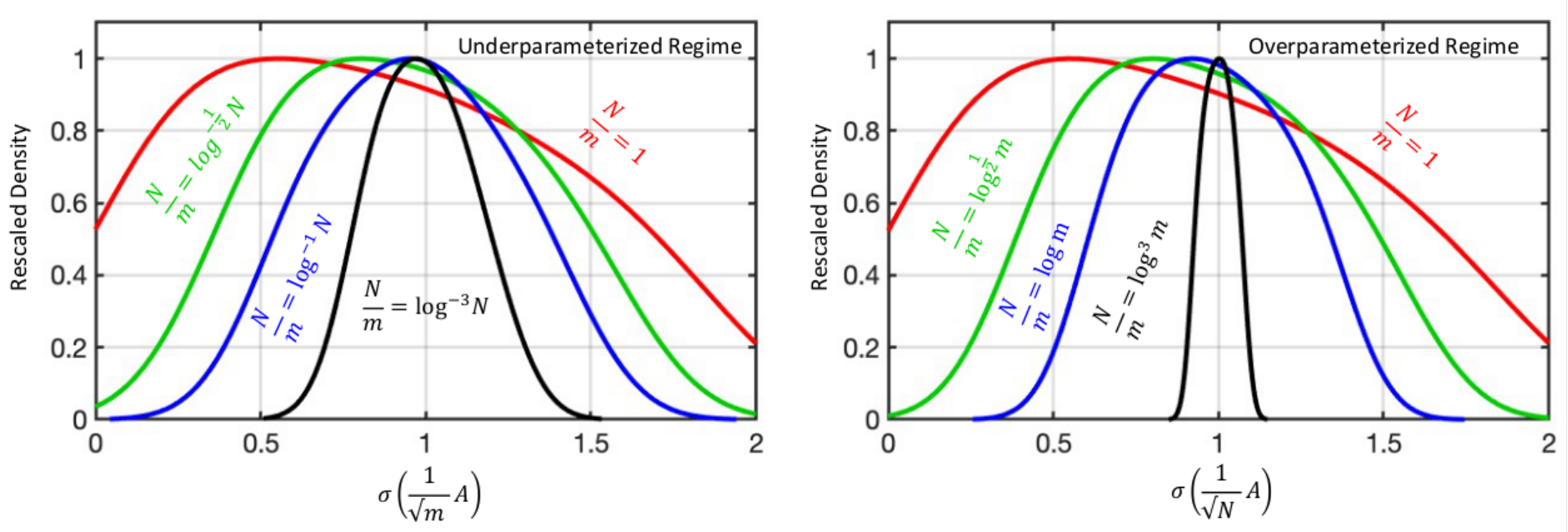}
\caption{\textbf{Probability Density Functions for the Singular Values:} The (normalized) smooth estimations of the probability density functions associated with the singular values of the normalized random feature matrix in the underparameterized (left) and overparameterized (right) regimes are plotted. The red curves indicate the interpolation case $\frac{N}{m}=1$ and are the transition boundaries between the two plots. }\label{figure: singular}
\end{center}
\end{figure}

The choice of scalings in the plots are to provide empirical support for the theoretical results in Section \ref{sec: theory}, specifically, that a logarithmic scaling is sufficient to obtain a bound on the extreme singular values. In addition, there is an observable symmetry between the (estimated) probability density functions in the underparameterized and overparameterized regimes using the same scaling (with $m$ and $N$ switched). The symmetry is not exact, partly due to rounding in order to obtain integer values for $N$, but the plots in Figure \ref{figure: singular} show that the overparameterized and underparameterized systems are well-conditioned when using the same functional scaling.

\section{Condition Number of Random Feature Maps and Insights}
\label{sec: theory}

In this section, we discuss the bounds on the extreme singular values of the random feature matrix $\A=\phi(\X^T\boldsymbol{W})$ where the elements of the matrices $\boldsymbol{W}$ and $\X$ are randomly drawn from normal distributions.

\subsection{Main Result}
Before discussing the results, we state the main assumption that is used throughout the paper. Specifically, the data and weights are normally distributed and the random feature matrix is built using Fourier features. This assumption is not necessary for the conclusions in this work; however, they simplify expressions for the constants and the parameter scalings. 

\begin{assumption} \label{assumption}
The data samples $\{\bx_j\}_{j\in [m]}$ are drawn from $\mathcal{N}(\boldsymbol{0},\gamma^2 \boldsymbol{I}_d)$ and the set of random feature weights $\{\boldsymbol{\omega}_k\}_{k\in [N]}$ are drawn from $\mathcal{N}(\boldsymbol{0},\sigma^2 \boldsymbol{I}_d)$. The random feature matrix $\A\in\mathbb{C}^{m\times N}$ is defined component-wise by $a_{j,k} = \phi(\bx_j,\boldsymbol{\omega}_k)$, where $\phi(\X,\boldsymbol{\omega}) = \exp(i \langle \bx,\boldsymbol{\omega} \rangle)$. 
\end{assumption}

\begin{theorem}[\textbf{Conditioning in Three Regimes}]\label{Th: Main Theorem (simplified)}
 Let the data $\{\bx_j\}_{j\in [m]}$, weights $\{\boldsymbol{\omega}_k\}_{k\in [N]}$, and random feature matrix $\A\in\mathbb{C}^{m\times N}$ satisfy Assumption \ref{assumption}.  \begin{itemize}
    \item [(a)](\textbf{Underparameterized Regime $m>N$}) If the following conditions hold
    \begin{align}
       &m\geq C\eta^{-2}N\log\left(\frac{2N}{\delta}\right)\\
         &\sqrt{\delta}\,\eta \, (4\gamma^2 \sigma^2+1)^{\frac{d}{4}} \geq N \label{ineq: simplified uncertainty principle}
    \end{align}
 for some $\eta,\delta>0$, then with probability at least $1-2\delta$ we have 
    $$\left| \lambda_k\left(\frac{1}{m}\A^*\A\right)-1\right| \leq 3\eta,$$
    for all $k\in [N]$.
    \item [(b)](\textbf{Overparameterized Regime $m<N$}) If the following conditions hold
    \begin{align*}
       & N\geq C\eta^{-2}m\log\left(\frac{2m}{\delta}\right)\\
       &\sqrt{\delta}\,\eta \,(4\gamma^2 \sigma^2+1)^{\frac{d}{4}} \geq m
    \end{align*}
    for some $\eta,\delta>0$, then with probability at least $1-2\delta$ we have 
    $$\left| \lambda_k\left(\frac{1}{N}\A\A^*\right)-1\right| \leq 3\eta,$$
    for all $k\in [m]$.
    \item [(c)](\textbf{Interpolation Threshold}) If $m=N\geq 2$,
then the eigenvalues satisfy 
    \begin{align*}
          &\mathbb{E} \, \lambda_{\min} \left(\frac{1}{N} \A^*\A\right) \leq \left(1-\frac{1}{N}\right)^{\frac{1}{2}}\frac{1}{(4\gamma^2\sigma^2+1)^{d/4}}+ \frac{1}{N}\\
        &\mathbb{E}\lambda_{\max} \left(\frac{1}{N} \A^*\A\right)\geq   2-\frac{1}{N},\end{align*}
where the expectation is with respect to $\{\bx_j\}_{j\in [m]}$ and $\{\boldsymbol{\omega}_k\}_{k\in [N]}$.
\end{itemize}
The universal constant $C>0$ is no greater than $4$ if $\min\{m,N\}>25$.
\end{theorem}
The proof of Theorem \ref{Th: Main Theorem (simplified)} is given in Section \ref{proof of condition}.

Theorem \ref{Th: Main Theorem (simplified)} provides a characterization of the behavior of the condition number of the random feature matrix as a function of complexity $\frac{N}{m}$, with respect to any dimension $d$. If the parameter $\eta$ is chosen to satisfy $\eta < \frac{1}{3}$, then the eigenvalues of the normalized Gram matrices are strictly between $0$ and $2$. Moreover, in the extreme cases, when $m \gg N$ or $m \ll N$, then $\eta$ can be small and the eigenvalues of the normalized Gram matrices will concentrate near $1$. If the random feature matrix is square and we assume that the complexity bound $m=N\leq(4\gamma^2\sigma^2+1)^{d/4}$ holds, then by applying Markov's inequality the minimum eigenvalue of the matrix $\frac{1}{N}\A^* \A$ satisfies
$$\mathbb{P}\left(\lambda_{\min} \left(\frac{1}{N} \A^*\A\right)\geq N^{-\frac{1}{2}} \right) \leq 2N^{-\frac{1}{2}}. $$ On the other hand, the largest eigenvalue of $\frac{1}{N}\A^*\A$ must be larger than $1$. Therefore, at the interpolation threshold, the Gram matrix becomes ill-conditioned. Altogether, Theorem \ref{Th: Main Theorem (simplified)} shows that the conditioning of the random feature matrix exhibits the double descent phenomena. 

One interesting consequence of Theorem \ref{Th: Main Theorem (simplified)} is that the random feature matrix does not need regularization to be well-conditioned. This was also observed in \cite{poggio2019double} for random matrices $\A$, where $a_{i,j}$ are drawn randomly from some probability distribution and for kernel matrices $\A=K(\X^T\X)$, i.e. symmetric systems.

\begin{remark} 
The symmetry in the conditions for the overparameterized and underparameterized regimes is inherited from the assumption that the weights and samples use the same distribution (with different variances). The results can be extended to uniform or sub-gaussian random sampling for $\bx$ or $\boldsymbol{\omega}$ and other feature maps, which will lead to similar overall conditions but break this type of symmetry.  
\end{remark}

\begin{remark} 
In several of the theoretical results, the smaller of the dimensional parameter ($N$, $m$, or sparsity $s$) must be less than $\sqrt{\delta}\,\eta \,(4\gamma^2 \sigma^2+1)^{\frac{d}{4}}$. If we set the parameters to $\delta=0.01$, $\eta=\frac{1}{4}$, and $\gamma,\sigma=2$, then for $d\geq 44$ the complexity limitation is on the exascale. If one wants the inner product $\langle \bx ,\boldsymbol{\omega} \rangle$ to be order 1, then after rescaling we get  $\left(\frac{4\gamma^2\sigma^2}{d}+1\right)^{d/4}$, which is like $\exp(\gamma^2\sigma^2)$ for large $d$. Thus in the rescaled case in high-dimensions, if $\gamma\sigma=\sqrt{46}$, then the complexity limitation is on the exascale. On the other hand, for small $\gamma\sigma$ the complexity bounds can place a severe limitation on the smaller of the dimensional parameters. 
\end{remark}

\section{Generalization Error using the Condition Number} \label{sec:gen}
With the control on the condition number, we can provide bounds on the generalization error (or risk) associated with regression problems using random feature matrices in both the underparameterized and overparameterized settings. We show that the double descent phenomena on the risk is a consequence of the double descent phenomena on the condition number. 

\subsection{Set-up and Notation}
The analysis for the generalization error will follow the set-up from \cite{rahimi2007random, rahimi2008weighted, rahimi2008uniform, hashemi2021generalization}. For a probability density $\rho$ (associated with the random weights $\boldsymbol{\omega}$) in $\mathbb{R}^d$ and an activation function $\phi:\mathbb{R}^d \times \mathbb{R}^d \to \mathbb{C}$, a function $f:\mathbb{R}^d \to \mathbb{C}$ has finite $\rho$-norm with respect to $\phi$ if it belongs to the class \cite{rahimi2007random}
\begin{equation*}
    \mathcal{F}(\phi,\rho): =\left\{ f(\bx) = \int_{\mathbb{R}^d} \alpha(\boldsymbol{\omega}) \phi(\bx;\boldsymbol{\omega}) d\boldsymbol{\omega}: \quad \|f\|_\rho :=\sup_{\boldsymbol{\omega}} \left| \frac{\alpha(\boldsymbol{\omega})}{\rho(\boldsymbol{\omega})} \right|<\infty \right\}.
\end{equation*}
The regression problem is to find an approximation of a target function $f\in \mathcal{F}(\phi,\rho)$, that is, given a set of random weights $\{\boldsymbol{\omega}_k\}_{k\in [N]}$, we train
\begin{equation}\label{eq: approximate f with linear combinations}
    f^\sharp(\bx) = \sum_{k=1}^N c^\sharp_k \,\phi(\bx,\boldsymbol{\omega}_k).
\end{equation}
using $\left(\bx_j, y_j \right)$ where $y_j =f(\bx_j)+e_j$ and $e_j$ is the measurement noise, for $j\in[m]$. Let $\A\in\mathbb{C}^{m\times N}$ be the matrix with $a_{j,k}=\phi(\bx_j,\boldsymbol{\omega}_k)$ for $j\in[m]$ and $k\in[N]$, the training problem is equivalent to finding $\bc\in\mathbb{C}^{N}$ such that  $\A\bc \approx \boldsymbol{y}$ where $\boldsymbol{y} = [y_1,\dots,y_m]^T\in\mathbb{C}^m$.

\begin{assumption} \label{assumption2}
The noise on outputs, $\{e_j\}_{j\in [m]}$, is bounded by some constant $E$, i.e. $\|e_j\|_\infty\leq E$.
\end{assumption}

Assumption \ref{assumption2} can be extended to random noise. For example, if $e_j\sim \mathcal{N}(\boldsymbol{0},\nu^2 \boldsymbol{I}_d)$ and $m\geq 2 \log\left( \frac{1}{\delta} \right)$ then with probability at least $1-\delta$ the noise is bounded by $|e_j|\leq 2\nu$ for all $j\in[m]$. This does not change our technical arguments except for an additional $\delta$ term in the probability bounds.

\subsection{Underparameterized Regime, Least Squares Problem}
In the underparameterized (overdetermined) regime where we have more data samples than features, the coefficient $\bc^{\sharp}$ are trained via the least squares problem
\begin{align}
    \bc^{\sharp} = \argmin{\bc\in \mathbb{R}^N }  \ \|\A\bc - \boldsymbol{y}\|_2,
\end{align}
where $\bc^{\sharp}=[c^{\sharp}_1,\dots,c^{\sharp}_N]^T$ and the (trained) approximation is given by
\begin{equation}\label{eq: def of f-sharp}
    f^{\sharp}(\bx) = \sum_{k=1}^N c^{\sharp}_k\, \phi(\bx,\boldsymbol{\omega}_k).
\end{equation}
Using the condition number of $\A$, we can control the risk
$$R(f^{\sharp}):=\|f-f^{\sharp}\|^2_{L^2(d\mu)}=\int_{\mathbb{R}^d} \left|f(x)-f^{\sharp}(x)\right|^2 \, d\mu(x)$$
with high probability.

\begin{theorem}[\textbf{Least Squares Risk Bound $m>N$}]\label{Th: error for least square m>N}
Let the data $\{\bx_j\}_{j\in [m]}$, weights $\{\boldsymbol{\omega}_k\}_{k\in [N]}$, and random feature matrix $\A\in\mathbb{C}^{m\times N}$ satisfy Assumption \ref{assumption} and let the noise satisfy Assumption \ref{assumption2}. If for some $\eta, \delta>0$ the following conditions hold 
\begin{align*}
    &m\geq C\eta^{-2}N\log\left(\frac{2N}{\delta}\right)\\
        &\sqrt{\delta}\eta \,  \left(4\gamma^2 \sigma^2+1\right)^{\frac{d}{4}} \geq N
\end{align*}
where $C>0$ is a universal constant independent of the dimension $d$, then with probability at least $1-8\delta$ the following risk bound holds:
\begin{align}\label{eq: LS risk bound}
    R(f^{\sharp})\leq 16\, K(\eta) \, \left(1+Nm^{-\frac{1}{2}}\log^{\frac{1}{2}}\left(\frac{1}{\delta}\right)\right)\, (\epsilon^2 \|f\|^2_{\rho} + E^2),
\end{align}
where $K(\eta):= \frac{1+3\eta}{1-3\eta}$ and
\begin{align*}
    \epsilon = \frac{2}{\sqrt{N}} \left( 1+4\gamma\sigma d \sqrt{1+\sqrt{\frac{12}{d}\log\left(\frac{m}{\delta}\right)}} +\sqrt{\frac{1}{2}\log \left(\frac{1}{\delta}\right)}\right).
\end{align*}
\end{theorem}
The proof of Theorem \ref{Th: error for least square m>N} is given in Section \ref{proof generalization}. Note that $K(\eta)$ is an upper bound for the condition number of the matrix $\frac{1}{m}\A^*\A$ using Theorem \ref{Th: Main Theorem (simplified)}.

\subsection{Overparameterized Regime, Min-Norm Interpolator}
In the overparameterized (underdetermined) regime, we have more features than data, and thus popular models for training the coefficient vector $\bc^{\sharp}$ rely on minimizing $\|\bc^{\sharp}\|_p$ under the constraint that $\A\bc^{\sharp} - \boldsymbol{y}$ is zero or small. First, we consider training $\bc^{\sharp}$ using the min-norm interpolation problem:
\begin{align}\label{min-norm problem}
    \bc^{\sharp} =\argmin{\A\bc=\boldsymbol{y} } \  \ \|\bc\|_2 
    \end{align}
The solution is given by $\bc^{\sharp}=\A^\dagger\boldsymbol{y}$ where the psuedoinverse is  $\A^\dagger= \A^*(\A\A^*)^{-1}$, noting that the inverse of the Gram matrix exists with high probability under the conditions of Theorem \ref{Th: Main Theorem (simplified)}. This is also referred to as the ridgeless case, since $\bc^{\sharp}=\A^\dagger\boldsymbol{y}$ can be viewed as the ridgeless limit of the coefficients, $\bc_\lambda^{\sharp}$, which are obtained by the ridge regression problem 
\begin{align}\label{ridge regression problem}
    \bc_\lambda^{\sharp} =\argmin{\bc\in\mathbb{R}^{N}  } \ \frac{1}{m}\| \A\bc-\boldsymbol{y}\|^2_2+ \lambda \|\bc\|^2_2 
    \end{align}
    i.e.  the vector obtained when $\lambda\rightarrow 0^+$. With $f^{\sharp}$ defined as \eqref{eq: def of f-sharp}, then we have the following result analogous to Theorem \ref{Th: error for least square m>N}.

\begin{theorem}[\textbf{Min-Norm Risk Bound $m<N$}]\label{Th: error for least square m<N}
Let the data $\{\bx_j\}_{j\in [m]}$, weights $\{\boldsymbol{\omega}_k\}_{k\in [N]}$, and random feature matrix $\A\in\mathbb{C}^{m\times N}$ satisfy Assumption \ref{assumption} and let the noise satisfy Assumption \ref{assumption2}. If for some $\eta, \delta>0$ the following conditions hold 
\begin{align*}
    &N\geq C\eta^{-2}m\log\left(\frac{2m}{\delta}\right)\\
    &\sqrt{\delta}\eta \, \left(4\gamma^2 \sigma^2+1\right)^{\frac{d}{4}} \geq m
\end{align*}
where $C>0$ is a universal constant independent of the dimension $d$, then there exists a constant $\tilde{C}>0$ such that the following risk bound holds:
\begin{align*}
    R(f^{\sharp}) &\leq \tilde{C}\log^{\frac{1}{2}}\left(\frac{1}{\delta} \right) \left(m^{-\frac{1}{2}}+  K(\eta)\, m^{\frac{1}{2}}\, \epsilon^2\right) \|f\|^2_{\rho} +   \tilde{C}m^{\frac{1}{2}} \, K(\eta)\, \log^{\frac{1}{2}}\left(\frac{1}{\delta}\right) E^2,
\end{align*}
with probability at least $1-8\delta$, where $K(\eta):= \frac{1+3\eta}{1-3\eta}$ and
\begin{align*}
    \epsilon = \frac{2}{\sqrt{N}} \left( 1+4\gamma\sigma d \sqrt{1+\sqrt{\frac{12}{d}\log\left(\frac{m}{\delta}\right)}} +\sqrt{\frac{1}{2}\log \left(\frac{1}{\delta}\right)}\right).
\end{align*}

\end{theorem}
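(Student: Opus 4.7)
My plan is to adapt the Rahimi--Recht template used for Theorem~\ref{Th: error for least square m>N} to the overparameterized min-norm setting, with the key new ingredient being Theorem~\ref{Th: Main Theorem (simplified)}(b) applied to $\A\A^*$ in place of the $\A^*\A$ bound used in the least-squares case. First I would introduce the Monte Carlo intermediate $\bar f(\bx) := \sum_{k=1}^N \bar c_k\,\phi(\bx, \bomega_k)$ with $\bar c_k := \alpha(\bomega_k)/(N \rho(\bomega_k))$, so that $|\bar c_k| \le \|f\|_\rho / N$ and $\bar f$ is unbiased for $f$ pointwise. The triangle inequality gives
\begin{equation*}
    \|f - f^{\sharp}\|_{L^2(d\mu)} \le \|f - \bar f\|_{L^2(d\mu)} + \|\bar f - f^{\sharp}\|_{L^2(d\mu)},
\end{equation*}
and a standard Hoeffding/Rahimi--Recht concentration controls $\|f - \bar f\|_{L^2(d\mu)}^2$ by $O(\|f\|_\rho^2\log(1/\delta)/N)$ with probability $1-\delta$.

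For the second term the strategy is to compare coefficients and then convert back to functions. Decompose $\bar\bc = \bar\bc_\parallel + \bar\bc_\perp$ with $\bar\bc_\parallel \in \range(\A^*)$ and $\bar\bc_\perp \in \ker\A$; the min-norm solution already lies in the row space, so $\bc^{\sharp} - \bar\bc_\parallel \in \range(\A^*)$ and
\begin{equation*}
    \A(\bc^{\sharp} - \bar\bc_\parallel) = \by - \A\bar\bc = \boldsymbol{e} - \boldsymbol{r}, \qquad \boldsymbol{r}_j := \bar f(\bx_j) - f(\bx_j).
\end{equation*}
Theorem~\ref{Th: Main Theorem (simplified)}(b) yields $\|\A^\dagger\|_2^2 \le 1/\bigl(N(1-\tfrac{5}{4}\eta-\eta^2)\bigr)$ with high probability; combined with $\|\boldsymbol{e}\|_2 \le E\sqrt{m}$ (Assumption~\ref{assumption2}) and $\|\boldsymbol{r}\|_2 = O(\|f\|_\rho\sqrt{m\log(m/\delta)/N})$ from a pointwise Hoeffding bound plus union bound over $j\in[m]$, this gives
\begin{equation*}
    \|\bc^{\sharp} - \bar\bc_\parallel\|_2^2 \lesssim \frac{K(\eta)}{N}\bigl(m E^2 + m\|f\|_\rho^2\log(m/\delta)/N\bigr).
\end{equation*}

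The final step turns the coefficient bound into a function bound. Writing $\|\bar f - f^{\sharp}\|_{L^2(d\mu)}^2 = (\bar\bc - \bc^{\sharp})^*\boldsymbol{K}(\bar\bc - \bc^{\sharp})$ with $\boldsymbol{K}_{jk} := \mathbb{E}_{\bx\sim\mu}[\phi(\bx,\bomega_j)\overline{\phi(\bx,\bomega_k)}]$ and splitting $\bar\bc - \bc^{\sharp} = -(\bc^{\sharp}-\bar\bc_\parallel) + \bar\bc_\perp$, the row-space piece is handled by the coefficient estimate above combined with a high-probability bound on $\boldsymbol{K}$ restricted to $\range(\A^*)$; this yields the $K(\eta)m^{1/2}\epsilon^2\|f\|_\rho^2$ and $K(\eta)m^{1/2} E^2$ contributions after using $\epsilon^2 \asymp 1/N$. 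The null-space piece $\bar\bc_\perp^* \boldsymbol{K} \bar\bc_\perp$ must be treated by a separate Rahimi--Recht concentration on the test distribution (since $\bar\bc_\perp$ has i.i.d.\ bounded entries in $\bomega_k$), producing the residual $m^{-1/2}\|f\|_\rho^2$ term. A union bound over the conditioning event of Theorem~\ref{Th: Main Theorem (simplified)}(b), the two Monte Carlo events, and the noise bound completes the argument with $\tilde C$ absorbing universal constants.

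The main obstacle is the null-space component $\bar\bc_\perp$: it is invisible at the training points because it lies in $\ker\A$, yet it contributes nontrivially to $\bar f - f^{\sharp}$ on the support of $\mu$, so the conditioning bound on $\A$ alone cannot bound it and a direct concentration on the test distribution is required. This asymmetric treatment of the row-space and null-space pieces is also the source of the $m^{1/2}$ factor on the noise/interaction terms versus the $m^{-1/2}\|f\|_\rho^2$ term in the final bound, and is the essential difference from the underparameterized analysis where the design matrix has trivial null space and the coefficient vector is uniquely determined.
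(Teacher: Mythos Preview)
Your overall architecture---triangle inequality with the Monte Carlo intermediate $\bar f$, controlling $\|\bc^\sharp-\bar\bc\|_2$ via the conditioning of $\A$, and then converting back to an $L^2(d\mu)$ bound---matches the paper. The divergence is in the last conversion step, and there your null-space analysis contains a genuine gap.

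You claim that $\bar\bc_\perp=(I-\A^\dagger\A)\bar\bc$ ``has i.i.d.\ bounded entries in $\bomega_k$,'' and invoke a Rahimi--Recht concentration on it. This is false: the projection $I-\A^\dagger\A$ depends on the full collection $\{\bx_j\}_{j\in[m]}$ and $\{\bomega_k\}_{k\in[N]}$, so the entries of $\bar\bc_\perp$ are coupled through $\A$ in a highly nonlinear way and no i.i.d.\ concentration applies. Likewise, your plan to bound the kernel matrix $\boldsymbol{K}$ restricted to $\range(\A^*)$ is not developed; note that $\boldsymbol{K}$ is $N\times N$ and random in $\{\bomega_k\}$, while the hypotheses of the theorem only impose the scale condition $\frac{\eta}{20}(2\gamma^2\sigma^2+1)^{d/2}\ge m$, which is too weak to force $\boldsymbol{K}\approx\boldsymbol{I}_N$.

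The paper sidesteps both issues by never splitting into row- and null-space at the function level. It draws \emph{fresh} samples $\{\boldsymbol{z}_j\}_{j\in[m]}$ independent of $\A,\bc^\sharp,\bc^\star$, applies McDiarmid to get
\[
\|f^\star-f^\sharp\|_{L^2(d\mu)}^2\le \tfrac{1}{m}\|\tilde\A(\bc^\star-\bc^\sharp)\|_2^2+N\sqrt{\tfrac{2}{m}\log(1/\delta)}\,\|\bc^\star-\bc^\sharp\|_2^2,
\]
and then applies Theorem~\ref{Th: Main Theorem (simplified)}(b) to the \emph{new} matrix $\tilde\A=\phi(\boldsymbol{Z}^T\boldsymbol{W})$ to bound $\tfrac{1}{m}\|\tilde\A\,\cdot\,\|_2^2\le \tfrac{N}{m}(1+\tfrac54\eta+\eta^2)\|\cdot\|_2^2$. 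Because $\tilde\A$ is independent of $\bc^\star-\bc^\sharp$, this works uniformly regardless of how that difference splits across $\ker\A$ and $\range(\A^*)$. On the coefficient side the paper then uses the crude but sufficient bound $\|(I-\A^\dagger\A)\bc^\star\|_2\le\|\bc^\star\|_2\le\|f\|_\rho/\sqrt N$, which after multiplying by the $N/m$ factor produces exactly the $m^{-1/2}\|f\|_\rho^2$ term you were aiming for---without any separate concentration on the null-space component.
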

The proof of Theorem \ref{Th: error for least square m<N} is given in Section \ref{proof generalization}. In this case, $K(\eta)$ is an upper bound for the condition number of the matrix $\frac{1}{N}\A\A^*$ using Theorem \ref{Th: Main Theorem (simplified)}.

\subsection{Overparameterized Regime, Sparse Regression}
Next, we consider the sparse regression setting, where we minimize the $\ell^1$ norm of the coefficients. The goal is to obtain a subset of the features which maintain an accurate approximation to the target function without needing to use the entire feature space, i.e. to obtain low complexity random feature models (see \cite{hashemi2021generalization}). This is implicitly connected to methods for pruning overparameterized networks and the \textit{lottery ticket hypothesis} \cite{frankle2018lottery}. Specifically, it is conjectured that there are smaller subnetworks of overparameterized randomly trained networks which are accurate representation of the target function with lower complexity. Our results also provide an additional motivation based on the structure of the transform of $f$, i.e. the decay of $\alpha(\boldsymbol{\omega})$.

We consider the $\ell^1$ basis pursuit denoising problem \cite{foucart2013invitation}
\begin{align}\label{BP problem}
    \bc^{\sharp} =\argmin{ \|\A\bc-\boldsymbol{y}\|_2 \leq \xi \sqrt{m} } \quad \|\bc\|_1,
\end{align}
where $\xi$ is a parameter related to the noise level (taking into account model inaccuracy as well). The constraint in \eqref{BP problem} is inexact compared to the interpolation condition used in \eqref{min-norm problem}, since one cannot guarantee the existence of exact sparse solutions of $\A\bc=\boldsymbol{y}$ in the presence of approximation error and measurement noise.

In this setting, we modify the approximation $f^{\sharp}$ to incorporate a pruning step as done in \cite{hashemi2021generalization}. Suppose that $\bc^{\sharp}$ is a solution of \eqref{BP problem}, we define $f^{\sharp}$ by 
\begin{align}\label{eq: truncated f sharp}
    f^{\sharp}(\bx) = \sum_{k\in \mathcal{S}^{\sharp}} c_k^{\sharp} \phi(\bx,\boldsymbol{\omega}_k),
\end{align}
where $\mathcal{S}^{\sharp}$ is the index set of the $s$-largest absolute entries of $\bc^{\sharp}$. This guarantees that the approximation only depends on at most $s<N$ random features. For any vector $\bc$, we denote by $\vartheta_{s,p}(\bc)$ the $\ell^p$-error of the best $s$-term approximation to $\bc$, i.e.,
$$
\vartheta_{s,p}(\bc) := \inf \{\|\bc-\Tilde{\bc}\|_p: \Tilde{\bc} \text{ is }s\text{ sparse}\}.
$$
This is a measure of the compressibility of a vector $\bc$ with respect to the $\ell^p$ norm and will appear in the risk bounds as a source of error from the sparsification.

To measure the conditioning of the system with respect to sparse recovery, we define the restricted isometry constant of a matrix $\A\in \mathbb{C}^{m\times N}$ using the notation from \cite{foucart2013invitation}. For an integer $s\leq N$, the $s$-th restricted isometry property (RIP) constant of $\A$, denoted by $\delta_s=\delta_s(\A)$,  is the smallest $\delta\geq 0$ such that
$$
(1-\delta)\|\bx\|_2^2 \leq \|\A\bx\|_2^2 \leq (1+\delta)\|\bx\|_2^2
$$
holds for all $s$-sparse $\bx$.  We have the following estimate of the $s$-RIP constant of the random feature matrix $\A$.

\begin{theorem}[\textbf{Estimate on Restricted Isometry Constants}]\label{thm: RIP estimate}
Let the data $\{\bx_j\}_{j\in [m]}$, weights $\{\boldsymbol{\omega}_k\}_{k\in [N]}$, and random feature matrix $\A\in\mathbb{C}^{m\times N}$ satisfy Assumption \ref{assumption}. For $\eta_1, \eta_2, \delta \in (0,1)$ and some integer $s\geq 1$, if
\begin{align*}
    m&\geq C_1 \eta_1^{-2}\, s\, \log(\delta^{-1}) \\
    \frac{m}{\log(3m)} &\geq C_2 \eta_2^{-2}\, s \,\log^2(s) \log\left(\frac{N}{9\log(2m)+3}\right) \\
    \sqrt{\delta}&\,\eta_1\, (4\gamma^2\sigma^2+1)^{\frac{d}{4}}\geq N.
\end{align*}
where $C_1$ and $C_2$ are universal positive constants, then with probability at least $1-2\delta$, the $s$-RIP constant $\delta_{s}\left(\frac{1}{\sqrt{m}}A\right)$ is bounded by $f(\eta_1,\eta_2)$, where
$$
f(\eta_1,\eta_2) := 3\eta_1 + \eta_2^2 + \sqrt{2}\eta_2
$$
\end{theorem}

With the above theorem, we can derive the following result.

\begin{theorem}[\textbf{Sparse Regression Risk Bounds}]\label{Th: error for bp}
Let the data $\{\bx_j\}_{j\in [m]}$, weights $\{\boldsymbol{\omega}_k\}_{k\in [N]}$, and random feature matrix $\A\in\mathbb{C}^{m\times N}$ satisfy Assumption \ref{assumption} and let the noise satisfy Assumption \ref{assumption2}.  Let $c^\sharp$ be obtained from  \eqref{BP problem} with $\xi = \sqrt{2(\epsilon^2 \|f\|_{\rho}+E^2)}$. If the following conditions hold
\begin{align*}
    &\frac{m}{\log(3m)} \geq C s \log^2(2s)\log\left(\frac{N}{9\log(2m)}+3\right) \\
    & \frac{\sqrt{\delta}}{10}\left(4\gamma^2 \sigma^2+1\right)^{\frac{d}{4}} \geq N\\
    &\delta \geq N^{-\log^2(2s)\log(3m)},
\end{align*}
then with probability at least $1-8\delta$ the following risk bound holds: 
\begin{align*}
    R(f^{\sharp}) \leq C'\,\left(1+Nm^{-\frac{1}{2}}\log^{\frac{1}{2}}\left(\frac{1}{\delta}\right)\right) \, (\epsilon^2 \|f\|^2_{\rho}+E^2) + C''\,\left(1+Nm^{-\frac{1}{2}}s^{-1}\log^{\frac{1}{2}}\left(\frac{1}{\delta}\right)\right)\,\vartheta_{s,1}(\bc^*)^2,
\end{align*}
where $f^{\sharp}$ is defined as \eqref{eq: truncated f sharp}, $\bc^\star$ is the vector
whose components are defined by $\bc^\star_k = \frac{1}{N} \frac{\alpha(\boldsymbol{\omega}_k)}{\rho(\boldsymbol{\omega}_k)},$ for $k\in[N]$
and
\begin{align*}
    \epsilon = \frac{2}{\sqrt{N}} \left( 1+4\gamma\sigma d \sqrt{1+\sqrt{\frac{12}{d}\log\left(\frac{m}{\delta}\right)}} +\sqrt{\frac{1}{2}\log \left(\frac{1}{\delta}\right)}\right).
\end{align*}
The constants $C,C',C''>0$ are universal constants independent of the dimension $d$.
\end{theorem}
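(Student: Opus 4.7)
The plan is to stitch together three ingredients that communicate through a restricted isometry property (RIP) for the random feature matrix: (i) upgrade the eigenvalue control of Theorem \ref{Th: Main Theorem (simplified)} to RIP on all $s$-column submatrices, (ii) apply a standard RIP-based recovery theorem for basis pursuit denoising, and (iii) translate the resulting coefficient-level error back into an $L^2(d\mu)$ risk, with the Monte Carlo vector $\bc^\star_k = \alpha(\boldsymbol{\omega}_k)/(N\rho(\boldsymbol{\omega}_k))$ playing the role of the ``ground truth'' coefficients.

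The first technical step is to apply the argument used in Theorem \ref{Th: Main Theorem (simplified)}(a) not to the full matrix $\A$ but to every $2s$-column submatrix $\A_S$, and then union-bound over the $\binom{N}{2s}$ choices of $S$. The combinatorial factor $\log \binom{N}{2s} \lesssim s\log(2s)\log(N)$ is precisely what the sample complexity hypothesis $m/\log(3m)\ge Cs\log^2(2s)\log(N)$ is designed to absorb, yielding a restricted isometry constant $\delta_{2s}\le \sqrt{2}-1$ for $\A/\sqrt{m}$ with probability at least $1-N^{-\log^2(2s)\log(3m)}$. Next, I would check that $\bc^\star$ is feasible for the BPDN program \eqref{BP problem} with the stated $\xi$: by the Rahimi--Recht Hoeffding bound, the random variable $\alpha(\boldsymbol{\omega})\phi(\bx_j,\boldsymbol{\omega})/\rho(\boldsymbol{\omega})$ is bounded by $\|f\|_\rho$ and has mean $f(\bx_j)$, so (via a union bound over $j\in[m]$ with the parameter $\epsilon$) one gets $|f^\star(\bx_j)-f(\bx_j)|\le \epsilon\|f\|_\rho$ uniformly, whence $\|\A\bc^\star-\by\|_2^2\le 2m(\epsilon^2\|f\|_\rho^2+E^2)=m\xi^2$. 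The RIP and this feasibility check feed a standard BPDN guarantee (e.g.\ Foucart--Rauhut, Theorem 6.12), producing
\begin{equation*}
\|\bc^\sharp-\bc^\star\|_2 \le D_1\,\vartheta_{s,1}(\bc^\star)/\sqrt{s} + D_2\,\xi, \qquad \|\bc^\sharp-\bc^\star\|_1 \le D_1\,\vartheta_{s,1}(\bc^\star) + D_2\sqrt{s}\,\xi.
\end{equation*}
Hard-thresholding $\bc^\sharp$ to its $s$ largest entries (the step implicit in \eqref{eq: truncated f sharp}) introduces an additional error controlled by $\vartheta_{s,1}(\bc^\sharp)$, which by the triangle inequality is absorbed into $\vartheta_{s,1}(\bc^\star)$ plus the recovery error already bounded.

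The final step converts the coefficient estimates into $L^2(d\mu)$ risk via the decomposition $\|f-f^\sharp\|_{L^2(d\mu)} \le \|f-f^\star\|_{L^2(d\mu)} + \|f^\star-f^\sharp\|_{L^2(d\mu)}$. The first piece is a Monte Carlo approximation in the $L^2(d\mu)$ norm and delivers an $\epsilon\|f\|_\rho$-type contribution. For the second piece, $f^\star-f^\sharp = \sum_k(\bc^\star_k-\bc^\sharp_k)\phi(\cdot,\boldsymbol{\omega}_k)$: on the $s$ features retained in $f^\sharp$ one uses an RIP-type lower isometry to turn the empirical identity $\tfrac{1}{m}\|\A\bv\|_2^2\approx \|\sum_k v_k\phi\|_{L^2(d\mu)}^2$ into a bound by $\|\bv\|_2^2$, while for the $N-s$ discarded coordinates one uses $|\phi|=1$ to get an $\ell^1\to L^\infty$ bound and then $L^2\subset L^\infty$ on the measure. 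Substituting the Step~3 estimates and $\xi^2 = 2(\epsilon^2\|f\|_\rho^2+E^2)$ produces the two stated risk contributions, with the $\epsilon^2\|f\|_\rho^2+E^2$ term multiplied by a factor $1+Nm^{-1/2}\log^{1/2}(1/\delta)$ coming from the uniform empirical-to-population transfer, and the $\vartheta_{s,1}(\bc^\star)^2$ term multiplied by the analogous factor with an extra $s^{-1}$ from the $\ell^1/\sqrt{s}$ scaling in the BPDN bound.

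The main obstacle is Step~4, the empirical-to-population conversion. RIP controls $\tfrac{1}{m}\|\A\bv\|_2^2$ only for sparse $\bv$, whereas the error $\bc^\sharp-\bc^\star$ is genuinely dense (it is only $\emph{compressible}$, via $\vartheta_{s,1}(\bc^\star)$). One therefore has to split the error into a sparse ``head'' and a compressible ``tail'' and bound each by the appropriate mechanism, making sure that the resulting constants remain dimension-free and that the probability budget is respected across the several concentration events (the RIP event, the two Monte Carlo events on training and testing, and the noise event). The rest of the argument is largely a bookkeeping exercise built on off-the-shelf compressive-sensing tools once the RIP upgrade in Step~1 is in place.
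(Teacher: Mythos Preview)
Your Step~1 contains a genuine gap. Applying the conditioning bound of Theorem~\ref{Th: Main Theorem (simplified)}(a) to each fixed $2s$-column submatrix $\A_S$ and then union-bounding over the $\binom{N}{2s}$ choices of $S$ does \emph{not} recover the stated linear-in-$s$ sample complexity. Concretely, the per-submatrix failure probability in that argument is controlled by a Bernstein-type tail of the form $\exp(-cm/s)$ (equivalently, the condition $m\ge C_1 s\log(\epsilon^{-1})$ in the underlying RIP lemma). To union-bound you must set $\epsilon^{-1}\gtrsim\binom{N}{2s}/\delta$, so $\log(\epsilon^{-1})\gtrsim s\log(N/s)$, and the requirement becomes $m\gtrsim s\cdot s\log(N/s)=s^2\log(N/s)$. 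This is quadratic in $s$ and is \emph{not} absorbed by the hypothesis $m/\log(3m)\ge Cs\log^2(2s)\log(N)$. Your sentence ``the combinatorial factor $\log\binom{N}{2s}\lesssim s\log(2s)\log(N)$ is precisely what the sample complexity hypothesis is designed to absorb'' is where the error occurs: the combinatorial factor enters multiplicatively with $s$ through the Bernstein exponent, not additively into the sample complexity.

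The paper avoids this by never union-bounding over column subsets. Instead it proves a single RIP theorem (Theorem~\ref{thm: RIP estimate}) that bounds $\delta_{2s}(\A/\sqrt{m})$ for the full $m\times N$ matrix directly, using a chaining argument (Dudley's inequality, Lemma~\ref{lm: discrete Dudley} and Lemma~\ref{lm: expectation estimate}) over the set $D_{2s,N}$ of all $2s$-sparse unit vectors. The $\log(N)$ factor in the hypothesis arises from the covering-number estimate in that chaining, not from a union bound, and this is exactly what buys the linear-in-$s$ scaling. The proof of Theorem~\ref{Th: error for bp} then simply invokes Theorem~\ref{thm: RIP estimate} with sparsity parameter $2s$ and specific $\eta_1,\eta_2,\eta_3$ to obtain $\delta_{2s}\le 4/\sqrt{41}$. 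Your Steps~2--4 are broadly aligned with the paper (feasibility of $\bc^\star$, robust BPDN recovery via Lemma~\ref{lm: robust recovery lemma}, and an empirical-to-population transfer using McDiarmid with fresh test samples plus a second application of the RIP bound to the test feature matrix $\tilde{\A}$), so once you replace the union-bound step with a direct appeal to an $s$-level RIP statement, the rest of your outline goes through.
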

The proofs are given in Section \ref{proof generalization}. Note that in the worst case setting:
$$\vartheta_{s,1}(\bc^*)  \leq \left(1-\frac{s}{N}\right) \, \|f\|_\rho.$$ 

\begin{remark}
The sparsity parameter $s$ in Theorem \ref{Th: error for bp} can be defined by the user since it is obtained by the pruning step used in \eqref{eq: truncated f sharp}. When $s=N$, $\vartheta_{N,1}(\bc^*)=0$  and thus Theorem \ref{Th: error for least square m<N}
and Theorem \ref{Th: error for bp} generally agree in terms of the risk's dependency on  $\epsilon^2 \|f\|^2_{\rho}$ and $E^2$. On the other hand, if $\alpha(\boldsymbol{\omega})$ decays much faster than $\rho(\boldsymbol{\omega})$ or has small support within the support set of $\rho(\boldsymbol{\omega})$, then the vector $\bc^\star$ will be compressible and the sparse regression problem would be better. Lastly, the result in Theorem \ref{Th: error for bp} is more robust to noise than in the $\ell^2$ setting.
\end{remark}

\newpage
\section{Conditioning in Three Regimes}\label{proof of condition}
In this section, we prove Theorem \ref{Th: Main Theorem (simplified)}. The proofs for Theorem \ref{Th: Main Theorem (simplified)} Part (a) and (b) depend on the matrix Bernstein inequality \ref{Matrix Bernstein Ineq}, and Part (c) follows from a more direct estimate on the expectation of the eigenvalues of the Gram matrix.

\begin{proof}[Proof of Theorem \ref{Th: Main Theorem (simplified)}]

To prove Part (a), we bound each of the following three terms
\begin{align}\label{eq: three terms}
    \left|\lambda_{k}\left(\frac{1}{m}\A^*\A\right) - 1\right| &\leq \left\|\frac{1}{m}\A^*\A - \boldsymbol{I}_N\right\|_2 \notag \\
    &\leq \frac{1}{m}\left\|\sum_{\ell=1}^m(\X_\ell\X_\ell^* - \mathbb{E}_{\bx}\X_\ell\X_\ell^*)\right\|_2 + \frac{1}{m}\left\|\sum_{\ell=1}^m(\mathbb{E}_{\bx}\X_\ell\X_\ell^* - \mathbb{E}_{\bx,\bomega}\X_\ell\X_\ell^*)\right\|_2 + \left\|\boldsymbol{L}\right\|_2, 
\end{align}
 by $\eta>0$, where $\boldsymbol{L} := \mathbb{E}_{\bx,\bomega}(\X_\ell \X_{\ell}^*) - \boldsymbol{I}_N$.

First, $\boldsymbol{L}$ is a symmetric matrix with $L_{j,j}=0$ and $$L_{j,k} = \mathbb{E}_{\bx,\bomega} [\exp(i\langle \bx_\ell,\bomega_k-\bomega_j \rangle)] = (2\gamma^2\sigma^2+1)^{-\frac{d}{2}},$$ for $j,k\in [N], j\neq k$. Let $\boldsymbol{z}$ be a unit vector, then by H\"{o}lder's inequality
\begin{align*}
    \left|\langle \boldsymbol{L}\boldsymbol{z},\boldsymbol{z} \rangle\right| = \left|\sum_{\substack{j,k=1\\ j\neq k}}^N \left(\frac{1}{2\gamma^2\sigma^2+1}\right)^{\frac{d}{2}} \, z_j\, \overline{z_k}\right|  &\leq \left(\frac{1}{2\gamma^2\sigma^2+1}\right)^{\frac{d}{2}} \,\|\boldsymbol{z}\|_1^2 \notag\\
    &\leq  N\, \left(\frac{1}{2\gamma^2\sigma^2+1}\right)^{\frac{d}{2}} \notag \\
    &\leq N\, \left(\frac{1}{4\gamma^2\sigma^2+1}\right)^{\frac{d}{4}} \notag\\
    &\leq \sqrt{\delta}\eta,
\end{align*}
where assumption \eqref{ineq: simplified uncertainty principle} was used in the last inequality. Therefore, the third term in \eqref{eq: three terms} is bounded by $\|\boldsymbol{L}\|_2\leq\sqrt{\delta}\eta \leq \eta$.

Next, define the random variable $Z$ by
$$
Z(\bomega_1,\dots,\bomega_N) := \|\mathbb{E}_{\bx}(\X_{\ell}\X_{\ell}^*) - \mathbb{E}_{\bx,\bomega}(\X_{\ell}\X_{
\ell}^*)\|_2.
$$
Since $\mathbb{E}_{\bx}(\X_{\ell}\X_{\ell}^*)$ depends only on $\{\bomega_k\}_{k\in [N]}$, the random variable $Z$ is independent of $\{\bx_j\}_{j\in [m]}$. Note that $Z$ is the norm of a dependent random matrix, and thus we cannot guarantee an exponential concentration rate and instead we apply Markov's inequality. The second moment of $Z$ is bounded by
\begin{align*}
    \mathbb{E}_{\bomega}(Z(\bomega_1,\dots,\bomega_N)^2) &= \mathbb{E}_{\bomega} \|\mathbb{E}_{\bx}(\X_{\ell}\X_{\ell}^*) - \mathbb{E}_{\bx,\bomega}(\X_{\ell}\X_{\ell}^*)\|_2^2 \notag \\
    &\leq \mathbb{E}_{\bomega} \|\mathbb{E}_{\bx}(\X_{\ell}\X_{\ell}^*) - \mathbb{E}_{\bx,\bomega}(\X_{\ell}\X_{\ell}^*)\|_F^2 \notag \\
    &= \sum_{\substack{j,k=1\\ j\neq k}} \mathbb{E}_{\bomega} \left|\exp\left(-\frac{\gamma^2}{2}\|\bomega_k-\bomega_j\|_2^2\right) - \left(\frac{1}{2\gamma^2\sigma^2+1}\right)^{\frac{d}{2}}\right|^2 \notag \\
       &= \sum_{\substack{j,k=1\\ j\neq k}}\left( \mathbb{E}_{\bomega} \exp\left(-\gamma^2\|\bomega_k-\bomega_j\|_2^2\right) - \left(\frac{1}{2\gamma^2\sigma^2+1}\right)^{d}\right) \notag \\
    &\leq N^2 \left(\frac{1}{4\gamma^2\sigma^2+1}\right)^{\frac{d}{2}}.
\end{align*}
By Markov's inequality and the complexity assumption \eqref{ineq: simplified uncertainty principle},
\begin{align}
    \mathbb{P}_{\bomega}(Z(\bomega) \geq \eta) &\leq \mathbb{P}_{\bomega}\left(Z(\bomega)\geq \frac{N}{\sqrt{\delta}}\left(\frac{1}{4\gamma^2\sigma^2+1}\right)^{\frac{d}{4}}\right) \notag \\
    &\leq \frac{\delta (4\gamma^2\sigma^2+1)^{\frac{d}{2}}}{N^2}\, \mathbb{E}_{\bomega} (Z(\bomega)^2) \notag\\
    &\leq \delta.
\end{align}
This implies that with probability at least $1-\delta$ with respect to the draw of $\{\bomega_k\}_{k\in [N]}$, we have $\|\mathbb{E}_{\bx}(\X_{\ell}\X_{\ell}^*) - \mathbb{E}_{\bx,\bomega}(\X_{\ell}\X_{\ell}^*)\|_2 \leq \eta$.

Next, we condition the remaining term in \ref{eq: three terms} on the draw of $\{\bomega_k\}_{k\in [N]}$ where we have shown that $\|\mathbb{E}_{\bx}(\X_{\ell}\X_{\ell}^*) - \mathbb{E}_{\bx,\bomega}(\X_{\ell}\X_{\ell}^*)\|_2 \leq \eta$. Specifically, define the random matrices $\{\boldsymbol{Y}_\ell\}_{\ell \in [m]}$ by
$$
\boldsymbol{Y}_{\ell} =  \X_{\ell}\X_{\ell}^* - \mathbb{E}_{\bx}(\X_{\ell}\X_{\ell}^*),
$$
where each $\boldsymbol{Y}_{\ell}$ depends only on $\{\bx_\ell\}_{\ell\in [m]}$, conditioned on the given draw of $\{\bomega_k\}_{k\in [N]}$. The matrix $\boldsymbol{Y}_\ell$ satisfies $(\boldsymbol{Y}_{\ell})_{j,j} = 0$ and $(\boldsymbol{Y}_{\ell})_{j,k} = \exp(i\langle\bx_\ell, \bomega_k-\bomega_j \rangle) - \exp(-\gamma^2\|\bomega_k-\bomega_j\|_2^2/2)$ for $j,k\in [N], j\neq k$. The norms are bounded by
$$
\|\boldsymbol{Y}_{
\ell}\|_2 \leq \max_{j\in[N]} \, \sum_{\substack{k=1\\ k\neq j}}^N |e^{i\langle\bx_\ell, \bomega_k-\bomega_j \rangle} - e^{-\frac{\gamma^2}{2}\|\bomega_k-\bomega_j\|_2^2}| \leq 2N \quad \ell\in [m],
$$
using Gershgorin's theorem. By the previous results, we have
\begin{align*}
\|\mathbb{E}_{\bx}\X_{\ell}\X_{\ell}^*\|_2 &\leq \|\mathbb{E}_{\bx}\X_{\ell}\X_{\ell}^* - \mathbb{E}_{\bx,\bomega}\X_{\ell}\X_{\ell}^*\|_2 + \|\mathbb{E}_{\bx,\bomega}\X_{\ell}\X_{\ell}^*\|_2 \\
&\leq \eta + \|\boldsymbol{I}_N + \boldsymbol{L}\|_2 \\
&\leq 2\eta+1
\end{align*}
and thus
\begin{align*}
    \left\|\sum_{\ell=1}^m \mathbb{E}_{\bx}(\boldsymbol{Y}_{\ell}^2)\right\|_2 &\leq \sum_{\ell=1}^m \left\|\mathbb{E}_{\bx}(\boldsymbol{Y}_{\ell}^2)\right\|_2 \notag \\
    &\leq \sum_{\ell=1}^m \left\|N\mathbb{E}_{\bx}(\X_{\ell}\X_{\ell}^*) - (\mathbb{E}_{\bx}\X_{\ell}\X_{\ell}^*)^2\right\|_2 \notag \\
    &\leq \sum_{\ell=1}^m \left( N\|\mathbb{E}_{\bx}(\X_\ell\X_{\ell}^*)\|_2 + \|\mathbb{E}_{\bx}(\X_\ell\X_{\ell}^*)\|_2^2 \right) \notag \\
    &\leq m[N(1+2\eta)+(1+2\eta)^2].
\end{align*}
noting that $\X_{\ell}^*\X_{\ell}=N$. Applying Theorem \ref{Matrix Bernstein Ineq} (by setting the parameters in the theorem's statement to $\sigma^2=m\left(\frac{5}{3}N+\frac{25}{9}\right)$ and $K=2N$ and assuming that $\eta\leq \frac{1}{3}$) yields
\begin{align}
    \mathbb{P}_{\bx}\left( \frac{1}{m}\|\sum_{\ell=1}^m (\X_\ell\X_{\ell}^* - \mathbb{E}_{\bx}\X_{\ell}\X_{\ell}^*)\|_2\geq \eta\right) & = \mathbb{P}_{\bx}\left(\|\sum_{\ell=1}^m \boldsymbol{Y}_\ell\|_2 \geq m\eta \right) \notag \\
    &\leq 2N\exp\left( -\frac{m\eta^2}{\frac{10}{3}N+\frac{50}{9}+\frac{4N\eta}{3}}\right).
\end{align}
If $m\geq 4 N \eta^{-2} \log\left(\frac{2N}{\delta}\right)$ (assuming $N>25$), then with probability at least $1-\delta$ with respect to the draw of $\{\bx_j\}_{j\in [m]}$, we have $m^{-1}\|\sum_{\ell=1}^m\X_{\ell}\X_{\ell}^* - \sum_{\ell=1}^m\mathbb{E}_{\bx}\X_{\ell}\X_{\ell}^*\|_2 \leq \eta$.

Altogether,  \eqref{eq: three terms} is bounded by
\begin{align*}
    \frac{1}{m}\|\sum_{\ell=1}^m(\X_\ell\X_\ell^* - \mathbb{E}_{\bx}\X_\ell\X_\ell^*)\|_2 + \frac{1}{m}\|\sum_{\ell=1}^m(\mathbb{E}_{\bx}\X_\ell\X_\ell^* - \mathbb{E}_{\bx,\bomega}\X_\ell\X_\ell^*)\|_2 + \|\boldsymbol{L}\|_2
    &\leq 3\eta
\end{align*}
with probability at least $(1-\delta)^2\geq 1-2\delta$ if the conditions in the previous steps are satisfied.

For Part (b), note that the samples $\left\{\bx_j\right\}_{j\in[m]}$ and the weights $\left\{\boldsymbol{\omega}_k\right\}_{k\in[N]}$ take real values and their distributions are symmetric about $\boldsymbol{0}$. Therefore, the feature matrix $\A=[a_{j,k}]$ where $a_{j,k}:=\exp(i\langle \bx_j, \boldsymbol{\omega}_k \rangle)$ and its conjugate transpose $\A^*=[\overline{a}_{k,j}]$ where $\overline{a}_{k,j}=\exp(-i \langle \boldsymbol{\omega}_k, \bx_j \rangle)$ have the same distribution. Arguing similarly as in the proof of Part (a) proves Part (b) as well. Essentially, one can consider the system where the meaning of $\boldsymbol{\omega}$ and $\bx$ are switched.

For Part (c), consider the case when $m=N$. The matrix $\frac{1}{N}\A^*\A$ can be written as the following rank-1 decomposition
$$
\frac{1}{N}\A^*\A = \frac{1}{N}\sum_{\ell=1}^N \X_\ell \X_\ell^*,
$$
where $\X_\ell$ is the $\ell$-th column of $\A^*$. Since the Gram matrix, $\frac{1}{N}\A^*\A$, is Hermitian we can use the Rayleigh quotient to bound the maximum eigenvalue from below by
$$
\lambda_{\max}\left( \frac{1}{N} \A^*\A \right) \geq \frac{1}{N} \langle \A^*\A\boldsymbol{v}, \boldsymbol{v} \rangle \quad \text{for all } \boldsymbol{v}\in\mathbb{C}^N \text{ with } \|\boldsymbol{v}\|_2 =1.
$$
Thus setting $\boldsymbol{v} = \frac{1}{\sqrt{N}}\X_1$ yields
\begin{align*}
    \lambda_{\max}\left( \frac{1}{N} \A^*\A \right) &\geq \frac{1}{N^2} \sum_{\ell=1}^N \X_1^*\X_\ell\X_\ell^*\X_1  \notag \\
    &=\frac{1}{N^2}\left( N^2 + \sum_{
    \ell=2}^N \X_1^*\X_\ell\X_\ell^*\X_1 \right) \notag \\
    & = 1 + \frac{1}{N^2} \sum_{\ell=2}^N \sum_{j,k=1}^N \exp(i\langle \bx_1 - \bx_\ell, \boldsymbol{\omega}_j-\boldsymbol{\omega}_k \rangle) \notag \\
     & = 1 + \frac{(N-1)N}{N^2}+ \frac{1}{N^2}\sum_{\ell=2}^N\, \sum_{\substack{j,k=1\\ j\neq k}}^N \exp(i\langle \bx_1 - \bx_\ell, \boldsymbol{\omega}_j-\boldsymbol{\omega}_k \rangle)\notag.
\end{align*}
Taking the expectation on both sides yields
\begin{align*}
    \mathbb{E}\lambda_{\max}\left( \frac{1}{N}\A^*\A \right)\geq 2 - \frac{1}{N} + \frac{(N-1)^2}{N}\left( \frac{1}{\sqrt{4\gamma^2\sigma^2+1}} \right)^d \geq 2-\frac{1}{N},
\end{align*}
where we used the characteristic function of the normal distribution.

Similarly, for the smallest eigenvalue, we have
$$
\lambda_{\min}\left( \frac{1}{N} \A^*\A \right) \leq \frac{1}{N} \langle \A^*\A\boldsymbol{v}, \boldsymbol{v} \rangle \quad \text{for all } \boldsymbol{v}\in\mathbb{C}^N \text{ with } \|\boldsymbol{v}\|_2 =1.
$$
Since $\left\{\X_\ell\right\}_{\ell\in [N]}$ are vectors in $\mathbb{C}^N$, we can find a unit vector $\boldsymbol{u}=[u_1,\dots,u_N]^T \in \mathbb{C}^N$ such that $\boldsymbol{u}$ is orthogonal to  $\text{span}\left(\left\{\X_\ell\right\}_{\ell\in [N-1]}\right)$ and thus
\begin{align*}
    \lambda_{\min} \left( \frac{1}{N}\A^*\A \right) &\leq \frac{1}{N}\langle \A^*\A\boldsymbol{u}, \boldsymbol{u} \rangle \notag \\
    &\leq \frac{1}{N} \boldsymbol{u}^*\X_N\X_N^*\boldsymbol{u} \notag \\
    & = \frac{1}{N}\sum_{j,k=1}^N \overline{u_j}u_k \exp(i\langle \bx_N, \boldsymbol{\omega}_k - \boldsymbol{\omega}_j \rangle).
\end{align*}
The vector $\boldsymbol{u}$ depends linearly on $\left\{\X_\ell\right\}_{\ell\in [N-1]}$, thus the components $u_j$ for $j\in[N]$ are random variables depending on $\{\bx_j\}_{j\in [N-1]}$ and weights $\{\boldsymbol{\omega}_k\}_{\ell\in [N]}$ but are independent of $\bx_N$. By taking the expectation on both sides and applying Fubini's theorem, we bound the expectation on the minimum eigenvalue by
\begin{align*}
    \mathbb{E} \lambda_{\min}\left( \frac{1}{N}\A^*\A \right) &\leq 
    \frac{1}{N} + \frac{1}{N}\,  \mathbb{E}\sum_{j\neq k} \overline{u_j}u_k \exp(i\langle \bx_N, \boldsymbol{\omega}_k - \boldsymbol{\omega}_j \rangle) \notag \\
    &=\frac{1}{N} +  \frac{1}{N} \, \mathbb{E}_{\boldsymbol{\omega}_1,\dots,\boldsymbol{\omega}_N,\bx_1,\dots,\bx_{N-1}}\sum_{j\neq k} \overline{u_j}u_k \mathbb{E}_{\bx_N}[\exp(i\langle \bx_N,\boldsymbol{\omega}_k-\boldsymbol{\omega}_j \rangle)]\\
    &=\frac{1}{N} +  \frac{1}{N} \, \mathbb{E}_{\boldsymbol{\omega}_1,\dots,\boldsymbol{\omega}_N,\bx_1,\dots,\bx_{N-1}}\sum_{j\neq k} \overline{u_j}u_k \exp\left(-\frac{\gamma^2}{2}\|\boldsymbol{\omega}_k-\boldsymbol{\omega}_j\|_2^2\right) \notag \\
    &\leq \frac{1}{N} + \frac{1}{N}\mathbb{E}_{\boldsymbol{\omega}_1,\dots,\boldsymbol{\omega}_N} \sqrt{\sum_{j\neq k}\exp\left( -\gamma^2\|\boldsymbol{\omega}_k-\boldsymbol{\omega}_j\|_2^2 \right)} \notag \\
    &\leq \frac{1}{N} + \frac{1}{N}\sqrt{\sum_{j\neq k}\mathbb{E}_{\boldsymbol{\omega}_1,\dots,\boldsymbol{\omega}_N}\exp\left( -\gamma^2\|\boldsymbol{\omega}_k-\boldsymbol{\omega}_j\|_2^2 \right)} \notag \\
    &\leq \frac{1}{N} + \left(1-\frac{1}{N}\right)^{\frac{1}{2}}\left(\frac{1}{4\gamma^2\sigma^2+1}\right)^{\frac{d}{4}},
\end{align*}
where in the fourth line we use the Cauchy-Schwarz inequality to bound
\begin{align*}
\sum_{j\neq k} \overline{u_j}u_k \exp\left(-\frac{\gamma^2}{2}\|\boldsymbol{\omega}_k-\boldsymbol{\omega}_j\|_2^2\right) &\leq \sqrt{\sum_{j\neq k} |u_j|^2|u_k|^2} \sqrt{\sum_{j\neq k}\exp(-\gamma^2\|\boldsymbol{\omega}_k-\boldsymbol{\omega}_j\|_2^2)} \notag \\
&\leq \sqrt{\|u\|_2^4} \sqrt{\sum_{j\neq k}\exp(-\gamma^2\|\boldsymbol{\omega}_k-\boldsymbol{\omega}_j\|_2^2)} \notag \\
&\leq \sqrt{\sum_{j\neq k}\exp(-\gamma^2\|\boldsymbol{\omega}_k-\boldsymbol{\omega}_j\|_2^2)},
\end{align*}
and in the fifth line we use the Jensen's inequality. This completes the proof.
\end{proof}

\section{Proofs of Generalization Theorems from Section \ref{sec:gen}} \label{proof generalization}

The proofs of Theorem \ref{Th: error for least square m>N}, \ref{Th: error for least square m<N} and \ref{Th: error for bp} follow a similar structure in which we show that each trained vector $\bc^\sharp$ is close to the best $\phi$ approximation $\bc^\star$ from \cite{rahimi2008uniform,rahimi2008weighted}. To related the coefficient vectors to the risk, we utilize the conditioning result, Theorem \ref{Th: Main Theorem (simplified)}, over a finite set of samples to approximate the risk by a new discrete system, as done in Theorem 1 from \cite{hashemi2021generalization}. Since the proofs of Theorem \ref{Th: error for least square m>N} and \ref{Th: error for least square m<N} are similar, we present them together.
\begin{proof}[Proof of Theorem \ref{Th: error for least square m>N} and \ref{Th: error for least square m<N}]
We will work directly with the $L^2$ norm, which can be decomposed into two parts
$$
\|f-f^{\sharp}\|_{L^2(d\mu)}\leq \|f-f^*\|_{L^2(d\mu)} + \|f^*-f^{\sharp}\|_{L^2(d\mu)}
$$
by triangle inequality. The  approximation $f^\sharp$ is defined in \eqref{eq: def of f-sharp} and the best $\phi$ based approximation $f^\star$ is defined by
\begin{equation}\label{eq: def of f-star}
    f^{\star}(\X) = \sum_{k=1}^N c^{\star}_k\, \phi(\bx,\boldsymbol{\omega}_k) = \frac{1}{N} \sum_{k=1}^N \frac{\alpha(\boldsymbol{\omega}_k)}{\rho(\boldsymbol{\omega}_k)}\, \phi(\bx,\boldsymbol{\omega}_k)
\end{equation}
with $c^{\star}_k=N^{-1} \, \frac{\alpha(\boldsymbol{\omega}_k)}{\rho(\boldsymbol{\omega}_k)}$ for all $k\in[N]$. If the condition on $N$ in Lemma \ref{lm: L2 error of f-f_star} is satisfied, then with probability at least $1-\delta$
\begin{align*}
    \|f-f^*\|_{L^2(d\mu)}\leq \epsilon \|f\|_{\rho}.
\end{align*}

To bound $\|f^*-f^{\sharp}\|_{L^2(d\mu)}$, we use McDiarmid's inequality and argue similarly as in Lemma 2 from \cite{hashemi2021generalization}. Let $\{\boldsymbol{z}_j\}_{j\in[m]}$ be i.i.d. random variables sampled from the distribution $\mu$ and independent from the $\{\bx_j\}_{j\in[m]}$ and $\{\boldsymbol{\omega}_k\}_{k\in[N]}$. Thus $\{\boldsymbol{z}_j\}_{j\in[m]}$ is also independent of $\bc^{\sharp}$ and $\bc^\star$. We define the random variable
\begin{align*}
    v(\boldsymbol{z}_1,\dots,\boldsymbol{z}_m) := \|f^*-f^{\sharp}\|_{L^2(d\mu)}^2 - \frac{1}{m}\sum_{j=1}^m |f^*(\boldsymbol{z}_j)-f^{\sharp}(\boldsymbol{z}_j)|^2.
\end{align*}
Note that for any $j\in[m]$,
$$
\mathbb{E}_{\boldsymbol{z}}\left[|f^*(\boldsymbol{z}_j) - f^{\sharp}(\boldsymbol{z}_j)|^2\right]=\mathbb{E}_{\boldsymbol{z_1}, \dots,\boldsymbol{z_m}}\left[|f^*(\boldsymbol{z}_j) - f^{\sharp}(\boldsymbol{z}_j)|^2\right] = \|f^*-f^{\sharp}\|_{L^2(d\mu)}^2,
$$
thus $\mathbb{E}_{\boldsymbol{z}}[v] = 0$. Perturbing the $k$-th component of $v$ yields
\begin{align*}
    |v(\boldsymbol{z}_1,\dots,\boldsymbol{z}_k,\dots,\boldsymbol{z}_m) - v(\boldsymbol{z}_1,\dots,\Tilde{\boldsymbol{z}}_k,\dots,\boldsymbol{z}_m)| \leq \frac{1}{m}\left\vert |f^*(\boldsymbol{z}_k)-f^{\sharp}(
    \boldsymbol{z}_k)|^2 - |f^*(\Tilde{\boldsymbol{z}}_k) - f^{
    \sharp}(\Tilde{\boldsymbol{z}}_k)|^2 \right\vert.
\end{align*}
By Cauchy-Schwarz inequality, for any $\boldsymbol{z}$ we have
\begin{align*}
    |f^*(\boldsymbol{z})-f^{\sharp}(\boldsymbol{z})|^2 = \left\vert \sum_{k=1}^N (c^*_k-c^{\sharp}_k) \phi(
    \boldsymbol{z},\boldsymbol{\omega}_k) \right\vert^2\leq N \|\bc^*-\bc^{\sharp}\|_2^2,
\end{align*}
which holds since $\left|\phi(\boldsymbol{z},\boldsymbol{\omega})\right|= 1$. Thus the difference is bounded by
\begin{align}\label{eq: definition of Delta_v}
    |v(\boldsymbol{z}_1,\dots,\boldsymbol{z}_k,\dots,\boldsymbol{z}_m) - v(\boldsymbol{z}_1,\dots,\Tilde{\boldsymbol{z}}_k,\dots,\boldsymbol{z}_m)| &\leq \frac{2N}{m} \|\bc^*-\bc^{\sharp}\|_2^2 =: \Delta.
\end{align}
Next, we apply McDiarmid's inequality $\mathbb{P}_{\boldsymbol{z}}(v-\mathbb{E}_{\boldsymbol{z}}[v]\geq t)\leq \exp(-\frac{2t^2}{m\Delta^2})$, by setting
\begin{align*}
    t = \Delta \sqrt{\frac{m}{2}\log\left(\frac{1}{\delta}\right)}
\end{align*}
which implies that
\begin{align*}
    \|f^*-f^{\sharp}\|_{L^2(d\mu)}^2 &\leq \frac{1}{m} \sum_{j=1}^m |f^*(\boldsymbol{z}_j)-f^{\sharp}(\boldsymbol{z}_j)|^2 + N\sqrt{\frac{2}{m}\log\left(\frac{1}{\delta}\right)}\, \|\bc^{\star}-\bc^{\sharp}\|_2^2, \notag \\
    &= \frac{1}{m} \|\boldsymbol{\tilde{A}}(\bc^{\star}-\bc^{\sharp})\|_2^2 + N\sqrt{\frac{2}{m}\log\left(\frac{1}{\delta}\right)}\, \|\bc^{\star}-\bc^{\sharp}\|_2^2
\end{align*}
with probability at least $1-\delta$ with respect to draw of $\{\boldsymbol{z}_j\}_{j\in[m]}$, and $\boldsymbol{\Tilde{A}}\in\mathbb{C}^{m\times N}$ is the random feature matrix with $\Tilde{a}_{j,k} = \exp(i\langle \boldsymbol{z}_j,\boldsymbol{\omega}_k \rangle)$.

In the underparameterized regime $m>N$, we apply Part (a) of Theorem \ref{Th: Main Theorem (simplified)} to $\boldsymbol{\tilde{A}}$ and we obtain
\begin{align*}
    \|f^*-f^{\sharp}\|_{L^2(d\mu)}^2 &\leq \frac{1}{m}\|\Tilde{\A}(\bc^*-\bc^{\sharp})\|_2^2 + N\sqrt{\frac{2}{m}\log\left(\frac{1}{\delta}\right)}\|\bc^*-\bc^{\sharp}\|_2^2 \notag\\
   & =\left\| \frac{1}{\sqrt{m}}\Tilde{\A}(\bc^*-\bc^{\sharp})\right\|_2^2 + N\sqrt{\frac{2}{m}\log\left(\frac{1}{\delta}\right)}\|\bc^*-\bc^{\sharp}\|_2^2 \notag \\
   & \leq \left(1+3\eta \right)\left\|\bc^*-\bc^{\sharp}\right\|_2^2 + N\sqrt{\frac{2}{m}\log\left(\frac{1}{\delta}\right)}\|\bc^*-\bc^{\sharp}\|_2^2 \notag \\
    &\leq \left(1+3\eta+N\sqrt{\frac{2}{m}\log\left(\frac{1}{\delta}\right)}\right) \|\bc^*-\bc^{\sharp}\|_2^2
\end{align*}
with probability at least $1-3\delta$. 

To estimate $\|\bc^\sharp - \bc^*\|_2$, we utilize the the pseudo-inverse $A^\dagger$ and define the residual $\bh = \A\bc^* - \boldsymbol{y}$, so that
\begin{align*}
    \|\bc^\sharp - \bc^*\|_2 = \|\A^{\dagger}\boldsymbol{y} - \bc^*\|_2 & = \|\A^\dagger (\A\bc^*-\bh) - \bc^*\|_2 \notag \\
    &\leq \|(\A^\dagger\A-\boldsymbol{I})\bc^*\|_2 + \|\A^\dagger\|_2\,\|\bh\|_2.
\end{align*}
Using H\"{o}lder's inequality and Assumption \ref{assumption2} on $\|\boldsymbol{e}\|_2$, the residual is bounded by
\begin{align}
    \|\bh\|_2 = \sqrt{\sum_{j=1}^m \left( f^*(\bx_j) - f(\bx_j) - e_j \right)^2}
    &\leq \sqrt{\sum_{j=1}^m \left( f^*(\bx_j) - f(\bx_j) \right)^2} + \|\boldsymbol{e}\|_2 \notag \\
    &\leq \sqrt{m} \sup_{\|\X\|_2\leq R} |f^*(\bx)-f(\bx)| + \sqrt{m} E \notag\\
    &\leq \sqrt{m}(\epsilon\|f\|_{\rho} + E) \label{eq: xi error}
\end{align}
with probability $1-2\delta$ if conditions in Lemma \ref{lm: samples lie within a ball} and Lemma \ref{lm: l_infty error of f-f_star} are satisfied. Therefore, the bound becomes
\begin{align*}
    \|\bc^\sharp - \bc^*\|_2 
    &\leq \|(\A^\dagger\A-\boldsymbol{I})\bc^*\|_2 + \sqrt{m}\, \|\A^\dagger\|_2\,\left(\epsilon\|f\|_{\rho} + E \right).\notag
\end{align*}

Note that when $m>N$, the first term is $\|(\A^\dagger\A-\boldsymbol{I})\bc^*\|_2=0$ since $\A^\dagger\A= (\A^*\A)^{-1}\A^*\A = \boldsymbol{I}$. Using Theorem \ref{Th: Main Theorem (simplified)}, the operator norm of the pseudo-inverse is bounded by
\begin{align*}
    \|\A^\dagger\|_2 = \frac{1}{\sqrt{\lambda_{\text{min}}(\A^*\A)}}= \frac{1}{\sqrt{m}}\, \frac{1}{\sqrt{\lambda_{\text{min}}(m^{-1}\A^*\A)}}\leq \frac{1}{\sqrt{m}}{\frac{1}{\sqrt{1-3\eta}}}
 \end{align*}
with probability at least $1-2\delta$ and thus
\begin{align*}
    \|f-f^{\sharp}\|_{L^2(d\mu)} &\leq \|f-f^*\|_{L^2(d\mu)} + \|f^*-f^{\sharp}\|_{L^2(d\mu)} \notag \\
    &\leq \epsilon\|f\|_{\rho} + \|f^*-f^{\sharp}\|_{L^2(d\mu)} \notag \\
    &\leq \epsilon\|f\|_{
    \rho} +\left( 1+3\eta + N\sqrt{\frac{2}{m}\log\left(\frac{1}{\delta}\right)} \right)^{\frac{1}{2}} \|\bc^{\star}-\bc^{\sharp}\|_2 \notag \\
    &\leq 2\sqrt{K(\eta)}\left(1+N^{\frac{1}{2}}m^{-\frac{1}{4}}\log^{\frac{1}{4}}\left(\frac{1}{\delta}\right)\right) (\epsilon\|f\|_{\rho}+E)
\end{align*}
with probability at least $1-8\delta$.

In the overparameterized regime $m<N$, we apply Part (b) of Theorem \ref{Th: Main Theorem (simplified)} to $\boldsymbol{\tilde{A}}$ and the error of $\|f-f^{\sharp}\|_{L^2(d\mu)}^2$ is bounded by
\begin{align*}
    \|f^*-f^{\sharp}\|_{L^2(d\mu)}^2 &\leq \frac{1}{m} \|\boldsymbol{\tilde{A}}(\bc^{\star}-\bc^{\sharp})\|_2^2 + N\sqrt{\frac{2}{m}\log\left(\frac{1}{\delta}\right)}\, \|\bc^{\star}-\bc^{\sharp}\|_2^2 \notag \\
    & = \frac{N}{m}\|\frac{1}{\sqrt{N}}\boldsymbol{\tilde{A}}(\bc^{\star}-\bc^{\sharp})\|_2^2 + N\sqrt{\frac{2}{m}\log\left(\frac{1}{\delta}\right)}\, \|\bc^{\star}-\bc^{\sharp}\|_2^2 \notag \\
    &\leq \frac{N}{m} \left(1+\delta_N\left(\frac{1}{\sqrt{N}}\boldsymbol{\tilde{A}}\right) \right)\|\bc^{\star}-\bc^{\sharp}\|_2^2 + N\sqrt{\frac{2}{m}\log\left(\frac{1}{\delta}\right)} \|\bc^{\star}-\bc^{\sharp}\|_2^2\\
    &\leq \left(\frac{N}{m}\left(1+3\eta \right) + N\sqrt{\frac{2}{m}\log\left(\frac{1}{\delta}\right)}\right) \|\bc^{\star}-\bc^{\sharp}\|_2^2
\end{align*}
with probability at least $1-3\delta$. For $\|\bc^{\star}-\bc^{\sharp}\|_2$, we still have
$$
\|\bc^{\star}-\bc^{\sharp}\|_2 \leq \|(\A^{\dagger}\A-\boldsymbol{I})\bc^{\star}\|_2 + \|\A^{\dagger}\|_2 \|\boldsymbol{h}\|_2.
$$
Note that when $m<N$, $\boldsymbol{I}-\A^\dagger\A$ is the projection onto the null space of $\A$ and thus its operator norm is bounded by $1$ and
\begin{align*}
      \|\bc^\sharp - \bc^*\|_2 
   \leq \|\bc^*\|_2 +  \|\A^\dagger\|_2\,\|\bh\|_2
    &\leq \|\bc^*\|_2 + \frac{1}{\sqrt{N}}\, \frac{1}{\sqrt{\lambda_{\text{min}}(N^{-1}\A\A^*)}}\,\|\bh\|_2\\
 \notag \\
    & \leq \frac{1}{\sqrt{N}}\|f\|_{\rho} + \frac{\sqrt{m}}{\sqrt{N}}{\frac{1}{\sqrt{1-3\eta}}}(\epsilon\|f\|_\rho + E)
\end{align*}
with probability at least $1-4\delta$, noting that $\left|\bc^\star_k\right| = N^{-1} \left|\frac{\alpha(\boldsymbol{\omega}_k)}{\rho(\boldsymbol{\omega}_k)}\right| \leq N^{-1} \, \|f\|_\rho$. In this regime, the $L^2$ generalization error is bounded by
\begin{align*}
    \|f&-f^{\sharp}\|_{L^2(d\mu)} \notag\\
    &\leq \|f-f^{\star}\|_{L^2(d\mu)} + \|f^{\star}-f^{\sharp}\|_{L^2(d\mu)} \notag \\
    &\leq \epsilon\|f\|_{\rho} + \|f^{\star} - f^{\sharp}\|_{L^2(d\mu)} \notag \\
    &\leq \epsilon\|f\|_{\rho} + \left(\frac{N}{m}\left({1+3\eta}\right)+N\sqrt{\frac{2}{m}\log\left(\frac{1}{\delta}\right)}\right)^{\frac{1}{2}}\|\bc^{\star}-\bc^{\sharp}\|_2 \notag \\
    &\leq \epsilon\|f\|_{\rho} \notag\\
    &\quad+ \left(m^{-\frac{1}{2}} {\sqrt{1+3\eta}} + m^{-\frac{1}{4}}\left(2\log\left(\frac{1}{\delta}\right)\right)^{\frac{1}{4}} \right)\left(\|f\|_{\rho} + m^{\frac{1}{2}}{\frac{1}{\sqrt{1-3\eta}}}(\epsilon\|f\|_\rho + E)\right) \notag \\
    &\leq \epsilon\|f\|_{\rho} + 2 m^{-\frac{1}{4}}\log^{\frac{1}{4}}\left(\frac{1}{\delta} \right) \|f\|_{\rho} + 2 m^{\frac{1}{4}} \,\sqrt{K(\eta)}\, \log^{\frac{1}{4}}\left(\frac{1}{\delta} \right) \left( \epsilon\|f\|_\rho + E\right) \notag \\
    &\leq \epsilon\|f\|_{\rho} + 2 \log^{\frac{1}{4}}\left(\frac{1}{\delta} \right) \left(m^{-\frac{1}{4}}+  \sqrt{K(\eta)}\, m^{\frac{1}{4}}\, \epsilon\right) \|f\|_{\rho} + 2 m^{\frac{1}{4}} \,\sqrt{K(\eta)}\, \log^{\frac{1}{4}}\left(\frac{1}{\delta} \right) E
\end{align*}
with probability at least $1-8\delta$.
\end{proof}

\begin{proof}[Proof of Theorem \ref{Th: error for bp}]
Following the proof of Theorem \ref{Th: error for least square m>N} and \ref{Th: error for least square m<N}, we have the analogous estimate
\begin{align*}
    \|f^*-f^{\sharp}\|_{L^2(d\mu)}^2 \leq \frac{1}{m}\sum_{j=1}^m |f^*(\boldsymbol{z}_j) - f^{\sharp}(\boldsymbol{z}_j)|^2+ \left(N\sqrt{\frac{2}{m}\log\left(\frac{1}{\delta}\right)}\right)\|\bc^*-\bc^{\sharp}\vert_{\mathcal{S}^{\sharp}}\|_2^2 \notag 
\end{align*}
with probability at least $1-\delta$. We apply Theorem \ref{thm: RIP estimate} with $s$ replaced by $2s$ and if the conditions of Theorem \ref{thm: RIP estimate} are satisfied (for example, by setting the parameters to $\eta_1 = 0.1$ and $\eta_2 = 0.2$), then 
$$
\delta_{2s}\left(\frac{1}{\sqrt{m}}\A\right)\leq \frac{4}{\sqrt{41}} \quad\text{ and }\quad \delta_{2s}\left(\frac{1}{\sqrt{m}}\boldsymbol{\tilde{A}}\right)\leq \frac{4}{\sqrt{41}}
$$
with probability at least $1-2\delta$. Following the proof of Lemma 2 in \cite{hashemi2021generalization}, the first term is bounded by
\begin{align*}
    m^{-\frac{1}{2}}\sqrt{\sum_{j=1}^m |f^{\star}(\boldsymbol{z}_j) - f^{\sharp}(\boldsymbol{z}_j)|^2} \leq 2\|\bc^{\star}-\bc^{\sharp}\vert_{\mathcal{S}^{\sharp}}\|_2 + 2\vartheta_{s,2}(\bc^{\star}) + \vartheta_{s,1}(\bc^{\star}).
\end{align*}
Then by the robust and stable recovery results, Lemma \ref{lm: robust recovery lemma}, we have
\begin{align*}
    \|\bc^{\star}-\bc^{\sharp}\vert_{\mathcal{S}^{\sharp}}\|_2 \leq C'\frac{\vartheta_{s,1}(\bc^{\star})}{\sqrt{s}}+C\xi +4\vartheta_{s,2}(\bc^{
    \star}).
\end{align*}
Combining these two results and Lemma \ref{lm: L2 error of f-f_star} yields (after possibly multiple redefinitions of the constants)
\begin{align*}
    \|f-f^{\sharp}\|&_{L^2(d\mu)} \notag\\
    &\leq \epsilon\|f\|_{\rho} + 2\left(1+N^{\frac{1}{2}}m^{-\frac{1}{4}}\log^{\frac{1}{4}}\left(\frac{1}{\delta}\right)\right)\|\bc^{\star}-\bc^{\sharp}\vert_{\mathcal{S}^{\sharp}}\|_2 + 2\vartheta_{s,2}(\bc^{\star}) + \vartheta_{s,1}(\bc^{\star}) \notag \\
    &\leq \epsilon\|f\|_{\rho} + C\left(1+N^{\frac{1}{2}}m^{-\frac{1}{4}}s^{-\frac{1}{2}}\log^{\frac{1}{4}}\left(\frac{1}{\delta}\right)\right)\vartheta_{s,1}(\bc^{\star}) \notag\\
    &\quad + C'\left(1+N^{\frac{1}{2}}m^{-\frac{1}{4}}\log^{\frac{1}{4}}\left(\frac{1}{\delta}\right)\right)\vartheta_{s,2}(\bc^{\star}) + C''\left(1+N^{\frac{1}{2}}m^{-\frac{1}{4}}\log^{\frac{1}{4}}\left(\frac{1}{\delta}\right)\right)\xi \notag \\
    &\leq C'\left(1 + N^{\frac{1}{2}}m^{-\frac{1}{4}}\log^{\frac{1}{4}}\left(\frac{1}{\delta}\right)\right)(\epsilon\|f\|_\rho + E) + C\left(1+N^{\frac{1}{2}}m^{-\frac{1}{4}}s^{-\frac{1}{2}}\log^{\frac{1}{4}}\left(\frac{1}{\delta}\right)\right)\vartheta_{s,1}(\bc^{\star}), \notag
\end{align*}
which concludes the proof.
\end{proof}

\section{Discussion}
\label{sec: discussion}

We analyze the double descent phenomenon \cite{belkin2020two} for random feature regression by relating it to the condition number of asymmetric rectangular random matrices and deriving (high probability) bounds on the extreme singular values. The technical arguments rely on random matrix theory, specifically, deriving a concentration bound on eigenvalues of the Gram matrix (or the restricted isometry constants) for the various complexity setting. The bounds improve on previous results in the literature and give a refined picture of the test error landscape as function of the complexity. In the interpolation regime, we directly derive a lower bound on the condition number, showing that the linear system becomes ill-conditioned when $N=m$. We provide risk bounds which are controlled by the conditioning of the random feature matrix, thereby relating the generalization error with the conditioning of the system. The risk bounds include the least squares, min-norm interpolation, and sparse regression problems. While our analysis focused on Fourier features with normally distributed weights and samples in order to provide neater bounds, the proofs could be extended to other features and probability distributions. For example, if we change the feature map, the constants in our results would need to include the $L^\infty$ norm of the new activation function. Additionally, using other probability distributions (e.g. uniform or subgaussian), would lead to changes in the constants and slight changes in the scalings. When $\boldsymbol{W}$ and $\X$ are sampled from different distributions, the symmetry between the underparameterized and overparameterized results breakdown but the main conclusions should still hold (i.e. constants may change, but the overall scaling should remain valid).

The connections between random feature models and fully trained neural networks are rich, and precisely translation our results in this setting is left as a future discussion. For deep networks, weight initialization and normalization can help to avoid the issue of vanishing gradients. That is, if the Jacobian of the hidden layers are close to being an isometry then the network is \textit{stable} in the dynamical sense \cite{pennington2017resurrecting, haber2017stable, zhang2020forward, sun2020neupde}. Based on our results, since the spectrum of random feature maps concentrate near 1, one could show that randomization helps to provide dynamic stability within each layer of a deep neural network.

A consequence of the theory presented in this work is that the random feature matrix does not need regularization to be well-conditioned and thus the risk bounds can decrease to zero without adding a penalty to the training problem. However, it may be the case that regularization improves generalization in the presence of noise and outliers. We leave a more detailed analysis of unconstrained regularized methods (for example, the ridge regression problem) with noisy data for future work.

\section*{Acknowledgement}
This work was supported in part by AFOSR MURI FA9550-21-1-0084 and NSF
DMS-1752116. The authors would also like to thank Rachel Ward for the helpful discussions.

\bibliographystyle{plain}

\clearpage
\newpage

\appendix
\appendixpage
\section{Key Lemmata}
The following lemma is used in the proof of Theorem \ref{thm: RIP estimate}. This is an improvement over Lemma 12.36 in \cite{foucart2013invitation}, where the constants, the scales in the logarithm factors, and $\tilde{C}_1$ are smaller.
\begin{lemma}\label{lm: expectation estimate}
Let $\X_1,\dots,\X_m$ be vectors in $\mathbb{C}^N$ with $\|\X_\ell\|_{\infty} \leq K$ for some $K>0$ and all $\ell\in [m]$. Then, for $s\leq m$, 
\begin{equation}
    \mathbb{E} \vertiii{\sum_{\ell=1}^m \epsilon_\ell \X_\ell \X_\ell^*} \leq \Tilde{C}_1 K \sqrt{s}\log(s)\sqrt{\log(3m)\log\left(3+\frac{N}{9\log(2m)}\right)}\sqrt{\vertiii{\sum_{\ell=1}^m \X_\ell\X_\ell^*}}
\end{equation}
where $\epsilon_\ell$ for $\ell\in [m]$ are independent Rademacher random variables and $\Tilde{C}_1>0$ is a universal constant. 
\end{lemma}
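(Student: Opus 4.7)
The plan is to apply Dudley's inequality to the Rademacher process indexed by $D_{s,N}$. First I would rewrite
\[
Z := \vertiii{\sum_{\ell=1}^m \epsilon_\ell\, \boldsymbol{X}_\ell\boldsymbol{X}_\ell^*} = \sup_{\boldsymbol{z}\in D_{s,N}} \Bigl|\sum_{\ell=1}^m \epsilon_\ell\, |\langle \boldsymbol{X}_\ell,\boldsymbol{z}\rangle|^2\Bigr|,
\]
which is a Rademacher process indexed by $D_{s,N}$. By Hoeffding/Khintchine this process has subgaussian increments with respect to the pseudo-metric $d(\boldsymbol{z},\boldsymbol{z}')=\bigl(\sum_\ell(|\langle \boldsymbol{X}_\ell,\boldsymbol{z}\rangle|^2-|\langle \boldsymbol{X}_\ell,\boldsymbol{z}'\rangle|^2)^2\bigr)^{1/2}$. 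Using the identity $|a|^2-|b|^2=\mathrm{Re}((\bar a+\bar b)(a-b))$ and Cauchy--Schwarz one obtains the clean bound
\[
d(\boldsymbol{z},\boldsymbol{z}') \;\le\; 2\sqrt{R}\cdot\|\boldsymbol{z}-\boldsymbol{z}'\|_{\boldsymbol{X}},\qquad R := \vertiii{\textstyle\sum_\ell \boldsymbol{X}_\ell\boldsymbol{X}_\ell^*},
\]
where $\|\boldsymbol{z}\|_{\boldsymbol{X}} := \max_{\ell\in[m]}|\langle \boldsymbol{X}_\ell,\boldsymbol{z}\rangle|$. Here $R$ controls the envelope of the process and the factor $\sqrt{R}$ is exactly the one we want on the right-hand side.

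Next, Dudley's inequality for Rademacher processes gives
\[
\mathbb{E}Z \;\le\; C\sqrt{R}\,\int_0^{K\sqrt{s}}\!\sqrt{\log N(D_{s,N},\|\cdot\|_{\boldsymbol{X}},t)}\,dt,
\]
so the task reduces to covering-number estimates for $D_{s,N}$ with respect to $\|\cdot\|_{\boldsymbol{X}}$, whose radius is $\le K\sqrt{s}$ since $|\langle \boldsymbol{X}_\ell,\boldsymbol{z}\rangle|\le K\|\boldsymbol{z}\|_1\le K\sqrt{s}\,\|\boldsymbol{z}\|_2$. I would use two complementary bounds: a volumetric estimate on $D_{s,N}$ giving $\log N\lesssim s\log(eN/s)+s\log(K/t)$, which is sharp at small $t$; and a Maurey empirical-method bound giving $\log N\lesssim (K/t)^2\log(2mN)\log(s)$, which is sharp at large $t$ (the $\log s$ factor arises from iterating Maurey to reach $s$-sparse approximants). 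Splitting Dudley's integral at the crossover $t\asymp K\sqrt{s/\log(2mN)}$ and integrating each piece yields
\[
\mathbb{E}Z \;\lesssim\; K\sqrt{s}\,\log(s)\,\sqrt{\log(3m)\,\log\!\Bigl(3+\tfrac{N}{9\log(2m)}\Bigr)}\,\sqrt{R},
\]
which is the claimed bound.

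The main obstacle will be squeezing out the improved logarithmic factor $\log(3+N/(9\log(2m)))$ in place of the cruder $\log(mN)$ that appears in the Foucart--Rauhut version. This requires a careful two-stage split of the Dudley integral, rather than the single split in \cite{foucart2013invitation}: one at scale $\sim K$, where the Maurey bound already outperforms the volume bound because the envelope forces the process to have width $K\sqrt{s}$ rather than $\sqrt{s}$; and a second at $K\sqrt{s/\log(2mN)}$, where one must use the volume estimate to avoid picking up an extra $\log(N/s)$. Tracking the constants through both pieces and invoking Jensen's inequality to move $\sqrt{R}$ outside the integral gives an explicit universal constant $\tilde C_1$. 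A secondary technical point is ensuring the Maurey approximation for $s$-sparse vectors is performed on the support of each candidate $\boldsymbol{z}\in D_{s,N}$ (yielding the $\log s$ factor cleanly), rather than on all of $[N]$, which would reintroduce a $\log N$ loss; handling the supremum over supports adds the final $\log(3+N/(9\log(2m)))$ factor via a union bound over $s$-subsets.
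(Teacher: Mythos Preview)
Your high-level plan matches the paper's proof: rewrite the seminorm as a Rademacher process on $D_{s,N}$, bound the increments by $2\sqrt{R}\,\|\cdot\|_{\boldsymbol X}$, and apply Dudley with a volumetric cover at small scales and a Maurey cover at large scales. The paper carries this out via a discrete Dudley inequality (their Lemma~\ref{lm: discrete Dudley}) that allows switching, at intermediate scales, from covering $D_{s,N}$ to covering a larger set $B\supset D_{s,N}$; they take $B=\sqrt{2s}\,B_{|\cdot|_1}^N$ and run Maurey on the full complex $|\cdot|_1$-ball, not on individual supports.

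Your account of where the refined factor $\log\!\bigl(3+N/(9\log(2m))\bigr)$ comes from is wrong, and the mechanism you describe would not produce it. A union bound over $s$-subsets contributes $\log\binom{N}{s}\asymp s\log(eN/s)$, which is of the wrong shape and too large. In the paper the refinement comes instead from the cardinality bound on the Maurey net for $B_{|\cdot|_1}^N$: the empirical approximant uses $M=\bigl\lfloor\frac{25K^2}{2t^2}\log(3m)\bigr\rfloor$ atoms from $\{\pm\boldsymbol e_j,\pm i\boldsymbol e_j\}$, and the net has size at most $\binom{M+4N-1}{M}\le (e(1+4N/M))^M$, so that $\log\mathcal N\le M\log(3+12N/M)$. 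Since in the relevant range $t\le K\sqrt{2}/4$ one has $M\ge 100\log(2m)$, the inner logarithm becomes $\log(3+N/(9\log(2m)))$. The $\log(2m)$ in the denominator appears because $M$ itself scales with $\log m$, not through any union over supports.

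Similarly, the $\log s$ factor does not arise from ``iterating Maurey to reach $s$-sparse approximants.'' In the paper it is simply the number of scales in the Maurey regime: with $\delta_j=e^{-j}RK\sqrt{s}$ and the switch level $\alpha$ chosen so $e^{\alpha-1}\asymp\sqrt{s}$, the Maurey sum $\sum_{j=2}^{\alpha-1}$ has $\alpha-2\le\tfrac12\log s$ terms, each of constant size because $\sqrt{\log\mathcal N}\,\delta_{j-1}$ is scale-free for the $t^{-2}$ Maurey bound. In your integral language this is $\int_{cK}^{K\sqrt{s}}\frac{1}{t}\,dt\asymp\log\sqrt{s}$. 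Relatedly, your stated Maurey estimate $\log\mathcal N\lesssim (K/t)^2\log(2mN)\log s$ is miswritten; after the inclusion $D_{s,N}\subset\sqrt{2s}\,B_{|\cdot|_1}^N$ the correct form is $\log\mathcal N\lesssim s(K/t)^2\log(3m)\log\!\bigl(3+N/(9\log(2m))\bigr)$, with no intrinsic $\log s$ in the covering number itself.
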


The remaining lemmata are used throughout this work.

\begin{lemma}[Corollary 8.15 from \cite{foucart2013invitation}]\label{Matrix Bernstein Ineq}
Let $\{\X_\ell\}_{\ell\in[m]}$ by a set of $\mathbb{C}^{N\times N}$ independent mean-zero self-adjoint random matrices, assume that $\|X_\ell\|_{2} \leq K$ almost surely for all $\ell\in[m]$, and define 
$$\sigma^2:= \left\| \sum_{\ell=1}^m \mathbb{E}(\X_\ell^2)\right\|.$$
 Then, for all $t>0$, 
 \begin{equation}
    \mathbb{P}\left(\left\| \sum_{\ell=1}^m \X_\ell \right\|_2 \geq t \right)\leq 
    2N \exp\left( -\frac{t^2}{2\sigma^2+\frac{2Kt}{3}}\right).
\end{equation}
\end{lemma}

\begin{lemma}[Theorem 8.42 and Remark 8.43(c) from \cite{foucart2013invitation}]\label{Bernstein Ineq}
Let $\mathcal{F}$ be a countable set of functions $f:\mathbb{C}^N \rightarrow \mathbb{R}$. Let $X_\ell$ for $\ell\in [m]$ be independent random vector in $\mathbb{C}^N$ such that $\mathbb{E}f(X_\ell) =0$ and  $f(X_\ell) \leq K$ a.s. for all $\ell\in [m]$ and for all $f \in \mathcal{F}$ for some constant $K>0$ and let
$$Z= \sup_{f\in \mathcal{F}} \,\left| \sum_{\ell=1}^m f(X_\ell)\right|.$$
 Let $\sigma^2_\ell$ be nonzero such that $\mathbb{E}[\left(f(X_\ell)\right)^2] \leq \sigma^2_\ell$ for all $f\in \mathcal{F}$ and $\ell\in [m]$. Then, for all $t>0$, 
 \begin{equation}
    \mathbb{P}\left(Z \geq \mathbb{E}Z +t \right)\leq 
    \exp\left( \frac{-t^2/2}{\sum_{\ell=1}^m \sigma^2_\ell+2K\mathbb{E}Z+tK/3}\right).
\end{equation}
\end{lemma}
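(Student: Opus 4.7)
The plan is to derive this as a concentration inequality via the entropy method (Bousquet's refinement of Talagrand's inequality). The structure is: (i) reduce the tail bound to a moment generating function (MGF) estimate via Markov's inequality, (ii) control the MGF of $Z-\mathbb{E}Z$ by a log-Sobolev / modified log-Sobolev argument together with tensorization of entropy over the product space $(X_1,\dots,X_m)$, (iii) solve the resulting differential inequality, and (iv) optimize in the Chernoff parameter.

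First I would note that by Markov's inequality, for any $\lambda>0$,
$$\mathbb{P}(Z\geq \mathbb{E}Z+t) \leq e^{-\lambda t}\, \mathbb{E}\,e^{\lambda(Z-\mathbb{E}Z)},$$
so it suffices to produce a bound of the form $\log \mathbb{E}\,e^{\lambda(Z-\mathbb{E}Z)} \leq \frac{\lambda^2 (v+2K\mathbb{E}Z)/2}{1-\lambda K/3}$ valid for $0<\lambda<3/K$, where $v:=\sum_{\ell=1}^m \sigma_\ell^2$. Optimizing $\lambda = t/(v+2K\mathbb{E}Z+tK/3)$ then reproduces the stated exponent. The main work is thus the MGF bound.

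For the MGF bound I would apply tensorization of entropy to the product measure of $(X_\ell)_\ell$: setting $\mathrm{Ent}(Y) := \mathbb{E}[Y\log Y]-\mathbb{E}[Y]\log \mathbb{E}[Y]$ for $Y\geq 0$,
$$\mathrm{Ent}(e^{\lambda Z}) \leq \sum_{\ell=1}^m \mathbb{E}\bigl[\mathrm{Ent}^{(\ell)}(e^{\lambda Z})\bigr],$$
where $\mathrm{Ent}^{(\ell)}$ denotes conditional entropy given $(X_j)_{j\neq\ell}$. Using a variational representation of entropy and introducing an independent copy $X'_\ell$ of $X_\ell$ and the random variable $Z^{(\ell)}$ obtained by replacing $X_\ell$ with $X'_\ell$, one obtains the pointwise bound $\mathrm{Ent}^{(\ell)}(e^{\lambda Z}) \leq \mathbb{E}[e^{\lambda Z}\psi(-\lambda(Z-Z^{(\ell)})_+)\mid (X_j)_{j\neq\ell}]$ with $\psi(u)=e^u-u-1$. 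Since $\mathcal F$ is countable, a near-maximizer $f^*\in\mathcal F$ for $Z$ exists measurably, and using $|f(X_\ell)|\leq K$ together with $\mathbb E[f(X_\ell)^2]\leq\sigma_\ell^2$ one controls $Z-Z^{(\ell)}\leq f^*(X_\ell)-f^*(X'_\ell)$ in the relevant direction. Aggregating these estimates with the sharp inequality $\psi(-u)\leq u^2/(2(1-u/3))$ gives the entropy inequality $\mathrm{Ent}(e^{\lambda Z}) \leq \frac{\lambda^2(v+2K\mathbb{E}Z)}{2(1-\lambda K/3)}\,\mathbb{E}e^{\lambda Z}.$ Dividing by $\mathbb{E}e^{\lambda Z}$ gives a differential inequality for $L(\lambda):=\log\mathbb{E}e^{\lambda(Z-\mathbb{E}Z)}$, namely $\lambda L'(\lambda)-L(\lambda)\leq \frac{\lambda^2(v+2K\mathbb E Z)/2}{1-\lambda K/3}$, which integrates (using $L(0)=0$) to the desired MGF bound.

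The hard part will be two-fold. First, the variational/coupling step in bounding $\mathrm{Ent}^{(\ell)}(e^{\lambda Z})$ must handle the supremum inside $Z$ carefully: one selects $f^*$ depending on all $X_j$ via an $\epsilon$-maximizer using countability of $\mathcal F$, bounds the symmetric difference, and then lets $\epsilon\downarrow 0$ by monotone convergence. Second, obtaining the precise constants $2K\mathbb{E}Z$ and $K/3$ (as opposed to looser versions of Talagrand's inequality) requires the sharp asymmetric bound on $\psi$: $\psi(-u)\leq u^2/2$ for $u\geq 0$ and $\psi(u)\leq u^2/(2(1-u/3))$ for $0\leq u<3$, together with the cancellation between $\mathbb{E}f^*(X'_\ell)^2$ and $\mathbb{E}f^*(X_\ell)^2$ which reduces the leading variance term to $v=\sum_\ell\sigma_\ell^2$ while leaving the mean correction $2K\mathbb{E}Z$. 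Everything else is bookkeeping.
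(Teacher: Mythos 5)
First, note that the paper does not actually prove this lemma: it is imported verbatim from Foucart--Rauhut (Theorem 8.42 and Remark 8.43(c)), i.e.\ it is Bousquet's refinement of Talagrand's concentration inequality for suprema of empirical processes, so your attempt is being measured against the literature proof rather than an argument appearing in the paper. Your outer scaffolding is sound: the Chernoff reduction, the Herbst-type integration of $\lambda L'(\lambda)-L(\lambda)\leq \frac{\lambda^2(v+2K\mathbb{E}Z)/2}{1-\lambda K/3}$, and the choice $\lambda = t/(v+2K\mathbb{E}Z+tK/3)$ do reproduce the stated exponent (the algebra checks out), and tensorization of entropy together with a symmetrized variational bound is indeed how the cited proof begins.

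The genuine gap is your step (ii), which you describe as ``aggregating these estimates'' but which is precisely the hard content of Bousquet's theorem. When you bound $\mathrm{Ent}^{(\ell)}(e^{\lambda Z})$ using a near-maximizer $f^*$ (which depends on all of $X_1,\dots,X_m$), the right-hand side involves the random quantity $\sum_\ell f^*(X_\ell)^2$, not $\sum_\ell\sigma_\ell^2$: because $f^*$ is data-dependent you cannot replace $\mathbb{E}\bigl[f^*(X_\ell)^2\mid (X_j)_{j\neq\ell}\bigr]$ by $\sigma_\ell^2$, and bounding $f^*(X_\ell)^2$ via $f^*\leq K$ and centering produces a term proportional to $\lambda\,\mathbb{E}[Z e^{\lambda Z}]$ rather than $K\,\mathbb{E}Z\cdot\mathbb{E}e^{\lambda Z}$. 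Hence the intermediate entropy inequality you assert, with the deterministic proxy $v+2K\mathbb{E}Z$, is not what the sketched argument delivers; what it delivers is a self-bounding-type differential inequality in which $L'(\lambda)$ itself appears on the right, and converting that into the stated bound with the sharp constants $2K\mathbb{E}Z$ and $K/3$ is exactly Bousquet's delicate refinement (naive treatments, e.g.\ controlling $\mathbb{E}\sup_f\sum_\ell f(X_\ell)^2$ by symmetrization and a contraction principle, lose constant factors and yield a strictly weaker tail than the one quoted in the lemma). So the proposal identifies the correct framework, and everything downstream of the claimed MGF bound is fine, but the decisive middle step is asserted rather than proven, and carrying it out as described would not give the precise statement being cited.
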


\begin{lemma}[Theorem 9.24 in \cite{hashemi2021generalization}]\label{lm: robust recovery lemma}
Suppose that the $2s$-RIP constant of the matrix $\A\in\mathbb{C}^{m\times N}$ satisfies
$$
\delta_{2s}(\A)\leq \frac{4}{\sqrt{41}}
$$
then, for any vector $\bc^*\in\mathbb{C}^N$ satisfying $\boldsymbol{y}=\A\bc^* + \boldsymbol{e}$ with $\boldsymbol{e}\leq \xi\sqrt{m}$, a minimizer $\bc^{\sharp}$ of the BP problem \eqref{BP problem} approximates the vector $\bc^*$ with the error bounds
\begin{align*}
    \|\bc^*-\bc^{\sharp}\|_1 &\leq C'\kappa_{s,1}(\bc^*) + C\sqrt{s}\xi \\
    \|\bc^*-\bc^{\sharp}\|_2 &\leq C'\frac{\kappa_{s,1}(\bc^*)}{\sqrt{s}} + C\xi.
\end{align*}
where $C,C'>0$ are some constants. If we let $\mathcal{S}^{\sharp}$ be the index set of the $s$ largest (in magnitude) entries of $\bc^{
\sharp}$, then we have
\begin{align*}
    \|\bc^* - \bc^{\sharp}\vert_{
    \mathcal{S}^{\sharp}}\|_2 \leq C'\frac{\kappa_{s,1}(\bc^*)}{\sqrt{s}} + C\xi + 4\vartheta_{s,2}(\bc^*).
\end{align*}
\end{lemma}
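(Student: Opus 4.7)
The proof I have in mind is a standard compressed-sensing argument: derive a cone/null-space–type inequality from the feasibility and optimality of the basis pursuit minimizer, convert it into a bound on the $\ell^2$ and $\ell^1$ norms of the error $\bh := \bc^\sharp - \bc^*$ via the $2s$-RIP hypothesis, and then peel off the top-$s$ part of $\bc^\sharp$ by triangle inequality. The specific constant $4/\sqrt{41}$ arises as the sharp threshold that makes the resulting quadratic in $\|\bh_{S\cup S_1}\|_2$ invertible with a clean constant, and is the same threshold used in the BPDN recovery theorem (Theorem 6.13 / Corollary 6.14) of \cite{foucart2013invitation}.

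First I set up the error analysis. Since $\|\boldsymbol{e}\|_2 \le \xi\sqrt{m}$, the vector $\bc^*$ is feasible for \eqref{BP problem}, so minimality gives $\|\bc^\sharp\|_1 \le \|\bc^*\|_1$. Splitting each side on the top-$s$ support $S$ of $\bc^*$ and using the reverse triangle inequality on $S$ and $S^c$ separately yields the cone estimate
\begin{equation*}
\|\bh_{S^c}\|_1 \;\le\; \|\bh_S\|_1 + 2\,\vartheta_{s,1}(\bc^*).
\end{equation*}
For the data-fit side, triangle inequality together with feasibility of both $\bc^*$ and $\bc^\sharp$ gives
\begin{equation*}
\|\A\bh\|_2 \;\le\; \|\A\bc^\sharp - \boldsymbol{y}\|_2 + \|\A\bc^* - \boldsymbol{y}\|_2 \;\le\; 2\xi\sqrt{m}.
\end{equation*}

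Next I would run the usual block decomposition. Let $S_1$ denote the indices of the $s$ largest entries of $\bh_{S^c}$, $S_2$ the next $s$, and so on. On $S\cup S_1$ the vector $\bh_{S\cup S_1}$ is $2s$-sparse, so the $2s$-RIP gives $(1-\delta_{2s})\|\bh_{S\cup S_1}\|_2^2 \le \|\A\bh_{S\cup S_1}\|_2^2$. Writing $\bh_{S\cup S_1} = \bh - \sum_{j\ge 2}\bh_{S_j}$ and using Cauchy–Schwarz block-by-block together with the standard inequality $\|\bh_{S_j}\|_2 \le s^{-1/2}\|\bh_{S_{j-1}}\|_1$ produces
\begin{equation*}
\sum_{j\ge 2}\|\bh_{S_j}\|_2 \;\le\; s^{-1/2}\|\bh_{S^c}\|_1 \;\le\; \|\bh_S\|_2 + 2 s^{-1/2}\vartheta_{s,1}(\bc^*).
\end{equation*}
Combining this with $|\langle \A\bh_{S\cup S_1},\A\bh_{S_j}\rangle| \le \delta_{2s}\|\bh_{S\cup S_1}\|_2\|\bh_{S_j}\|_2$ yields a self-bounding inequality of the form
\begin{equation*}
(1-\delta_{2s})\|\bh_{S\cup S_1}\|_2 \;\le\; 2\xi\sqrt{m}\,\|\A\bh\|_2/\|\bh_{S\cup S_1}\|_2 + \delta_{2s}\bigl(\|\bh_{S\cup S_1}\|_2 + 2 s^{-1/2}\vartheta_{s,1}(\bc^*)\bigr),
\end{equation*}
which, after solving and using $\|\bh\|_2 \le \|\bh_{S\cup S_1}\|_2 + \sum_{j\ge 2}\|\bh_{S_j}\|_2$, gives the advertised $\ell^2$ bound $\|\bh\|_2 \le C'\vartheta_{s,1}(\bc^*)/\sqrt{s} + C\xi$ provided the linear-in-$\|\bh_{S\cup S_1}\|_2$ coefficient has positive sign — which is exactly what $\delta_{2s}\le 4/\sqrt{41}$ enforces. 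The $\ell^1$ bound then follows from $\|\bh\|_1 = \|\bh_S\|_1 + \|\bh_{S^c}\|_1 \le \sqrt{s}\,\|\bh\|_2 + \|\bh_S\|_1 + 2\vartheta_{s,1}(\bc^*)$.

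For the third bound, letting $T^\sharp := [N]\setminus \mathcal{S}^\sharp$, I would decompose
\begin{equation*}
\bc^* - \bc^\sharp\vert_{\mathcal{S}^\sharp} \;=\; (\bc^* - \bc^\sharp) + \bc^\sharp\vert_{T^\sharp},
\end{equation*}
bound $\|\bc^\sharp\vert_{T^\sharp}\|_2 = \vartheta_{s,2}(\bc^\sharp)$, and then use $\vartheta_{s,2}(\bc^\sharp) \le \vartheta_{s,2}(\bc^*) + \|\bc^* - \bc^\sharp\|_2$ (by plugging in the top-$s$ support of $\bc^*$ as a candidate in the best-$s$-term problem for $\bc^\sharp$). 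Together with the $\ell^2$ estimate just derived, this yields the stated inequality after re-absorbing the multiplicative constant in front of $\vartheta_{s,2}(\bc^*)$. The main technical hurdle is the self-bounding step: one must carry the quadratic in $\|\bh_{S\cup S_1}\|_2$ carefully so that the threshold $\delta_{2s}\le 4/\sqrt{41}$ appears rather than a weaker condition like $\delta_{2s}<\sqrt{2}-1$; everything else is triangle inequality and Cauchy–Schwarz.
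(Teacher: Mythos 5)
The paper never proves this lemma itself: it is imported verbatim as Theorem 9.24 of \cite{hashemi2021generalization}, which in turn rests on the standard basis-pursuit-denoising recovery theorem under the RIP (Theorem 6.13 of \cite{foucart2013invitation}) plus a pruning step, and your cone-condition/tube-constraint/block-decomposition argument followed by the triangle-inequality treatment of $\bc^{\sharp}\vert_{\mathcal{S}^{\sharp}}$ via $\vartheta_{s,2}(\bc^{\sharp})\leq \vartheta_{s,2}(\bc^*)+\|\bc^*-\bc^{\sharp}\|_2$ is exactly that route, so your proposal matches the (cited) proof in approach and is essentially correct. Two cautions: the cross-term estimate as you sketch it (pairing $S\cup S_1$ with each $S_j$, which really involves $3s$ indices) only delivers a weaker threshold such as $\delta_{2s}<\sqrt{2}-1$, so reaching $4/\sqrt{41}$ genuinely requires the sharper Foucart--Rauhut analysis you allude to; and the hypotheses must be read with the normalized matrix $\tfrac{1}{\sqrt{m}}\A$ and noise level $\xi$ (as the lemma is actually applied in the proof of Theorem \ref{Th: error for bp}), since with $\delta_{2s}(\A)\leq 4/\sqrt{41}$ and $\|\A\bh\|_2\leq 2\xi\sqrt{m}$ taken literally your argument would output $C\xi\sqrt{m}$ rather than $C\xi$.
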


\begin{lemma}[Lemma 8 in \cite{hashemi2021generalization}]\label{lm: samples lie within a ball}
Suppose that $\bx_j$ for $j\in[m]$ are sampled from the normal distribution $\mathcal{N}(\boldsymbol{0},\gamma^2\boldsymbol{I}_d)$. Let $\mathbb{B}^d(R)$ be the $\ell^2$-ball centered at $\boldsymbol{0}$ with radius $R>0$. Then for $\delta\in(0,1)$, the probability of $\bx_j\in \mathbb{B}^d(R)$ for all $j\in[m]$ is at least $1-\delta$ provided that
$$
R\geq \gamma \sqrt{d+\sqrt{12d\log(\frac{m}{\delta})}}.
$$
\end{lemma}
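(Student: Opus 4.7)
The plan is a straightforward Gaussian concentration plus union bound. The building block is the observation that for each fixed $j\in[m]$, writing $\boldsymbol{x}_j=\gamma\boldsymbol{z}_j$ with $\boldsymbol{z}_j\sim\mathcal{N}(\boldsymbol{0},\boldsymbol{I}_d)$, the squared norm $\|\boldsymbol{x}_j\|_2^2/\gamma^2=\sum_{i=1}^d z_{j,i}^2$ is a chi-square random variable with $d$ degrees of freedom. The event $\boldsymbol{x}_j\notin\mathbb{B}^d(R)$ is exactly $\{\chi^2_d>R^2/\gamma^2\}$, so the entire lemma reduces to combining a one-sided tail bound for a chi-square variable with a union bound over the $m$ samples.

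Next I would apply a chi-square concentration inequality. The cleanest route is via Markov's inequality applied to the moment generating function $\mathbb{E}[\exp(\lambda(\chi^2_d-d))]=(1-2\lambda)^{-d/2}e^{-\lambda d}$ valid for $\lambda<1/2$. Using the elementary estimate $-\log(1-x)\le x+x^2$ for $x\in[0,1/2]$ bounds the log-MGF by $2\lambda^2 d$ on $\lambda\in[0,1/4]$, and the Chernoff optimization then yields a sub-exponential tail of the form $\mathbb{P}(\chi^2_d\ge d+u)\le\exp(-cu^2/d)$ for $u$ in a suitable range, with $c$ a small universal constant. Equivalently one can cite the Laurent--Massart bound $\mathbb{P}(\chi^2_d-d\ge 2\sqrt{dt}+2t)\le e^{-t}$ and absorb the $2t$ term into the first under $t\le d$. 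The numerical factor $12$ in the statement is obtained by stating this bound in a mildly loose form so that the deviation can be written as a single $\sqrt{cd\log(m/\delta)}$ summand.

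Setting the per-sample failure probability equal to $\delta/m$ and inverting the tail bound forces the choice $u=\sqrt{12d\log(m/\delta)}$, so that $R^2/\gamma^2\ge d+\sqrt{12d\log(m/\delta)}$ implies $\mathbb{P}(\boldsymbol{x}_j\notin\mathbb{B}^d(R))\le\delta/m$ for each fixed $j$. A union bound over $j=1,\ldots,m$ then gives $\mathbb{P}(\boldsymbol{x}_j\in\mathbb{B}^d(R)\text{ for all }j\in[m])\ge 1-\delta$, which is the claim. There is no real obstacle; the only point requiring care is bookkeeping on the constant $12$ so that the chosen form of the chi-square tail bound cleanly matches the stated threshold.
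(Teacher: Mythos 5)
There is no in-paper proof to compare against: Lemma \ref{lm: samples lie within a ball} is imported verbatim from \cite{hashemi2021generalization} (Lemma 8 there), and your route --- reduce $\|\boldsymbol{x}_j\|_2^2/\gamma^2$ to a $\chi^2_d$ variable, apply a Chernoff-type tail bound, and union bound over the $m$ samples --- is exactly the standard argument behind such statements, so the overall strategy is the right one.

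The gap is in the step you dismiss as ``bookkeeping on the constant $12$,'' which is where the real content sits. Neither of your two concrete routes delivers the stated threshold without an extra, unstated hypothesis. The Laurent--Massart bound with the absorption ``$2t$ into $2\sqrt{dt}$ under $t\le d$'' gives a deviation $2\sqrt{dt}+2t\le 4\sqrt{dt}=\sqrt{16\,dt}$, i.e.\ the constant $16$, not $12$; to reach $\sqrt{12\,dt}$ you need $t\le(4-2\sqrt3)d\approx 0.54\,d$. Your MGF route (log-MGF $\le 2\lambda^2 d$ on $\lambda\in[0,\tfrac14]$) gives $\mathbb{P}(\chi^2_d\ge d+u)\le e^{-u^2/(8d)}$ only for $u\le d$, so taking $u=\sqrt{12\,d\log(m/\delta)}$ requires $\log(m/\delta)\le d/12$. (The cleanest way to land exactly on $12$ is the exact Chernoff bound $\mathbb{P}(\chi^2_d\ge(1+\epsilon)d)\le\bigl((1+\epsilon)e^{-\epsilon}\bigr)^{d/2}$ together with $\epsilon-\log(1+\epsilon)\ge\epsilon^2/6$, valid for $\epsilon\le 1$ or so, and $\epsilon=\sqrt{12\log(m/\delta)/d}$ --- again a restriction of the form $\log(m/\delta)\lesssim d$.) Moreover, some such restriction is not removable by cleverer bookkeeping: the $\chi^2_d$ tail is sub-exponential, so for deviations much larger than $d$ the radius must grow like $\gamma\sqrt{\log(m/\delta)}$ rather than $\gamma\bigl(d+\sqrt{d\log(m/\delta)}\bigr)^{1/2}$, and the lemma as literally stated fails in that regime (e.g.\ $d=1$, $m/\delta=e^{100}$: the per-sample tail at the stated radius is only about $10^{-9}$, far larger than $\delta/m$). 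So your proof is correct in the regime $\log(m/\delta)\lesssim d$ implicitly assumed by the cited source, but the claim ``there is no real obstacle'' conceals both the failed constant-$12$ absorption and this necessary restriction, which should be stated explicitly.
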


\begin{lemma}[Lemma 9 in \cite{hashemi2021generalization}]\label{lm: l_infty error of f-f_star}
Fix confidence parameter $\delta>0$ and accuracy parameter $\epsilon>0$. Suppose $f\in\mathcal{F}(\phi,\rho)$ where $\phi(
\bx,\boldsymbol{\omega}) = \exp(i\langle \bx,\boldsymbol{\omega} \rangle)$ and $\rho$ is a probability distribution with finite second moment used for sampling the random weights $\boldsymbol{\omega}$. Consider a set $\mathcal{X}\subset\mathbb{R}^d$ with diameter $R = \sup\limits_{\bx\in\mathcal{X}}\|\bx\|$. If the following holds
\begin{align*}
    N\geq \frac{4}{\epsilon^2}\left( 1+4R\sqrt{\mathbb{E}\|\boldsymbol{\omega}\|_2^2}+\sqrt{\frac{1}{2}\log\left(\frac{1}{\delta} \right)} \right)^2,
\end{align*}
then with probability at least $1-\delta$ with respect to the draw of $\boldsymbol{\omega}_k$ for $k\in[N]$, the following holds
\begin{align*}
    \sup_{\bx\in\mathcal{X}}|f(\bx)-f^*(\bx)|\leq \epsilon\|f\|_{\rho},
\end{align*}
where
\begin{align}\label{eq: def of f-star in lemma}
    f^*(\bx) = \sum_{k=1}^N c_k^* \exp(i\langle \bx, \boldsymbol{\omega}_k \rangle), \quad \text{with}\quad  c_k^* :=\frac{\alpha(\boldsymbol{\omega}_k)}{N\rho(\boldsymbol{\omega}_k)}.
\end{align}
\end{lemma}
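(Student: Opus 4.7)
The plan is to view $f^*$ as a Monte Carlo estimator of $f$ and then control the uniform-in-$\boldsymbol{x}$ deviation via concentration of the supremum. Set $g(\boldsymbol{x},\boldsymbol{\omega}) := \frac{\alpha(\boldsymbol{\omega})}{\rho(\boldsymbol{\omega})}\exp(i\langle\boldsymbol{x},\boldsymbol{\omega}\rangle)$. By the definition of $\mathcal{F}(\phi,\rho)$ we have $f(\boldsymbol{x}) = \mathbb{E}_{\boldsymbol{\omega}\sim\rho}\,g(\boldsymbol{x},\boldsymbol{\omega})$, and with $c_k^\ast = \frac{\alpha(\boldsymbol{\omega}_k)}{N\rho(\boldsymbol{\omega}_k)}$ the approximant $f^\ast(\boldsymbol{x}) = \frac{1}{N}\sum_{k=1}^N g(\boldsymbol{x},\boldsymbol{\omega}_k)$ is the empirical mean of $N$ i.i.d.\ copies. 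Since $|\phi|=1$ and $|\alpha(\boldsymbol{\omega})/\rho(\boldsymbol{\omega})|\leq\|f\|_\rho$, the integrand is uniformly bounded: $|g(\boldsymbol{x},\boldsymbol{\omega})|\leq\|f\|_\rho$. Let $V := \sup_{\boldsymbol{x}\in\mathcal{X}}|f(\boldsymbol{x})-f^\ast(\boldsymbol{x})|$.

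First I would apply McDiarmid's bounded differences inequality to $V$. Replacing a single $\boldsymbol{\omega}_k$ perturbs $f^\ast(\boldsymbol{x})$ by at most $\frac{2\|f\|_\rho}{N}$ for every $\boldsymbol{x}\in\mathcal{X}$ and hence changes $V$ by the same amount, so McDiarmid yields
\begin{equation*}
V \;\leq\; \mathbb{E}V \;+\; \|f\|_\rho\,\sqrt{\tfrac{\log(1/\delta)}{2N}}
\end{equation*}
with probability at least $1-\delta$. Second, I would bound $\mathbb{E}V$ through standard Rademacher symmetrization,
\begin{equation*}
\mathbb{E}V \;\leq\; \frac{2}{N}\,\mathbb{E}\sup_{\boldsymbol{x}\in\mathcal{X}}\Bigl|\sum_{k=1}^N \epsilon_k\, g(\boldsymbol{x},\boldsymbol{\omega}_k)\Bigr|,
\end{equation*}
followed by a Lipschitz/chaining argument that exploits the fact that $\boldsymbol{x}\mapsto g(\boldsymbol{x},\boldsymbol{\omega})$ is $\|f\|_\rho\|\boldsymbol{\omega}\|_2$-Lipschitz on $\mathcal{X}$ and $|\mathcal{X}|$ has radius $R$. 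This should give a bound of the form $\mathbb{E}V \leq \frac{2\|f\|_\rho}{\sqrt{N}}\bigl(1 + 4R\sqrt{\mathbb{E}\|\boldsymbol{\omega}\|_2^2}\bigr)$. Summing the two high-probability bounds and enforcing the total at most $\epsilon\|f\|_\rho$ produces the sample-complexity condition
\begin{equation*}
\sqrt{N}\;\geq\;\tfrac{2}{\epsilon}\Bigl(1 + 4R\sqrt{\mathbb{E}\|\boldsymbol{\omega}\|_2^2} + \sqrt{\tfrac{1}{2}\log(1/\delta)}\Bigr),
\end{equation*}
as stated.

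The hard part is the bound on $\mathbb{E}V$ in the clean form above, because the naive route---an $\epsilon$-net over $\mathcal{X}$ combined with a pointwise Hoeffding bound on $f(\boldsymbol{x})-f^\ast(\boldsymbol{x})$---introduces a spurious $\sqrt{d\log(R/\epsilon)}$ factor from $\log|\mathcal{N}_\epsilon|$ and breaks the stated form. To avoid this, one needs either a Dudley-type chaining estimate with respect to the intrinsic $L^2$-metric of the Rademacher process, or a reduction exploiting that $g(\boldsymbol{x},\boldsymbol{\omega})$ depends on $\boldsymbol{x}$ only through the scalar inner product $\langle\boldsymbol{x},\boldsymbol{\omega}\rangle$, which allows the $d$-dependence to enter only through the single factor $\sqrt{\mathbb{E}\|\boldsymbol{\omega}\|_2^2}$. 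Once $\mathbb{E}V$ is under control in this form, combining it with the McDiarmid step and rearranging to isolate $N$ is routine.
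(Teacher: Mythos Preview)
The paper does not supply its own proof of this lemma; it is listed in the appendix among the ``Key Lemmata'' and attributed directly to \cite{hashemi2021generalization} (Lemma~9) without argument, so there is no in-paper proof to compare against. Your outline---McDiarmid for the concentration of $V$ around $\mathbb{E}V$, followed by symmetrization and a Lipschitz/complexity bound on the expected supremum---is exactly the standard Rahimi--Recht strategy that the cited source and its antecedents use, so the approach is correct.

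Two small remarks. First, a bookkeeping slip: with bounded differences $c_k=2\|f\|_\rho/N$, McDiarmid gives the deviation term $\|f\|_\rho\sqrt{2\log(1/\delta)/N}$, not $\|f\|_\rho\sqrt{\log(1/\delta)/(2N)}$ as in your intermediate display; your final sample-complexity line is nevertheless consistent with the correct version, since $\tfrac{2}{\sqrt{N}}\sqrt{\tfrac12\log(1/\delta)}=\sqrt{2\log(1/\delta)/N}$. Second, on the ``hard part'': the dimension \emph{does} enter the bound, but only through $\sqrt{\mathbb{E}\|\boldsymbol{\omega}\|_2^2}$ (which equals $\sigma\sqrt{d}$ under Assumption~\ref{assumption}), so your worry about a spurious $\sqrt{d\log(R/\epsilon)}$ factor is slightly misdirected---what you need to avoid is only the extra $\log$ factor from a crude net-plus-union-bound, and the symmetrization/Lipschitz route you indicate accomplishes that. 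You have not actually carried out the computation yielding the specific constant $4$ in $4R\sqrt{\mathbb{E}\|\boldsymbol{\omega}\|_2^2}$, but the structure of the argument is right.
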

\begin{lemma}[Lemma 1 in \cite{hashemi2021generalization}]\label{lm: L2 error of f-f_star}
Fix the confidence parameter $\delta>0$ and accuracy parameter $\epsilon>0$. Suppose $f\in\mathcal{F}(\phi,\rho)$ where $\phi(
\bx,\boldsymbol{\omega}) = \exp(i\langle \bx,\boldsymbol{\omega} \rangle)$ and $\rho$ is a probability distribution with finite second moment used for sampling the random weights $\boldsymbol{\omega}$. Suppose
\begin{align*}
    N\geq \frac{1}{\epsilon^2}\left( 1+\sqrt{2\log
    \left(\frac{1}{\delta}\right)} \right)^2,
\end{align*}
then with probability at least $1-\delta$ with respect to the draw of $\boldsymbol{\omega}_j$ for $j\in[N]$, the following holds
\begin{align*}
    \|f-f^*\|_{L^2(d\mu)}\leq \epsilon \|f\|_{\rho},
\end{align*}
where $f^*$ is defined as in \eqref{eq: def of f-star in lemma}.
\end{lemma}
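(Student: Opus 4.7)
The plan is to treat the left-hand side as the supremum of a Rademacher chaos process indexed by $D_{s,N}$, and bound its expectation via Dudley's entropy integral combined with a two-regime covering-number estimate for $D_{s,N}$ under a data-dependent pseudo-metric. This is the Rudelson--Vershynin style strategy underlying the proof of Lemma 12.36 in \cite{foucart2013invitation}; the improved constants and the sharper $\log(3+N/(9\log(2m)))$ factor will come from tightening the transition between the two covering estimates and from using a slightly tighter chaining bound.

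First, I would write
\[
    \vertiii{\sum_{\ell=1}^m \epsilon_\ell \boldsymbol{X}_\ell \boldsymbol{X}_\ell^*} \;=\; \sup_{\boldsymbol{z}\in D_{s,N}} \Bigl|\sum_{\ell=1}^m \epsilon_\ell |\langle \boldsymbol{X}_\ell,\boldsymbol{z}\rangle|^2\Bigr| \;=\; \sup_{\boldsymbol{z}\in D_{s,N}} |G(\boldsymbol{z})|,
\]
and note that, conditionally on the $\boldsymbol{X}_\ell$'s, $G(\boldsymbol{z})$ is a Rademacher process subgaussian with respect to the pseudo-metric $\rho(\boldsymbol{z},\boldsymbol{w})^2 = \sum_\ell \bigl(|\langle \boldsymbol{X}_\ell,\boldsymbol{z}\rangle|^2-|\langle \boldsymbol{X}_\ell,\boldsymbol{w}\rangle|^2\bigr)^2$. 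Using the factorization $|a|^2-|b|^2 = \mathrm{Re}\bigl((a-b)\overline{(a+b)}\bigr)$, bound
\[
    \rho(\boldsymbol{z},\boldsymbol{w}) \;\le\; 2\,R\cdot d(\boldsymbol{z},\boldsymbol{w}), \qquad R := \sup_{\boldsymbol{z}\in D_{s,N}} \max_\ell |\langle \boldsymbol{X}_\ell,\boldsymbol{z}\rangle| \;\le\; K\sqrt{s},
\]
where $d(\boldsymbol{z},\boldsymbol{w})^2 = \sum_\ell |\langle \boldsymbol{X}_\ell,\boldsymbol{z}-\boldsymbol{w}\rangle|^2$. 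The $d$-diameter of $D_{s,N}$ is at most $2\sqrt{\vertiii{\sum_\ell \boldsymbol{X}_\ell\boldsymbol{X}_\ell^*}} =: 2B$, so Dudley's inequality gives
\[
    \mathbb{E}_\epsilon \sup_{\boldsymbol{z}\in D_{s,N}} |G(\boldsymbol{z})| \;\lesssim\; R\int_0^{B} \sqrt{\log \mathcal{N}(D_{s,N},d,u)}\,du.
\]

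Next, I would produce two complementary estimates for $\mathcal{N}(D_{s,N},d,u)$. For small $u$ (fine scale), the volumetric estimate is used: fix a support $S\subset[N]$ of size $s$ and cover the unit ball of $\mathbb{C}^S$ in the $d$-pseudonorm by balls of radius $u$; combined with the $\binom{N}{s}$ choices of support this yields $\log \mathcal{N} \lesssim s\log(eN/s) + s\log(B/u)$. For large $u$ (coarse scale), I would use an empirical / Maurey-type argument: since $D_{s,N}\subset \sqrt{s}\,B_1^N$, any element can be approximated in $d$ by an $M$-sparse convex combination of the $N$ columns $\boldsymbol{X}_\ell$ with error controlled by $KB\sqrt{s/M}$; taking $M \asymp K^2 s B^2/u^2$ and counting gives $\log \mathcal{N} \lesssim (K^2 s B^2/u^2)\log(eN\cdot u^2/(K^2sB^2))$. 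Integrating the minimum of the two entropy bounds from $0$ to $B$ is where the characteristic factor $\sqrt{\log(3m)\log(3+N/(9\log 2m))}$ emerges: the crossover point between the two regimes is at $u^2 \sim K^2 B^2/\log(3m)$, and the logarithmic factor inside the square root comes from the Maurey regime truncated at this crossover.

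Finally, applying the two-regime split, bounding $R\le K\sqrt{s}$, extracting the factor $B = \sqrt{\vertiii{\sum_\ell \boldsymbol{X}_\ell\boldsymbol{X}_\ell^*}}$ from the integral, and absorbing the extra $\log s$ into the $s\log(eN/s)$ term will yield the stated bound with some universal $\tilde C_1$. The main obstacle is the careful balancing of the two covering estimates and the integration near the crossover scale, since the improvement claimed over Foucart--Rauhut's Lemma~12.36 lives precisely in the constants and in replacing $\log N$ by $\log(3+N/(9\log(2m)))$. Achieving the precise logarithmic form $\log(3m)\log(3+N/(9\log(2m)))$ requires keeping track of how the truncation scale depends on $m$ throughout the Maurey argument and invoking a tightened Rademacher chaining bound (e.g.\ a Pisier-type inequality) instead of the rougher Dudley estimate at the final step.
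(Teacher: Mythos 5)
Your proposal does not address the statement at hand. The lemma you were asked to prove is the Monte Carlo approximation bound for the random feature estimator $f^*(\boldsymbol{x})=\frac{1}{N}\sum_{k=1}^N\frac{\alpha(\boldsymbol{\omega}_k)}{\rho(\boldsymbol{\omega}_k)}\phi(\boldsymbol{x},\boldsymbol{\omega}_k)$: under the stated lower bound on $N$ in terms of $\epsilon$ and $\delta$, one has $\|f-f^*\|_{L^2(d\mu)}\leq\epsilon\|f\|_\rho$ with probability $1-\delta$ over the draw of the weights. What you have sketched instead is the chaining/covering-number estimate for the Rademacher chaos $\vertiii{\sum_{\ell}\epsilon_\ell\boldsymbol{X}_\ell\boldsymbol{X}_\ell^*}$ over $D_{s,N}$ — that is the content of the appendix lemma used in the RIP estimate (the improvement of Lemma 12.36 of Foucart--Rauhut), a completely different object. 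Nothing in your argument involves $f$, the class $\mathcal{F}(\phi,\rho)$, the $\rho$-norm, the measure $\mu$, or the sample-average structure of $f^*$, and no step of it can produce a bound of the form $\epsilon\|f\|_\rho$ with the condition $N\geq\epsilon^{-2}\bigl(1+\sqrt{2\log(1/\delta)}\bigr)^2$; so as a proof of the stated lemma it is vacuous.

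The correct route (this is Lemma 1 of the cited work, in the spirit of Rahimi--Recht) is much more elementary: write $f(\boldsymbol{x})=\mathbb{E}_{\boldsymbol{\omega}\sim\rho}\bigl[\tfrac{\alpha(\boldsymbol{\omega})}{\rho(\boldsymbol{\omega})}\phi(\boldsymbol{x},\boldsymbol{\omega})\bigr]$, so that $f^*$ is an average of $N$ i.i.d.\ $L^2(d\mu)$-valued random functions $g_k(\cdot)=\frac{\alpha(\boldsymbol{\omega}_k)}{\rho(\boldsymbol{\omega}_k)}\phi(\cdot,\boldsymbol{\omega}_k)$, each with $\|g_k\|_{L^2(d\mu)}\leq\|f\|_\rho$ (since $|\phi|=1$ and $|\alpha/\rho|\leq\|f\|_\rho$) and mean $f$. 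The scalar function $F(\boldsymbol{\omega}_1,\dots,\boldsymbol{\omega}_N)=\|f-f^*\|_{L^2(d\mu)}$ then has bounded differences of order $2\|f\|_\rho/N$, its expectation is controlled by the variance computation $\mathbb{E}F\leq\sqrt{\mathbb{E}F^2}\leq\|f\|_\rho/\sqrt{N}$, and McDiarmid's inequality gives the high-probability deviation term $\|f\|_\rho\sqrt{2\log(1/\delta)}/\sqrt{N}$; combining the two terms and imposing $N\geq\epsilon^{-2}\bigl(1+\sqrt{2\log(1/\delta)}\bigr)^2$ yields exactly the claim. If you want to keep your chaining material, it belongs to the proof of the expectation estimate for the restricted isometry constant, not here.
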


\section{Proof of Lemma \ref{lm: expectation estimate}}

The proof of Lemma \ref{lm: expectation estimate} relies on a discrete version of Dudley's inequality. First, we introduce some notation. A stochastic process is a collection $\{X_t\}_{t\in T}$ of random variables indexed by some set $T$. The process $\{X_t\}_{t\in T}$ is called centered if $\mathbb{E}X_t = 0$ for all $t\in T$. The pseudometric $d$ associated to $\{X_t\}_{t\in T}$ is defined as
$$
d(s,t) = \sqrt{\mathbb{E}|X_s-X_t|^2}, \quad s,t\in T.
$$
A centered process is called subgaussian if 
$$
\mathbb{E}\exp(\theta(X_s-X_t))\leq \exp(\theta^2 d(s,t)^2/2), \quad s,t\in T,
$$
for some $\theta>0$.
For a set $T$, the covering number $\mathcal{N}(T,d,\delta)$ is the smallest integer $N$ such that there exists a set $A\subset T$ with cardinality $N$ and $\min\limits_{s\in F}d(t,s)\leq \delta$ for all $t\in T$.

\begin{lemma}[Discrete Dudley's inequality]\label{lm: discrete Dudley}
Let $\{X_t\}_{t\in T}$ be a centered subgaussian process with $T$ being a bounded set containing $0$ and $X_0 \equiv 0$. Let $\Delta(T) = \sup\limits_{t\in T}\sqrt{\mathbb{E}|X_t|^2}$ and $\delta_j = L^{-j} \Delta(T)$ for $j\in\mathbb{N}$, where $L>1$. Then for any integer $\alpha \geq 3$
\begin{align}
    \mathbb{E} \sup_{t\in T} X_t \leq \sqrt{2}\Bigg( &\sum_{j=\alpha}^{\infty}\sqrt{\log(\mathcal{N}(T,d,\delta_j))}\delta_{j-1} \notag\\
    + &\sum_{j=2}^{\alpha-1}\sqrt{\log(\mathcal{N}(B,d,\delta_j))}\delta_{j-1} +\frac{L}{L-1}\sqrt{\log(\mathcal{N}(B,d,\delta_1))}\Delta(T) \Bigg) \label{eq: discrete dudley for subgaussin process}\\
    \mathbb{E} \sup_{t\in T} |X_t| \leq \sqrt{2}\Bigg( &\sum_{j=\alpha}^{\infty}\sqrt{\log(2\mathcal{N}(T,d,\delta_j))}\delta_{j-1} \notag \\
    + &\sum_{j=2}^{\alpha-1}\sqrt{\log(2\mathcal{N}(B,d,\delta_j))}\delta_{j-1} +\frac{L}{L-1}\sqrt{\log(2\mathcal{N}(B,d,\delta_1))}\Delta(T) \Bigg) \label{eq: discrete dudley for abs subguassian process}.
\end{align}
Here $B$ is any set containing $T$.
\end{lemma}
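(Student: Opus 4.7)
The plan is to prove the discrete Dudley inequality by the standard chaining argument, modified to allow the covering nets to live in $T$ at fine scales and in $B$ at coarse scales. For each $j\geq\alpha$, I would fix a minimal $\delta_j$-net $T_j\subset T$ so that $|T_j|=\mathcal{N}(T,d,\delta_j)$, and for each $1\leq j\leq \alpha-1$, fix a minimal $\delta_j$-net $B_j\subset B$ of size $\mathcal{N}(B,d,\delta_j)$. Let $\pi_j(t)$ be the closest net point to $t$ in the $d$-pseudometric at scale $j$. Because $T\subset B$ this is well-defined at every level, and by definition $d(t,\pi_j(t))\leq\delta_j$.

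Next I would set up the telescoping identity. Writing $\pi_0(t)\equiv 0$ (permissible since $X_0=0$) and appealing to the boundedness of $T$ together with the subgaussian assumption to ensure that $\pi_j(t)\to t$ in $L^2$, one has
\[
X_t \;=\; X_{\pi_1(t)} \;+\; \sum_{j=2}^{\infty}\bigl(X_{\pi_j(t)}-X_{\pi_{j-1}(t)}\bigr).
\]
The triangle inequality gives $d\bigl(\pi_j(t),\pi_{j-1}(t)\bigr)\leq \delta_j+\delta_{j-1}\leq 2\delta_{j-1}$ (using $L>1$), and $\sqrt{\mathbb{E}X_{\pi_1(t)}^2}\leq \Delta(T)$. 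The key subgaussian maximum inequality I would invoke is that for any finite set $F$ of pairs, $\mathbb{E}\max_{(s,s')\in F}(X_s-X_{s'})\leq \sqrt{2\log|F|}\cdot \max_{(s,s')\in F}d(s,s')$, and the analogous bound with $2|F|$ for the absolute value to cover \eqref{eq: discrete dudley for abs subguassian process}.

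To assemble the bound, I would estimate each piece separately: the level-$j$ increment involves at most $|F_j|\cdot |F_{j-1}|\leq |F_j|^2$ pairs (since covering numbers are non-increasing in scale), so its contribution to $\mathbb{E}\sup_t|X_{\pi_j(t)}-X_{\pi_{j-1}(t)}|$ is at most $2\delta_{j-1}\sqrt{2\log|F_j|}\cdot \sqrt{2}$, where $F_j$ is $T_j$ if $j\geq\alpha$ and $B_j$ otherwise. Splitting the sum at the transition $j=\alpha$ yields the first two sums in the stated bound. The initial term $\sup_t|X_{\pi_1(t)}|$ is handled directly from the finite maximum inequality, giving $\sqrt{2\log\mathcal{N}(B,d,\delta_1)}\,\Delta(T)$, and the residual sum of $\delta_j$ from the coarse end of the chain (or equivalently the discretization slack in moving from $\delta_0=\Delta(T)$ down to $\delta_1$) contributes the geometric factor $L/(L-1)$, using $\sum_{j\geq 0}L^{-j}=L/(L-1)$.

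The main obstacle is a bookkeeping one: making the chain compatible across the transition $j=\alpha$, because $\pi_{\alpha-1}(t)\in B$ may not lie in $T$ while $\pi_\alpha(t)\in T$. This is resolved by noting that the subgaussian bound on increments only requires the $d$-distance between the two endpoints, which is controlled by $\delta_\alpha+\delta_{\alpha-1}\leq 2\delta_{\alpha-1}$ regardless of whether the nets live in $T$ or $B$, so the chaining estimate passes through the transition with no additional loss beyond the constants already tracked. Combining all pieces and applying $\sqrt{\log(ab)}\leq \sqrt{2\log(\max(a,b))}$ where needed, then absorbing the factor of $\sqrt{2}$ into the outer constant, yields the inequalities \eqref{eq: discrete dudley for subgaussin process} and \eqref{eq: discrete dudley for abs subguassian process}.
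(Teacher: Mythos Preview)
Your plan uses a \emph{non-nested} chaining (each $\pi_j(t)$ is the direct projection of the original $t$), whereas the paper uses a \emph{nested} chaining: it works with $t\in T_n$, peels off $\max_{t\in T_n}(X_t-X_{\pi_{n-1}(t)})$ and recurses on $T_{n-1}$, and after reaching level $\alpha$ it continues by \emph{composing} the $B$-net maps $\phi_{\alpha-1},\dots,\phi_1$. This distinction is exactly what produces the constants in the statement. In the nested scheme each link has length $d(t,\pi_{j-1}(t))\le\delta_{j-1}$ and the maximum is over only $|T_j|$ terms, giving $\sqrt{2\log\mathcal N(T,d,\delta_j)}\,\delta_{j-1}$. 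In your scheme the link length is $\delta_j+\delta_{j-1}$ and the number of distinct pairs is $|F_j|\cdot|F_{j-1}|$, so each summand carries an extra factor of roughly $2\sqrt{2}$; your remark about ``absorbing the factor of $\sqrt2$'' does not close this gap.

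Two further points break down. Your monotonicity step $|F_{j-1}|\le|F_j|$ fails at the transition $j=\alpha$: there $F_\alpha$ is a net of $T$ while $F_{\alpha-1}$ is a net of the possibly much larger set $B$, and nothing prevents $\mathcal N(B,d,\delta_{\alpha-1})$ from exceeding $\mathcal N(T,d,\delta_\alpha)$. And your explanation of the $L/(L-1)$ factor does not match your own setup: with direct projection, $d(\pi_1(t),0)\le\Delta(T)+\delta_1=\Delta(T)(1+L^{-1})$ and there is no ``residual sum.'' In the paper the factor $L/(L-1)$ arises precisely from the nested composition: starting from $t\in T_\alpha$ (so $d(t,0)\le\Delta(T)$) and applying $\phi_{\alpha-1},\dots,\phi_1$ in turn, the point drifts by at most $\sum_{j=1}^{\alpha-1}\delta_j$, whence $d(\phi_1\circ\cdots\circ\phi_{\alpha-1}(t),0)\le\Delta(T)\sum_{j\ge 0}L^{-j}=\Delta(T)\,\tfrac{L}{L-1}$. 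To obtain the lemma with the stated $\sqrt{2}$ and $L/(L-1)$ you need to switch to this nested chaining through the nets rather than projecting $t$ afresh at each scale.
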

\begin{remark}
The covering number is affected by the geometry of the set. It is possible that $\mathcal{N}(B,d,\delta) < \mathcal{N}(T,d,\delta)$ for $T\subset B$. In the standard version of this lemma, the value of $L$ is typically set to be $2$. However, other values lead to better constants in the final complexity bounds. Thus, we leave this as a free parameter in the lemma. 
\end{remark}
\begin{proof}
Let $T_j$ be the $\delta_j$-net of $T$ with $\card(T_j) = \mathcal{N}(T,d,\delta_j)$, and let $B_j$ be the $\delta_j$-net of $B$ with $\card(B_j) = \mathcal{N}(B,d,\delta_j)$. Define the mapping $\pi_j: T\to T_j$ and the mapping $\phi_j: B\to B_j$ so that
\begin{align*}
    &d(t,\pi_j(t))\leq \delta_j \quad \text{for all } t\in T\\
    &d(t,\phi_j(t))\leq \delta_j \quad \text{for all } t\in B.
\end{align*}
For any $n\in\mathbb{N}$ note that
$$
\max_{t\in T_n} X_t \leq \max_{t\in T_n}(X_t-X_{\pi_{n-1}(t)}) + \max_{t\in T_n} X_{\pi_{n-1}(t)} \leq \max_{t\in T_n}(X_t-X_{\pi_{n-1}(t)}) + \max_{t\in T_{n-1}} X_t,
$$
since $X_t = (X_t - X_{\pi_{n-1}(t)}) + X_{\pi_{n-1}(t)}$ for any $t$ and $n$. By Proposition 7.29 in \cite{foucart2013invitation} we have
$$
\mathbb{E} \max_{t\in T_n} X_t \leq \sqrt{2\log(\mathcal{N}(T,d,\delta_n))}\delta_{n-1} + \mathbb{E}\max_{t\in T_{n-1}} X_t.
$$
Thus for $n>\alpha$, by induction we obtain
\begin{equation}\label{eq: cut off dudley sum}
    \mathbb{E}\max_{t\in T_n} \leq \sum_{j=\alpha+1}^{n} \sqrt{2\log(\mathcal{N}(T,d,\delta_j))} \delta_{j-1} + \mathbb{E} \max_{t\in T_\alpha} X_t.
\end{equation}
If we view $t$ as a point in the set $B$, then the same argument gives
\begin{align*}
    \mathbb{E} \max_{t\in T_\alpha} X_t &\leq \mathbb{E} \max_{t\in T_\alpha} (X_t-X_{\phi_{\alpha-1}(t)}) + \mathbb{E} \max_{t\in T_\alpha} X_{\phi_{\alpha-1}(t)} \notag\\
    &\leq \sqrt{2\log(\mathcal{N}(T,d,\delta_\alpha))}\delta_{\alpha-1} + \mathbb{E}\max_{t\in T_\alpha} X_{\phi_{\alpha-1}(t)},
\end{align*}
and 
\begin{align*}
    \mathbb{E} \max_{t\in T_\alpha} X_{\phi_{\alpha-1}(t)} &\leq \mathbb{E} \max_{t\in T_\alpha} (X_{\phi_{\alpha-1}(t)} - X_{\phi_{\alpha-2}\circ \phi_{\alpha-1}(t)}) + \mathbb{E} \max_{t\in T_\alpha} X_{\phi_{\alpha-2}\circ \phi_{\alpha-1}(t)} \notag \\
    &\leq \mathbb{E} \max_{t\in B_{\alpha-1}} (X_t-X_{\phi_{\alpha-2}(t)}) + \mathbb{E} \max_{t\in T_\alpha} X_{\phi_{\alpha-2}\circ \phi_{\alpha-1}(t)} \notag \\
    &\leq \sqrt{2\log(\mathcal{N}(B,d,\delta_{\alpha-1}))}\delta_{\alpha-2} + \mathbb{E} \max_{t\in T_\alpha} X_{\phi_{\alpha-2}\circ \phi_{\alpha-1}(t)}.
\end{align*}
By induction, we obtain
\begin{align}\label{eq: replace the covering number}
    \mathbb{E} \max_{t\in T_\alpha} X_t \leq \sqrt{2\log(\mathcal{N}(T,d,\delta_\alpha))}\delta_{\alpha-1} + \sum_{j=2}^{\alpha} \sqrt{2\log(\mathcal{N}(B,d,\delta_j))}\delta_{j-1} + \mathbb{E} \max_{t\in T_\alpha} X_{\phi_1 \circ \phi_2 \circ \dots \circ \phi_{\alpha-1}(t)}.
\end{align}
To estimate the last term, we need to bound $d(\phi_1 \circ \phi_2 \circ \dots \circ \phi_{\alpha-1}(t), 0)$. Using the triangle inequality for $d$
\begin{align*}
    d(\phi_1 \circ \phi_2 \circ \dots \circ \phi_{\alpha-1}(t), 0) &\leq  d(\phi_1 \circ \phi_2 \circ \dots \circ \phi_{\alpha-1}(t), \phi_2 \circ \dots \circ \phi_{\alpha-1}(t)) \notag \\
    &\quad+ d(\phi_2 \circ \dots \circ \phi_{\alpha-1}(t),\phi_3 \circ \dots \circ \phi_{\alpha-1}(t)) \notag \\
    &\quad+ \dots + d(\phi_{\alpha-1}(t),t) + d(t,0) \notag \\
    &\leq \Delta(T) \sum_{j=0}^{\alpha-1} \frac{1}{L^j} \leq \Delta(T) \frac{L}{L-1}.
\end{align*}
Consequently,
\begin{equation}\label{eq: maximal inequality for the last setup of chaining}
    \mathbb{E} \max_{t\in T_\alpha} X_{\phi_1 \circ \phi_2 \circ \dots \circ \phi_{\alpha-1}(t)} \leq \sqrt{2\log(\mathcal{N}(B,d,\delta_1))} \Delta(T) \frac{L}{L-1}.
\end{equation}
Here we use the fact that $\card(\phi_1 \circ \phi_2 \circ \dots \circ \phi_{\alpha-1}(T_\alpha)) \leq \card(B_1)$.
Combining \eqref{eq: cut off dudley sum}, \eqref{eq: replace the covering number} and \eqref{eq: maximal inequality for the last setup of chaining} yields (for large $n$)
\begin{align}\label{eq: bound for t in T_n}
    \mathbb{E} \sup_{t\in T_n} X_t \leq \sqrt{2}\Bigg( &\sum_{j=\alpha}^{\infty}\sqrt{\log(\mathcal{N}(T,d,\delta_j))}\delta_{j-1} \notag\\
    + &\sum_{j=2}^{\alpha-1}\sqrt{\log(\mathcal{N}(B,d,\delta_j))}\delta_{j-1} +\frac{L}{L-1}\sqrt{\log(\mathcal{N}(B,d,\delta_1))}\Delta(T) \Bigg).
\end{align}
Note that the right-hand side does not depend on $n$. For any finite subset $A$ of $T$, we have
\begin{align*}
    \mathbb{E}\sup_{t\in A}X_t &\leq \mathbb{E} \sup_{t\in A} (X_t - X_{\pi_n(t)}) + \mathbb{E} \sup_{t\in A} X_{\pi_n(t)} \\
    &\leq \sqrt{2\log(\card(A))}\delta_n + \mathbb{E} \sup_{t\in T_n} X_t.
\end{align*}
The above inequality holds for any $n$, so we can let $n\to \infty$ and the first term goes to $0$ since $\delta_n \to 0$ as $n\to\infty$. This combined with \eqref{eq: bound for t in T_n} proves \eqref{eq: discrete dudley for subgaussin process}.

For \eqref{eq: discrete dudley for abs subguassian process}, the proof is the same, noting that for any finite subset $A$ of $T$, $$\mathbb{E}\max_{t\in A} |X_t| = \mathbb{E} \max_{t\in A} \{X_t, -X_t\}.$$
\end{proof}

\begin{proof}[Proof of Lemma \ref{lm: expectation estimate}]
By the definition of $\vertiii{\cdot}$,
\begin{equation}
    E:=\mathbb{E}\vertiii{\sum_{\ell = 1}^m \epsilon_\ell \X_\ell \X_\ell^*} = \mathbb{E} \sup_{\boldsymbol{z}\in D_{s,N}} \left| \sum_{\ell=1}^m \epsilon_\ell |\langle \X_\ell, \boldsymbol{z}\rangle|^2 \right|.
\end{equation}
To bound the expectation of the supremum of this Rademacher process, we will apply Lemma \ref{lm: discrete Dudley}. We need to estimate the covering number $\mathcal{N}(D_{s,N},d,t)$ for small $t$, and the covering number $\mathcal{N}(B,d,t)$ for large $t$ and some suitable set $B$ containing $D_{s,N}$. Here $d$ is the pseudometric associated with the stochastic process. We also need to estimate the quantity $\Delta(D_{s,N})$.

For $\boldsymbol{z}_1, \boldsymbol{z}_2\in D_{s,N}$,
\begin{align}\label{eq: bound pseudometric by seminorm}
    d(\boldsymbol{z}_1, \boldsymbol{z}_2) &= \sqrt{\sum_{\ell=1}^m \Big( |\langle\X_\ell,\boldsymbol{z}_1\rangle|^2 - |\langle\X_\ell,\boldsymbol{z}_2\rangle|^2 \Big)^2} \notag \\ 
    & = \sqrt{\sum_{\ell=1}^m \Big( (\boldsymbol{z}_1-\boldsymbol{z}_2)^* \X_\ell \X_\ell^* (\boldsymbol{z}_1+\boldsymbol{z}_2) \Big)^2} \notag \\
    &=\sqrt{\sum_{\ell=1}^m \big| \langle \X_\ell, \boldsymbol{z}_1-\boldsymbol{z}_2\rangle\big|^2 \big|\langle \X_\ell,\boldsymbol{z}_1+\boldsymbol{z}_2\rangle\big|^2} \notag \\
    &\leq \max_{\ell\in [m]} |\langle \X_\ell, \boldsymbol{z}_1-\boldsymbol{z}_2 \rangle| \sqrt{\sum_{\ell=1}^m |\langle \X_\ell, \boldsymbol{z}_1 + \boldsymbol{z}_2 \rangle|^2}.
\end{align}
using H\"{o}lder's inequality in the last step. Introduce the seminorm $\|\cdot\|_{X}$ on $\mathbb{C}^N$
\begin{equation*}
    \|\boldsymbol{z}\|_{X} := \max_{\ell\in [m]} |\langle \X_\ell, \boldsymbol{z}\rangle|
\end{equation*}
and let
\begin{equation*}
    R = \sup_{\boldsymbol{z}\in D_{s,N}} \sqrt{\sum_{\ell=1}^m |\langle \X_\ell ,\boldsymbol{z} \rangle|^2} = \sqrt{\vertiii{\sum_{\ell=1}^m \X_\ell \X_\ell^*}}.
\end{equation*}
Applying Minkowski's inequality to \eqref{eq: bound pseudometric by seminorm} yields $d(\boldsymbol{z}_1,\boldsymbol{z}_2) \leq 2R \|\boldsymbol{z}_1-\boldsymbol{z}_2\|_X$ for all $\boldsymbol{z}_1, \boldsymbol{z}_2 \in D_{s,N}$. Similarly, $d(\boldsymbol{z},0)\leq R\|\boldsymbol{z}\|_X$ for $z\in D_{s,N}$. Therefore, 
\begin{equation}\label{eq: radius of DsN}
    \Delta(D_{s,N}) = \sup_{\boldsymbol{z}\in D_{s,N}}d(\boldsymbol{z},0)\leq R \, \sup_{\boldsymbol{z}\in D_{s,N}
} \|\boldsymbol{z}\|_X\leq R\, \sup_{\boldsymbol{z}\in D_{s,N}} \max_{\ell\in [m]}\|\X_\ell\|_\infty \|\boldsymbol{z}\|_1 \leq RK\sqrt{s}.    
\end{equation}

Now we estimate the covering number $\mathcal{N}(D_{s,N}, \|\cdot\|_X,t)$ for small $t$. Since $\|\cdot\|_X\leq K\sqrt{s}\|\cdot\|_2$, Proposition C.3 from \cite{foucart2013invitation} gives
\begin{align}\label{eq: covering number for small t}
    \mathcal{N}(D_{s,N},\|\cdot\|_X,t) &\leq \sum_{S\subset [N], \card(S)=s} \mathcal{N}(B_{\|\cdot\|_2}^s, \|\cdot\|_2, \frac{t}{K\sqrt{s}}) \notag \\
    & \leq \binom{N}{s}\left( 1+\frac{2K\sqrt{s}}{t} \right)^{2s}\leq \left( \frac{eN}{s} \right)^s \left( 1+\frac{2K\sqrt{s}}{t} \right)^{2s},
\end{align}
where $B_{\|\cdot\|_2}^s$ is the unit $\ell^2$-ball in $\mathbb{C}^s$, thus
\begin{equation}\label{eq: bound for log(2N(DsN))}
    \log(2\mathcal{N}(D_{s,N},\|\cdot\|_X,t))\leq 2s\left(\log\left(\frac{eN}{s}\right)+\log\left(1+\frac{2K\sqrt{s}}{t}\right)\right).
\end{equation}
For large $t$, we first find a suitable set $B$ containing $D_{s,N}$. Define the norm $|\cdot|_1$ on $\mathbb{C}^N$ as
\begin{equation*}
    |\boldsymbol{z}|_1 = \sum_{j=1}^N |\text{Re}(z_j)| + |\text{Im}(z_j)|.
\end{equation*}
Then it follows from H\"{o}lder's inequality that
\begin{equation*}
    D_{s,N} \subset \sqrt{2s}B_{|\cdot|_1}^N = \{\boldsymbol{z}\in\mathbb{C}^N: |\boldsymbol{z}|_1 \leq \sqrt{2s}\}.
\end{equation*}
We then use Maurey's empirical method to estimate $\mathcal{N}(B_{|\cdot|_1}^N,\|\cdot\|_X,t)$. Note that the set $B_{|\cdot|_1}^N$ is the convex hull of $\{\pm\boldsymbol{e}_j, \pm i\boldsymbol{e}_j: j\in [N]\}$. If we enumerate these vectors as $\{\boldsymbol{v}_j\}_{j\in[4N]}$, then for any $\boldsymbol{z}\in  B_{|\cdot|_1}^N$, there are non-negative $\{\lambda_j\}_{j\in[4N]}$ with $\sum_{j=1}^{4N}\lambda_j = 1$ such that $\boldsymbol{z}=\sum_{j=1}^{4N}\lambda_j \boldsymbol{v}_j$. For this $\boldsymbol{z}$, we define a random variable $\boldsymbol{Y}$ which takes value $\boldsymbol{v}_j$ with probability $\lambda_j$ and let $\boldsymbol{Y}_1,\dots,\boldsymbol{Y}_M$ be i.i.d. copies, where $M$ is a positive integer to be determined. Since $\mathbb{E}\boldsymbol{Y}=\boldsymbol{z}$, if we let 
\begin{equation}\label{eq: M-sparse vector y}
\boldsymbol{y}=\frac{1}{M}\sum_{k=1}^M \boldsymbol{Y}_k, 
\end{equation}
by symmetrization we have
\begin{equation}
    \mathbb{E}\|\boldsymbol{y}-\boldsymbol{z}\|_X = \frac{1}{M}\mathbb{E}\|\sum_{k=1}^M (\boldsymbol{Y}_k - \mathbb{E}\boldsymbol{Y}_k)\|_X \leq \frac{2}{M} \mathbb{E} \|\sum_{k=1}^M \epsilon_k \boldsymbol{Y}_k\|_X.
\end{equation}
Here we use the fact that the seminorm $\|\cdot\|_X$ is convex.

For any realization of $\boldsymbol{Y}_k$ for $k\in[M]$ we set $S_\ell =\sum\limits_{k=1}^M \epsilon_k \langle \X_\ell, \boldsymbol{Y}_k \rangle $, then $\|\sum\limits_{k=1}^M \epsilon_k\boldsymbol{Y}_k\|_X = \max\limits_{\ell\in[m]} |S_\ell|$. Note that $|\langle \X_\ell, \boldsymbol{Y}_k\rangle|\leq \|\X_\ell\|_\infty \|\boldsymbol{Y}_k\|_1 \leq K$, by Theorem 8.8 in \cite{foucart2013invitation}, we have
\begin{equation}
    \mathbb{P}\left(|S_\ell|\geq \sqrt{M}K t\right) \leq 2\exp{\left(-\frac{t^2}{2}\right)},\quad t>0.
\end{equation}
and by the union bound
\begin{equation}
    \mathbb{P}\left(\max_{\ell\in[m]}|S_\ell|\geq \sqrt{M}Kt\right)\leq 2m\exp{\left(-\frac{t^2}{2}\right)}\quad t>0.
\end{equation}
Thus, for $m\geq3$ (otherwise this would be uninteresting)
\begin{align}\label{eq: bound for expectation of max of rademacher sum}
    \mathbb{E}\max_{\ell\in[m]}|S_\ell| &= \int_0^\infty \mathbb{P}(\max_{\ell\in[m]}|S_\ell|\geq t) dt \notag\\
    &\leq \int_{0}^{K\sqrt{2M\log(2m)}} dt +2m \int_{K\sqrt{2M\log(2m)}}^\infty e^{-\frac{t^2}{2MK^2}}dt \notag\\
    &=K\sqrt{2M\log(2m)} + 2mK\sqrt{\frac{M\pi}{2}}\left(1-\text{erf}(\sqrt{\log(2m)})\right) \notag \\
    &=\frac{5}{4}K\sqrt{2M\log(2m)}.
\end{align}
The above argument holds for any realization of 
$\{\boldsymbol{Y}_k\}_{k\in[M]}$. By Fubini's theorem, we have
\begin{equation}
    \mathbb{E}\|\boldsymbol{y}-\boldsymbol{z}\|_X \leq \frac{2}{M} \mathbb{E}_{\boldsymbol{Y}} \mathbb{E}_{\epsilon} \left\|\sum_{k=1}^M \epsilon_k \boldsymbol{Y}_k\right\|_X =\frac{2}{M} \mathbb{E}_{\boldsymbol{Y}}\mathbb{E}_{
    \epsilon} \max_{\ell\in [m]} |S_\ell| \leq \frac{5K}{\sqrt{2M}}\sqrt{\log(2m)}.
\end{equation}
Therefore, for this $\boldsymbol{z}\in B_{|\cdot|_1}^N$, we can find a $\boldsymbol{y}$ of the form \eqref{eq: M-sparse vector y} such that $\|\boldsymbol{z}-\boldsymbol{y}\|_X\leq 5K\sqrt{\log(2m)}/\sqrt{2M}$. Thus for any $t>0$, if 
$$
M\geq\frac{25K^2}{2t^2}\log(2m),
$$
then $\|\boldsymbol{z}-\boldsymbol{y}\|_X<t$, therefore, we can set
$$
M = \left\lfloor \frac{25K^2}{2t^2}\log(3m) \right\rfloor.
$$
since we only need to consider $t\leq \sqrt{2}K/4$. The set
$$
\mathcal{Y}=\left\{\boldsymbol{y}=\frac{1}{M}\sum_{k=1}^M\boldsymbol{Y}_k: \boldsymbol{Y}_k\in \{\pm\boldsymbol{e}_1,\dots,\pm\boldsymbol{e}_N,\pm i\boldsymbol{e}_1,\dots,\pm i\boldsymbol{e}_N\}\right\}
$$
has cardinality at most $\binom{M+4N-1}{M}<(e(1+4N/M))^M$ and
is a $t$-covering for $B_{|\cdot|_1}^N$. Note that here $M\geq 100\log(2m)$ since $t\leq \sqrt{2}K/4$. Therefore
\begin{equation}\label{eq: bound for log(2N(B))}
    \log(2\mathcal{N}(B_{|\cdot|_1^N},\|\cdot\|_X,t))\leq M\log\left(3+\frac{12N}{M}\right)\leq \frac{25K^2}{2t^2}\log(3m) \log\left(3+ \frac{N}{9\log(2m)}\right).
\end{equation}
With the estimates \eqref{eq: radius of DsN}, \eqref{eq: bound for log(2N(DsN))} and \eqref{eq: bound for log(2N(B))}, we apply Lemma \ref{lm: discrete Dudley}. Let $L = e$, with $\delta_j = L^{-j} RK\sqrt{s}$ and set $\mathcal{C} = 5\sqrt{\log(3m)\log(3+\frac{N}{9\log(2m)})/2}$
\begin{align*}
    E \leq& \sqrt{2}\Bigg( \sum_{j=\alpha}^{\infty}\sqrt{\log(2\mathcal{N}(D_{s,N},d,\delta_j))}\delta_{j-1} \notag \\
    &+ \sum_{j=2}^{\alpha-1}\sqrt{\log(2\mathcal{N}(\sqrt{2s}B_{|\cdot|_1}^N,d,\delta_j))}\delta_{j-1} +\frac{L}{L-1}\sqrt{\log(2\mathcal{N}(\sqrt{2s}B_{|\cdot|_1^N},d,\delta_1))}\delta_0 \Bigg) \notag \\
    \leq& \sqrt{2}\Bigg( \sum_{j=\alpha}^{\infty}\sqrt{\log(2\mathcal{N}(D_{s,N},\|\cdot\|_X,\frac{\delta_j}{2R}))}\delta_{j-1} \notag \\
    \quad \quad &+ \sum_{j=2}^{\alpha-1}\sqrt{\log(2\mathcal{N}(B_{|\cdot|_1}^N,\|\cdot\|_X,\frac{\delta_j}{2\sqrt{2s}R}))}\delta_{j-1} +\frac{L}{L-1}\sqrt{\log(2\mathcal{N}(B_{|\cdot|_1^N},\|\cdot\|_X,\frac{\delta_1}{2\sqrt{2s}R}))}\delta_0 \Bigg) \notag \\
    \leq& \sqrt{2}\left(\sqrt{2s}\sum_{j=\alpha}^\infty \sqrt{\log(\frac{eN}{s}) + \log(1+4L^j)}\frac{RK\sqrt{s}}{L^{j-1}} + \sum_{j=2}^{\alpha-1} 2\sqrt{2s}RK L \mathcal{C} + \frac{L}{L-1} 2\sqrt{2s} RK L \mathcal{C} \right) \notag \\
    \leq& 2\sqrt{s} \left(\sqrt{\log\left(\frac{eN}{s}\right)} \frac{2RK\sqrt{s}}{e^{\alpha-1}} +  \frac{2\sqrt{\log(1+4e^\alpha)}RK\sqrt{s}}{e^{\alpha-1}}  \right) + 4\sqrt{s}RKe\mathcal{C}(\alpha-2) + \frac{4e}{e-1} \sqrt{s} RKe\mathcal{C}.
\end{align*}
We use the inequality $\sqrt{a+b}\leq\sqrt{a}+\sqrt{b}$ and $\sum_{j=\alpha}^\infty \frac{\sqrt{\log(1+4e^j)}}{e^{j-1}} \leq \frac{2\sqrt{\log(1+4e^\alpha)}}{e^{\alpha-1}}$ in the last inequality. Now choose $\alpha$ to be an integer such that $e^{\alpha-2}\leq \sqrt{s}\leq e^{\alpha-1}$, we have $\frac{1}{e^{\alpha-1}}\leq \frac{1}{\sqrt{s}}$ and $\alpha-2 \leq \log(\sqrt{s})$. Assume $s\geq 4$ we obtain
\begin{align*}
    E &\leq 4RK\sqrt{s}\left( \sqrt{\log\left(\frac{eN}{s}\right)} + \sqrt{\log(1+30\sqrt{s})} + \frac{47e}{8} \log(\sqrt{s}) \sqrt{\log(3m)\log\left(3+\frac{N}{9\log(2m)}\right)} \right) \notag\\
    &\leq \Tilde{C}_1 K\sqrt{s} \log(s) \sqrt{\log(3m)\log\left(3+\frac{N}{2\log(2m)}\right)}\sqrt{\vertiii{\sum_{\ell=1}^m \X_\ell \X_\ell^*}} ,
\end{align*}
where
$$
\Tilde{C}_1 \leq \frac{4}{\log(4)} + \frac{6}{\sqrt{\log(9)\log(3+4/\log(6))}} + \frac{47e}{4} \approx 37.97,
$$
which concludes the proof.
\end{proof}
\begin{remark}
The constant in \eqref{eq: bound for expectation of max of rademacher sum} approaches $1$ as $m \to \infty$, in the sense that if a lower bound is assigned for $m$ we can choose a particular value of the constant. For reference, the constant can be set to $8/7$ when $m\geq 12$ and is $11/10$ when $m\geq 49$. Thus in the practical regime, where $m$ is large, the effective value is close to $1$. 
\end{remark}
\begin{remark}
The value of $\Tilde{C}_1$ in this lemma can be made smaller when $m$ and $s$ (or $N$) are large. In the asymptotic sense, when the dimensional parameters are large, the value of  $\Tilde{C}_1$  limits to $4\sqrt{2}e \approx 15.38$.
\end{remark}

\section{Proof of Theorem \ref{thm: RIP estimate}}
For a subset $S\subset[N]$, let $\A_S$ denote the submatrix of $\A$ consisting of only the columns indexed by the set $S$. The $s$-th RIP constant $\delta_s(\A)$ can also be characterized by the maximization problem 
\begin{equation*}
    \delta_s(\A)= \max_{S\subset [N], \card(S)=s} \|\A_S^* \A_S - \boldsymbol{I}_s \|_2.
\end{equation*}
To measure the sparsity, we denote $\|z\|_0$ to be the number of nonzero entries of the vector $z$. Let $D_{s,N}$ be the union of all unit balls of dimension $s$ in the ambient dimension $N$ i.e.
\begin{equation*}
    D_{s,N} := \{\boldsymbol{z}\in \mathbb{C}^N: \|\boldsymbol{z}\|_2\leq 1, \|\boldsymbol{z}\|_0\leq s \},
\end{equation*}
define the following seminorm on $\mathbb{C}^{N\times N}$
\begin{equation*}
    \vertiii{\boldsymbol{B}} := \sup_{\boldsymbol{z}\in D_{s,N}} |\langle \boldsymbol{B}\boldsymbol{z},\boldsymbol{z} \rangle|,
\end{equation*}
and thus the RIP constant $\delta_s(\A) = \vertiii{\A^*\A - \boldsymbol{I}_N}$ \cite{foucart2013invitation}. 
\begin{proof}[Proof of Theorem \ref{thm: RIP estimate}]
Parts of the proof follow from the general idea of the proof of Theorem 12.32 in \cite{foucart2013invitation}; however, many key steps differ since random feature matrices do not form orthogonal systems and since the elements of a random feature matrix are not independent.

Denote the $\ell$-th column of $\A^*$ by $\X_\ell = [\overline{\phi(\bx_\ell;\boldsymbol{\omega}_1)},\dots,\overline{\phi(\bx_\ell;\boldsymbol{\omega}_N)}]^T\in\mathbb{C}^N$. Then $\A^*\A = \sum\limits_{\ell=1}^m \X_\ell\X_\ell^*$ and the $s$-th RIP constant $\delta_s\left(\frac{1}{\sqrt{m}}\A\right)$ is bounded by
\begin{align*}
\delta_s\left(\frac{1}{\sqrt{m}}\A\right) &= \vertiii{\frac{1}{m}\sum_{\ell=1}^m \X_\ell\X_\ell^* - \boldsymbol{I}_N} \notag\\
&= \frac{1}{m}\vertiii{\sum_{\ell=1}^m (\X_\ell\X_\ell^* - \mathbb{E}_{\bx}(\X_\ell\X_\ell^*) +\mathbb{E}_{\bx}(\X_\ell\X_\ell^*)- \mathbb{E}_{\bx,\bomega}(\X_\ell\X_\ell^*) + \mathbb{E}_{\bx,\bomega}(\X_\ell\X_\ell^*)-\boldsymbol{I}_N)} \notag \\
&\leq \frac{1}{m} \vertiii{\sum_{\ell=1}^m (\X_\ell\X_\ell^* - \mathbb{E}_{\bx}\X_\ell\X_\ell^*)} + \frac{1}{m} \vertiii{\sum_{\ell=1}^m (\mathbb{E}_{\bx}(\X_\ell\X_\ell^*) - \mathbb{E}_{\bx,\bomega}\X_\ell\X_\ell^*)}+ \vertiii{\boldsymbol{L}},
\end{align*}
where $\boldsymbol{L} \in \mathbb{C}^{N\times N}$ is a matrix with $L_{j,j}=0$ and $L_{j,k}=(2\gamma^2\sigma^2+1)^{-\frac{d}{2}}$ for $j,k\in[N],j\neq k$. 

For any $\boldsymbol{z}\in D_{s,N}$
\begin{equation}
   \left|\langle \boldsymbol{L}\boldsymbol{z},\boldsymbol{z} \rangle\right| = \left|\sum_{\substack{j,k\in \supp(\boldsymbol{z})\\ j\neq k}} \left(\frac{1}{2\gamma^2\sigma^2+1}\right)^{\frac{d}{2}} \, z_j\, \overline{z_k}\right| \leq  \left(\frac{1}{2\gamma^2\sigma^2+1}\right)^{\frac{d}{2}}\,\|\boldsymbol{z}\|_1^2 \leq s\, \left(\frac{1}{2\gamma^2\sigma^2+1}\right)^{\frac{d}{2}},
\end{equation}
and using the complexity assumption yields 
$$\vertiii{\boldsymbol{L}} \leq s(2\gamma^2\sigma^2+1)^{-\frac{d}{2}}\leq \eta_1.$$

Note that by the definition of $\vertiii{\cdot}$,  $\vertiii{\boldsymbol{B}}\leq \|\boldsymbol{B}\|_2$ for any self-adjoint matrix $\boldsymbol{B}$. From the proof of Theorem \ref{Th: Main Theorem (simplified)}, we have $\vertiii{\mathbb{E}_{\bx}(\X_{\ell}\X_{\ell}^*) - \mathbb{E}_{\bx,\bomega}(\X_{\ell}\X_{\ell}^*)}\leq \eta_1$ with probability at least $1-\delta$ with respect to the draw of $\{\bomega_\ell\}_{\ell\in [N]}$ if the complexity assumption holds.

For the remaining arguments, we will use that  $\vertiii{\mathbb{E}_{\bx}(\X_{\ell}\X_{\ell}^*) - \mathbb{E}_{\bx,\bomega}(\X_{\ell}\X_{\ell}^*)}\leq \eta_1$ conditioned on the given draw of $\{\bomega_k\}_{k\in [N]}$. Note that $\vertiii{\cdot}$ is a convex function (since it is a seminorm) and thus symmetrization yields
\begin{equation}
    \mathbb{E}_{\bx} \vertiii{\sum_{\ell=1}^m (\X_\ell\X_\ell^* - \mathbb{E}_{\bx}(\X_\ell\X_\ell^*))} \leq 2 \mathbb{E}_{\bx,\epsilon} \vertiii{\sum_{\ell=1}^m \epsilon_\ell \X_\ell \X_\ell^*}
\end{equation}
where we introduce the Rademacher random variables $\epsilon_\ell$ for $\ell\in [m]$ which are independent of data and weights. Applying Fubini's theorem and Lemma \ref{lm: expectation estimate}, we obtain
\begin{align*}
    E:&= \frac{1}{m} \mathbb{E}_{\bx}\vertiii{\sum_{\ell=1}^m (\X_\ell\X_\ell^* - \mathbb{E}_{\bx}(\X_{\ell}\X_{\ell}^*))} \notag\\
    & \leq \frac{2}{m} \mathbb{E}_{\bx,\epsilon} \vertiii{\sum_{\ell=1}^m \epsilon_\ell \X_\ell\X_\ell^*} \notag\\
    &\leq \frac{2}{m} \Tilde{C}_1 \sqrt{s} \log(s) \sqrt{\log(3m)\log\left(3+\frac{N}{9\log(2m)}\right)}\, \mathbb{E}_{\bx}  \sqrt{\vertiii{\sum_{\ell=1}^m \X_\ell\X_\ell^*}}  \notag \\
    &= 2\Tilde{C}_1 \sqrt{s} \log(s) \sqrt{\frac{\log(3m)}{m}\log\left(3+\frac{N}{9\log(2m)}\right)}\, \mathbb{E}_{\bx} \sqrt{\vertiii{\frac{1}{m}\sum_{\ell=1}^m \X_\ell\X_\ell^* }}  \notag \\
    &\leq 2\Tilde{C}_1 \sqrt{s} \log(s) \sqrt{\frac{\log(3m)}{m}\log\left(3+\frac{N}{9\log(2m)}\right)} \sqrt{\mathbb{E}_{\bx} \vertiii{\frac{1}{m}\sum_{\ell=1}^m \left(\X_\ell\X_\ell^* -\mathbb{E}_{\bx}(\X_{\ell}\X_{\ell}^*)\right) } + 1+2\eta_1}.
\end{align*}
where we used Jensen's inequality and the estimate 
\begin{align*}
    \vertiii{\mathbb{E}_{\bx}(\X_{\ell}\X_{\ell}^*)} &\leq \vertiii{\mathbb{E}_{\bx}(\X_{\ell}\X_{\ell}^*) - \mathbb{E}_{\bx,\bomega}(\X_{\ell}\X_{\ell}^*)} + \vertiii{\mathbb{E}_{\bx,\bomega}(\X_{\ell}\X_{\ell}^*)-\boldsymbol{I}_N} + \vertiii{\boldsymbol{I}_N} \\
    &= \vertiii{\mathbb{E}_{\bx}(\X_{\ell}\X_{\ell}^*) - \mathbb{E}_{\bx,\bomega}(\X_{\ell}\X_{\ell}^*)} + \vertiii{\boldsymbol{L}} + \vertiii{\boldsymbol{I}_N} \\
    &\leq 2\eta_1+1
\end{align*}
in the last inequality. If set $D = 2\Tilde{C}_1 \sqrt{s} \log(s) \sqrt{\frac{\log(3m)}{m} \log\left(3+\frac{N}{9\log(2m)}\right)}$, the above inequality becomes $E\leq D\sqrt{E+1+2\eta}$. Rearranging the terms yields the following inequality (assuming that $\eta_1\leq \frac{1}{2}$)
\begin{equation*}
    E\leq \frac{D^2}{2}+\sqrt{\frac{D^4}{4}+2D^2} \leq D^2+\sqrt{2}D.
\end{equation*}
Next we use Bernstein's inequality for suprema of an empirical process (Lemma \ref{Bernstein Ineq}) to show that $m^{-1}\vertiii{\sum_{\ell=1}^m(\X_{\ell}\X_{\ell}^* - \mathbb{E}_{\bx}(\X_{\ell}\X_{\ell}^*))}$ is small with high probability. By the definition of the seminorm $\vertiii{\cdot}$, we have
\begin{align*}
    \vertiii{\sum_{\ell=1}^m (\X_\ell\X_\ell^* - \mathbb{E}_{\bx}(\X_{\ell}\X_{\ell}^*))} 
    &= \sup_{\boldsymbol{z}\in D_{s,N}} \left|\sum_{\ell=1}^m \langle (\X_\ell\X_\ell^* - \mathbb{E}_{\bx}(\X_{\ell}\X_{\ell}^*))\boldsymbol{z}, \boldsymbol{z} \rangle \right| \\
    &= \sup_{\boldsymbol{z}\in D_{s,N}^*} \left|\sum_{\ell=1}^m \langle (\X_\ell\X_\ell^* - \mathbb{E}_{\bx}(\X_{\ell}\X_{\ell}^*))\boldsymbol{z}, \boldsymbol{z} \rangle \right|,
\end{align*}
where $D_{s,N}^*$ is a countable dense subset of $D_{s,N}$, which exists since $D_{s,N}$  is the finite union of separable sets (unit balls in $\mathbb{C}^s$ as a subspace of $\mathbb{C}^N$ with its norm).

For $\boldsymbol{z}\in D_{s,N}^*$, define the random variable $f_{\boldsymbol{z}}(\X_{\ell}) = \langle (\X_{\ell}\X_{\ell}^* - \mathbb{E}_{\bx}(\X_{\ell}\X_{\ell}^*))\boldsymbol{z}, \boldsymbol{z}\rangle$. Note that $\mathbb{E}_{\bx} f_{\boldsymbol{z}}(\X_{\ell})=0$. To apply Lemma \ref{Bernstein Ineq}, we must first show that $f_{\boldsymbol{z}}(\X_\ell)$ is bounded and compute its variance. The random variables $f_{\boldsymbol{z}}(\X_\ell)$ for $\ell\in[m]$ are uniformly bounded by $2s$ since
\begin{align*}
    |f_{\boldsymbol{z}}(\X_\ell)| &\leq \sum_{j,k\in S} \left|\exp(i\langle \bx_\ell, \boldsymbol{\omega}_k - \boldsymbol{\omega}_j \rangle) - \exp\left(-\frac{\gamma^2}{2}\|\bomega_k-\bomega_j\|_2^2\right)\right|\left|\overline{z}_j z_k\right| \notag \\
    &\leq 2\|\boldsymbol{z}\|_1^2 \leq 2s,
\end{align*}
where $S=\supp(\boldsymbol{z})$. The variance of $f_{\boldsymbol{z}}(\X_\ell)$ for  $\ell\in[m]$ is bounded by
\begin{align*}
    \mathbb{E}_{\bx}[f_{\boldsymbol{z}}(\X_\ell)]^2 &= \mathbb{E}_{\bx}|\langle (\X_\ell\X_\ell^* - \mathbb{E}_{\bx}(\X_\ell\X_\ell^*))\boldsymbol{z}, \boldsymbol{z} \rangle|^2 \notag\\
    &\leq \mathbb{E}_{\bx}\|(\X_{\ell,S}\X_{\ell,S}^*-\mathbb{E}_{\bx}(\X_\ell\X_\ell^*))\boldsymbol{z}\|_2^2  \notag \\
    & = \mathbb{E}_{\bx}[\boldsymbol{z}^*\X_{\ell,S}\X_{\ell,S}^*\X_{\ell,S}\X_{\ell,S}^*\boldsymbol{z}] - \boldsymbol{z}^*\mathbb{E}_{\bx}(\X_\ell\X_\ell^*)\mathbb{E}_{\bx}(\X_\ell\X_\ell^*)\boldsymbol{z} \notag \\
    &\leq s \langle \mathbb{E}_{\bx}(\X_\ell\X_\ell^*)\boldsymbol{z}, \boldsymbol{z} \rangle \notag \\
    &\leq s \vertiii{\mathbb{E}_{\bx}(\X_\ell\X_\ell^*)}\leq 2s,
\end{align*}
using the fact that $\|\X_{\ell, S}\|_2^2 = s$ and the estimate $\vertiii{\mathbb{E}_{\bx}(\X_\ell\X_\ell^*)}\leq 1+2\eta_1\leq 2$.

If we set
\begin{align}
    D &= 2\Tilde{C}_1 \sqrt{s} \log(s) \sqrt{\log\left(3+\frac{N}{9\log(2m)}\right)}\sqrt{\frac{\log(3m)}{m}}\leq \eta_2 \label{eq: restriction on D}
\end{align}
for some $\eta_2 \in (0,\frac{1}{2})$, then
\begin{align}
    &\mathbb{P}_{\bx}\left(\vertiii{\sum_{\ell=1}^m(\X_\ell\X_\ell^*-\mathbb{E}_{\bx}(\X_\ell\X_\ell^*))}\geq m(\eta_2^2+\sqrt{2}\eta_2)+m\eta_1\right) \notag \\
    \leq & \mathbb{P}_{\bx}\left(\vertiii{\sum_{\ell=1}^m(\X_\ell\X_\ell^*-\mathbb{E}_{\bx}(\X_\ell\X_\ell^*))}\geq \mathbb{E}_{\bx}\vertiii{\sum_{\ell=1}^m(\X_\ell\X_\ell^*-\mathbb{E}_{\bx}(\X_\ell\X_\ell^*))}+m\eta_1\right) \notag \\
    \leq & \mathbb{P}_{\bx}\left(\sup_{\boldsymbol{z}\in D_{s,N}^*}\left|\sum_{\ell=1}^m f_{\boldsymbol{z}}(\X_{\ell})\right|\geq \mathbb{E}_{\bx}\sup_{\boldsymbol{z}\in D_{s,N}^*}\left|\sum_{\ell=1}^m f_{\boldsymbol{z}}(\X_{\ell})\right|+m\eta_1\right) \notag \\
    \leq & \exp\left(\frac{-m\eta_1^2/2}{2s + 4s(\eta_2^2+\sqrt{2}\eta_2)+2s\eta/3}\right) \label{eq: probability from Bernstein ineq},
\end{align}
where the last inequality uses Lemma \ref{Bernstein Ineq} with $K=2s$, $\sum_{\ell=1}^m \sigma_{\ell}^2 \leq 2ms$, $\mathbb{E}_{\bx}(Z)\leq m(\eta_2^2+\sqrt{2}\eta_2)$ and $t=m\eta_1$. Thus, 
\begin{align}
    \mathbb{P}_{\bx}\left(\frac{1}{m}\vertiii{\sum_{\ell=1}^m (\X_{\ell}\X_{\ell}^* - \mathbb{E}_{\bx}(\X_{\ell}\X_{\ell}^*))}\leq \eta_2^2+\sqrt{2}\eta_2+\eta_1\right)> 1-\delta
\end{align}
if the following conditions hold (assuming $\eta_2^2+\sqrt{2}\eta_2\leq 1$ and $\eta_1\leq \frac{1}{2}$)
\begin{align*}
    m&\geq C_1 \eta_1^{-2} s \log(\delta^{-1}) \\
    \frac{m}{\log(3m)} &\geq C_2 \eta_2^{-2} s \log^2(s) \log\left(3+\frac{N}{9\log(2m)}\right). 
\end{align*}
The universal constants satisfy $C_1\leq 13$ and $C_2 = 4\Tilde{C}_1^2$ where $\Tilde{C}_1$ is the value from Lemma \ref{lm: expectation estimate}.

Altogether, we have 
\begin{align*}
\delta_s\left(\frac{1}{\sqrt{m}}\A\right) &= \frac{1}{m}\vertiii{\sum_{\ell=1}^m (\X_\ell\X_\ell^* - \mathbb{E}_{\bx}(\X_\ell\X_\ell^*) +\mathbb{E}_{\bx}(\X_\ell\X_\ell^*)- \mathbb{E}_{\bx,\bomega}(\X_\ell\X_\ell^*) + \mathbb{E}_{\bx,\bomega}(\X_\ell\X_\ell^*)-\boldsymbol{I}_N)} \notag \\
&\leq \frac{1}{m} \vertiii{\sum_{\ell=1}^m (\X_\ell\X_\ell^* - \mathbb{E}_{\bx}\X_\ell\X_\ell^*)} + \frac{1}{m} \vertiii{\sum_{\ell=1}^m (\mathbb{E}_{\bx}(\X_\ell\X_\ell^*) - \mathbb{E}_{\bx,\bomega}\X_\ell\X_\ell^*)}+ \vertiii{\boldsymbol{L}} \notag \\
&\leq \eta_2^2+\sqrt{2}\eta_2 + 3\eta_1,
\end{align*}
with probability at least $1-2\delta$ if the conditions in the theorem are satisfied, which completes the proof.
\end{proof}

\end{document}